\tikzset{
	basic/.style  = {draw, text width=2.5cm,  rectangle},
	root/.style   = {basic, rounded corners=2pt, thin, align=center},
	level 2/.style = {basic, rounded corners=6pt, thin,align=center, text width=7.5em},
	level 3/.style = {basic, thin, align=center, text width=7.5em}
}
\tikzstyle{every node}=[draw=black]
\tikzstyle{selected}=[draw=red,fill=red!30]
\tikzstyle{optional}=[dashed,fill=gray!50]
\newtheorem{theorem}{Theorem}
\newtheorem{lemma}{Lemma}
\newtheorem{prop}{Proposition}
\newtheorem{corollary}{Corollary}
\newtheorem{definition}{Definition}
\newtheorem{remark}{Remark}
\newproof{proof}{Proof}
\newcommand{\cA}{\mathcal{A}}
\newcommand{\ocAI}{\overline{\mathcal{A}_{\rI}}}
\newcommand{\ocKI}{\overline{\mathcal{K}_{\rI}}}
\newcommand{\cB}{\mathcal{B}}
\newcommand{\cE}{\mathscr{E}}
\newcommand{\sG}{\mathscr{G}}
\newcommand{\sY}{\mathscr{Y}}
\newcommand{\sU}{\mathscr{U}}
\newcommand{\sV}{\mathscr{V}}
\newcommand{\cF}{\mathcal{F}}
\newcommand{\cG}{\mathcal{G}}
\newcommand{\cH}{\mathcal{H}}
\newcommand{\cI}{\mathcal{I}}
\newcommand{\cK}{\mathcal{K}}
\newcommand{\cQ}{\mathcal{Q}}
\newcommand{\cR}{\mathcal{R}}
\newcommand{\cS}{\mathcal{S}}
\newcommand{\cT}{\mathcal{T}}
\newcommand{\cU}{\mathcal{U}}
\newcommand{\cV}{\mathcal{V}}
\newcommand{\cW}{\mathcal{W}}
\newcommand{\cX}{\mathcal{X}}
\newcommand{\cY}{\mathcal{Y}}
\newcommand{\cZ}{\mathcal{Z}}
\newcommand{\bbA}{\mathbb{A}}
\newcommand{\bbC}{\mathbb{C}}
\newcommand{\bbH}{\mathbb{H}}
\newcommand{\bbR}{\mathbb{R}}
\newcommand{\frakR}{\mathfrak{R}}
\newcommand{\fraka}{\mathfrak{a}}
\newcommand{\frakb}{\mathfrak{b}}
\newcommand{\otau}{\overline{\tau}}
\newcommand{\orho}{\overline{\rho}}
\newcommand{\oy}{\overline{y}}
\newcommand{\rI}{\mathscr{I}}
\newcommand{\rP}{\mathscr{P}}
\newcommand{\rU}{\mathscr{U}}
\DeclareMathOperator{\id}{id}
\NewDocumentCommand{\norm}{mG{2}}{\|#1\|_{#2}}
\DeclareMathOperator{\im}{im}
\DeclareMathOperator{\codim}{codim}
\DeclareMathOperator{\Span}{Span}
\DeclareMathOperator{\rank}{rank}
\DeclareMathOperator{\Gr}{Gr}
\DeclareMathOperator{\F}{F}
\DeclareMathOperator{\hsp}{hsp}
\DeclareMathOperator{\cl}{cl}
\DeclarePairedDelimiter\floor{\lfloor}{\rfloor}
\newcommand{\argmin}{\mathop{\rm argmin}}
\newcommand{\argmax}{\mathop{\rm argmax}}
\newcommand{\hv}{\hat{v}}
\newcommand{\htau}{\hat{\tau}}
\newcommand{\hT}{\hat{T}}
\newcommand{\hx}{\hat{x}}
\newcommand{\hS}{\hat{S}}
\newcommand{\ra}[1]{\renewcommand{\arraystretch}{#1}}
\newcommand\inner[2]{\langle #1, #2 \rangle}
\begin{document}
	
	\begin{frontmatter}
		
		\title{Homomorphic Sensing of Subspace Arrangements}
		
		\author{Liangzu Peng}
		\ead{penglz@shanghaitech.edu.cn}
		
		\author{Manolis C. Tsakiris}
		\ead{mtsakiris@shanghaitech.edu.cn}
		\address{SIST, ShanghaiTech University, No. 393 Huaxia Middle Road, Pudong Area, Shanghai, China}

		\begin{abstract}
			Homomorphic sensing is a recent algebraic-geometric framework that studies the unique recovery of points in a linear subspace from their images under a given collection of linear maps. It has been successful in interpreting such a recovery in the case of permutations composed by coordinate projections, an important instance in applications known as unlabeled sensing, which models data that are out of order and have missing values. In this paper, we provide tighter and simpler conditions that guarantee the unique recovery for the single-subspace case, extend the result to the case of a subspace arrangement, and show that the unique recovery in a single subspace is locally stable under noise. We specialize our results to several examples of homomorphic sensing such as real phase retrieval and unlabeled sensing. In so doing, in a unified way, we obtain conditions that guarantee the unique recovery for those examples, typically known via diverse techniques in the literature, as well as novel conditions for sparse and unsigned versions of unlabeled sensing. Similarly, our noise result also implies that the unique recovery in unlabeled sensing is locally stable.
		\end{abstract}
		
		\begin{keyword}
			homomorphic sensing, unlabeled sensing, linear regression without correspondences, real phase retrieval, mixed linear regression, algebraic geometry.
		\end{keyword}
		
	\end{frontmatter}
	
	
	\section{Introduction}
	\subsection{The homomorphic sensing property}\label{subsection:HS}
	The homomorphic sensing problem, introduced in \cite{Tsakiris-arXiv18b, Tsakiris-arXiv18b-v6} and also in the expository paper \cite{Tsakiris-ICML2019}, is posed as follows. With $\bbH$ being $\bbR$ or $\bbC$ let $\cV\subset \bbH^n$ be a linear subspace of dimension $d$ and $\cT$ a finite set of linear maps $\bbH^n\to \bbH^m$. With $v^*\in\cV$ and $\tau^*\in\cT$ we observe $y:=\tau^*(v^*)$. Given $\cV,\cT$ and $y$, then, can we \textit{uniquely} determine $v^*$ without knowing $\tau^*$? In other words, with $y$ fixed we want to know when the relations
	\begin{align*}
		y=\tau(v), \ \ \ \ \ \tau\in\cT, \  \ \ \ \ v\in \cV
	\end{align*}
	necessarily imply that $v=v^*$. This motivates the following definition.
	
	\begin{definition}[homomorphic sensing property  \cite{Tsakiris-arXiv18b, Tsakiris-arXiv18b-v6,Tsakiris-ICML2019}]\label{definition:HSP}
		Let $\cX \subset \bbH^n$ be a set of vectors and $\cT$ a finite set of linear maps $\bbH^n\to \bbH^m$. We say that $\cX$ and $\cT$ satisfy the ``homomorphic sensing property'', denoted by $\hsp(\cX,\cT)$, if the following holds:
		\begin{align*}
			\hsp(\cX,\cT):\ \ \forall  v_1,v_2\in\cX, \forall \tau_1,\tau_2\in\cT, \ \ \tau_1(v_1)=\tau_2(v_2) \Rightarrow v_1=v_2.
		\end{align*}
		If $\tau_1(v_1)=\tau_2(v_2)$ only implies $v_1=\pm v_2$, then we will use the notation $\hsp_\pm(\cX,\cT)$.
	\end{definition}
	
In this paper we study conditions under which the homomorphic sensing property is true for sets $\cX$ that are linear subspaces, subspace arrangements or collections of sparse vectors, while we will also consider the stability of the property to noise. We will develop our theory in great generality with the linear maps in $\cT$ arbitrary, but then we will specialize to specific maps of relevance in applications, as discussed in the next section. An early reference to the main results of this paper is Table \ref{table:main-results}.

\begin{table}[h!] 
\centering
	\ra{1.3}
	\begin{tabular}{ll}\toprule
		Theorem \ref{theorem:HS}& $\hsp$ for a single linear subspace $\cV$ in $\bbH^n$ and arbitrary $\cT$ \\
	Theorem \ref{theorem:HS-SA} & $\hsp$ for a subspace arrangement $\cA=(\cV_1,\dots,\cV_\ell)$ in $\bbH^n$ and arbitrary $\cT$\\		
	Theorem \ref{theorem:HS-deterministic-noise} & $\hsp$ for a single linear subspace $\cV$ in $\bbH^n$ with noisy data and arbitrary $\cT$\\
		Theorem \ref{corollary:HS-R^n} & $\hsp$ for $\bbR^n$ and $\cT$ compositions of permutations, selections and sign changes\\
		Theorem \ref{corollary:HS-k-sparse}& $\hsp$ for $k$-sparse vectors in $\bbR^n$ and $\cT$ compositions of permutations, selections and sign changes\\ 		
		\bottomrule
	\end{tabular}
	\caption{The homomorphic sensing properties studied in the paper.} \label{table:main-results}
\end{table}

\subsection{Examples of homomorphic sensing and related work}\label{subsection:Examples}

We build some insight and illustrate the significance of Definition \ref{definition:HSP} by way of some examples.

We begin with a very simple linear algebra example, where $\mathcal{T}=\{\tau_A\}$ consists of the linear transformation $\tau_A: \bbH^n \rightarrow \bbH^m$ induced by multiplication with an $m \times n$ matrix $A$. Then $\hsp(\bbH^n, \{\tau_A\})$ holds if and only if $\tau_A$ is injective, that is, if and only if $A$ has full column rank. This is true for a generic $A$ as soon as $m \ge n$. On the other hand, $\hsp(\cX, \{\tau_A\})$ holds if and only if no two vectors in $\cX$ differ by a nonzero nullvector of $A$. 

Our next example is compressed sensing. Denoting by $\ocKI$ the set of $k$-sparse vectors of $\bbH^n$ (the choice of the symbol $\ocKI$ will be made clear in \S \ref{subsection:HS-applications}), unique recovery of $k$-sparse vectors under an $m \times n$ sensing matrix $A$ is equivalent to $\hsp\big(\ocKI, \{\tau_A\}\big)$. This is true if and only if any $\min\{2k,n\}$ columns of $A$ are linearly independent, a property that is satisfied by a generic $A$ as soon as $m \ge \min\{2k,n\}$. 
		
A more interesting example is \textit{phase retrieval}, which dates back to the 1910's, when the research on \textit{X-ray crystallography} was launched; see \cite{Grohs-SIAM-Review2020} for a vivid account. Depending on whether one works over $\bbR$ or $\bbC$ one has \emph{real} or \emph{complex phase retrieval}. In the real phase retrieval problem, we are given an $m$-dimensional vector $y$ whose entries are the magnitudes of the linear measurements $Ax^*$, where $x^*$ is some unknown $n$-dimensional vector to be recovered. Equivalently, with $\cB_m$ the set of $m\times m$ sign matrices, i.e., diagonal matrices with $\pm 1$ on the diagonal, we are given $y=B^*Ax^*$ for some unknown sign matrix $B^*\in\cB_m$, and the goal is to recover $x^*$ from $y$. Since uniquely recovering a nonzero $x^*$ is impossible, we consider unique recovery of $x^*$ up to sign. In other words, with $\cB_{m}A:=\{BA:B\in\cB_{m} \}\subset\bbR^{m\times n}$ we consider  $\hsp_{\pm}(\bbR^n, \cB_m A)$, where $\cB_m A$ is identified with our finite set $\cT$ of linear maps $\bbR^n \rightarrow \bbR^m$. In 2006, it was proved by \cite{Balan-ACHA2006} in a frame-theoretical language that $m\geq 2n-1$ suffices for a generic $A\in\bbR^{m\times n}$ to enjoy $\hsp_\pm(\bbR^n,\cB_m A)$, and this is necessary for any $A\in\bbR^{m\times n}$. If $x^* \in \ocKI$ is $k$-sparse, a situation considered in \textit{sparse real phase retrieval} \cite{Wang-ACHA2014}, then \cite{Wang-ACHA2014} and \cite{Akccakaya-arXiv2013v2} have independently shown that, for $A\in\bbR^{m\times n}$ generic, the property $\hsp_\pm(\ocKI, \cB_m A)$ is equivalent to $m\geq \min\{2n-1,2k \}$. These results also hold for the problem of \textit{symmetric mixture of two linear regressions} \cite{Balakrishnan-AoS2017}, since it bears the same formulation as real phase retrieval; see \cite{Chen-MP2019}, \cite{Klusowski-TIT2019} for discussions that connect the two problems. \emph{Complex phase retrieval} can also be formulated in terms of the homomorphic sensing property. With reference to Definition \ref{definition:HSP}, in complex phrase retrieval $n=\ell^2$ and the set $\cX$ of interest is the set of $\ell \times \ell$ rank-$1$ Hermitian matrices, while $\cT$ consists of a single linear transformation $\tau_A: \bbC^n \rightarrow \bbC^m$ that takes $X \in \bbC^{\ell \times \ell}$ to $\tau_A(X) = (\langle A_1, X\rangle, \dots, \langle A_m, X\rangle)$, with each $A_i$ an $\ell \times \ell$ measurement matrix of rank $1$. \cite{Bandeira-ACHA2014} famously conjectured that i) unique recovery is impossible under less than $4\ell-4$ measurements, while ii) at least $m=4\ell-4$ measurements with $A_i$ generic suffice for $\hsp(\cX,\{\tau\})$ to be true. The first part of the conjecture was disproved in its generality in \cite{Vinzant-SampTA2015}, where a counterexample was given for the case $m=4$, while the second part was proved in \cite{Conca-ACHA2015}. 

More general than phase retrieval is matrix recovery, where one aims for unique identifiability of a matrix of bounded rank from a set of linear measurements. This problem amounts to checking $\hsp(\mathcal{M}_\bbH(r, p \times q),\{\tau_A\})$, where $\mathcal{M}_\bbH(r, p \times q) \subset \bbH^{p \times q}$ is the algebraic variety of $p \times q$ matrices of rank at most $r$ over $\bbH$ and $\tau_A = (\langle A_1, \cdot \rangle, \dots, \langle A_m, \cdot \rangle)$. For a discussion of this line of work and generalizations to arbitrary algebraic varieties we refer the reader to \cite{Cai-2018}. An essential difference though of this family of problems with the homomorphic sensing framework is that they are only a very special case of it, since there is a single linear transformation $\tau_A$ involved. Indeed, having multiple transformations and not knowing which transformation the available data came from adds an entire level of complexity to the problem. 
	
Another line of research, which can be cast within the homomorphic sensing framework and in fact inspired it, has its origin in statistics in the context of \textit{record linkage} \cite{Fellegi-1969} and the \textit{broken sample problem} \cite{Degroot-AoS1980}; see \cite{Slawski-JoS19} for detailed discussions. Recently, interest was revived by \cite{Unnikrishnan-Allerton2015,Unnikrishnan-TIT18}, where the problem was studied abstractly under the name \textit{unlabeled sensing}. Simply stated, unlabeled sensing is the problem of solving a linear system of equations for which the right-hand-side vector has undergone an unknown permutation and some of its entries have been discarded. This was also recently and independently considered by \cite{Han-ACHA2018}. The special case where no entries are discarded is known in subsequent work as \textit{linear regression without correspondences} \cite{Hsu-NIPS17,Pananjady-TIT18,Haghighatshoar-TSP18,Slawski-arXiv2019b,Slawski-UAI2019,Dokmanic-SPL2019,Zhang-arXiv2019v2,Tsakiris-TIT2020,Peng-SPL2020,Wang-arXiv2020v2}. With $A$ an $m \times n$ matrix as above and $\cS_{r,m}$ the set of all $r \times m$ rank-$r$ selection matrices, i.e., matrices whose rows are formed by $r$ distinct standard basis vectors of $\bbR^m$, unique recovery in unlabeled sensing is equivalent to $\hsp(\bbR^n,\cS_{r,m}A)$. Via different combinatorial techniques \cite{Unnikrishnan-Allerton2015,Unnikrishnan-TIT18} and \cite{Han-ACHA2018} proved that $r\geq 2n$ suffices to guarantee $\hsp(\bbR^n,\cS_{r,m}A)$ for $A\in \bbR^{m\times n}$ generic. For the converse, \cite{Han-ACHA2018} proved that $r\geq 2n-1$ is necessary for $\hsp(\bbR^n,\cS_{r,m}A)$ and \cite{Unnikrishnan-Allerton2015,Unnikrishnan-TIT18} proved that, if $m$ is odd with $m=r$ and $n\geq 2$, then $r\geq 2n$ is necessary. 
	
A combination of real phase retrieval and unlabeled sensing, which we refer to as \textit{unsigned unlabeled sensing}, was explored in \cite{Lv-ACM2018}, where $\cT$ is $\cS_{r,m}\cB_m:=\{SB:S\in\cS_{r,m},B\in\cB_m \}$ and the interest is in $\hsp_\pm(\bbR^n, \cS_{r,m}\cB_m A)$. By extending the approach of \cite{Han-ACHA2018}, it was established in \cite{Lv-ACM2018} that $r\geq 2n$ is sufficient for $\hsp_\pm(\bbR^n,\cS_{r,m}\cB_m A)$ for $A\in\bbR^{m\times n}$ generic and this is necessary if $n\geq 2$.

As our last example, we briefly mention the very recent work of \emph{unlabeled principal component analysis} \cite{yao2021unlabeled}. This is a generalization of linear regression without correspondences to the algebraic variety $\cX=\mathcal{M}_\bbH(r, p \times q) \subset \bbH^{p \times q}$ of $p \times q$ matrices or rank at most $r$ over $\bbH$, with $\cT$ being permutations acting on the matrix entries. The property of interest here is $\hsp(\mathcal{M}_\bbH(r, p \times q),\cT)$ up to a permutation of the rows or columns of the matrix. This is established in \cite{yao2021unlabeled} for a generic $X \in\mathcal{M}_\bbH(r, p \times q)$.

\subsection{Existing homomorphic sensing theory}\label{subsection:existing-HS}

Prior work on the abstract theory of homomorphic sensing has only considered Definition \ref{definition:HSP} for $\cX=\cV$ a linear subspace of $\bbH^n$ of dimension $d$. Even though this formulation is linear algebraic, its nature is inherently algebraic-geometric, because whenever $\tau_1(v_1) = \tau_2(v_2)$, we have that $v_1,v_2$ satisfy the quadratic relation $\tau_1(v_1) \wedge \tau_2(v_2)=0$, with $\wedge$ being the exterior product. This was the main insight of \cite{Tsakiris-arXiv18b, Tsakiris-arXiv18b-v6, Tsakiris-ICML2019} leading to the following results. With linear maps $\tau_1,\tau_2: \bbH^n \rightarrow \bbH^m$  let $\otau_1,\otau_2:\bbC^n \rightarrow \bbC^m$ be their complexifications and let $T_1,T_2$ be their matrix representations with respect to the standard basis. Let $\rho$ be a linear projection onto the image $\im(\tau_2)$ of $\tau_2$ with matrix representation $P$ and complexification $\orho$. With $w$ a vector of variables, the $2\times 2$ minors of the $m \times 2$ matrix  $[PT_1w\, \, \,  T_2 w]$ are polynomials in entries of $w$, so their vanishing locus in $\bbC^n$ is a complex algebraic variety, say $\cY_{\rho\tau_1,\tau_2}$. Removing from $\cY_{\rho\tau_1,\tau_2}$ the union of linear subspaces $\cZ_{\rho\tau_1,\tau_2}:=\ker(\orho\otau_1-\otau_2)\cup\ker(\orho\otau_1)\cup\ker(\otau_2)$ gives 
		\begin{align*}
		\cU_{\rho\tau_1,\tau_2}=\cY_{\rho\tau_1,\tau_2}\backslash \cZ_{\rho\tau_1,\tau_2}
	\end{align*} which is an open set in the Zariski topology of $\cY_{\rho\tau_1,\tau_2}$, also called quasi-variety. The following was proved in \cite{Tsakiris-arXiv18b,Tsakiris-arXiv18b-v6}: If for any $\tau_1,\tau_2\in\cT$ it holds that $\rank(\tau_1):=\rank(T_1)\geq 2d$ and $\rank(\tau_2)\geq 2d$, and if there exists a linear projection $\rho$ onto $\im(\tau_2)$ such that $d \le \codim(\cU_{\rho\tau_1,\tau_2})$, then a generic subspace $\cV\subset \bbH^n$ of dimension $d$ satisfies $\hsp(\cV,\cT)$; here generic is meant in the sense that $\hsp(\cV,\cT)$ is true for every $\cV$ in a dense open set of the Grassmannian $\Gr_{\bbH}(d,n)$. In the same work, the dimension of the quasi-variety $\cU_{\rho\tau_1,\tau_2}$ was calculated for the case of unlabeled sensing, leading to the same sufficient conditions as in \cite{Unnikrishnan-TIT18} and \cite{Han-ACHA2018}, mentioned above. Moreover, an \emph{almost everywhere} $\hsp(\cV,\cT)$ type of result was proved: for a generic $\cV$, as long as no two maps in $\cT$ are multiples of each other and $\rank(\tau) \ge d+1$ for every $\tau \in \cT$, then $\hsp(\cU,\cT)$ holds true, where $\cU$ is a Zariski dense open set of $\cV$. This is a much easier result to obtain than the one asserting the homomorphic sensing property on the entire space $\cV$.\footnote{For \emph{almost everywhere} type of results in matrix recovery and phase retrieval see \cite{Rong-ACHA2019} and \cite{Huang-ACHA2021}, respectively.}
	
\section{Main Results}\label{section:main-results}

In this section we discuss our main results, while proofs are in \S \ref{section:proofs} and \S \ref{section:Appendix}. In \S \ref{subsection:HSAS} we give new improved results for $\cX=\cV$ a linear subspace of $\bbH^n$. In \S \ref{subsection:HSSA} we generalize the picture to $\cX= \cA=(\cV_1,\dots,\cV_\ell)$ a subspace arrangement of $\bbH^n$. In \S \ref{subsection:noisy-HS} we treat the case of a linear subspace in the presence of noise. In \S \ref{subsection:HS-applications} we specialize our results to unlabeled sensing and phase retrieval and we obtain via a unified framework either results that are already known via diverse techniques in the literature or entirely new results. Table \ref{table:main-results} gives a summary.

\subsection{Homomorphic sensing of a linear subspace} 	\label{subsection:HSAS}

Recall from \S \ref{subsection:existing-HS} that $d \le \codim(\cU_{\rho\tau_1,\tau_2})$ and $\rank(\tau) \ge 2d$ for every $\tau_1,\tau_2,\tau \in \cT$, is a known sufficient condition for $\hsp(\cV,\cT)$, where $\cV$ is a generic linear subspace of dimension $d$. A suboptimal feature of this result is the presence of the projection $\rho$. This is an artifact of the proof technique in \cite{Tsakiris-arXiv18b, Tsakiris-arXiv18b-v6}, which involves a projection onto $\im(\tau_2)$. One of our main contributions in this paper is to dispense with $\rho$ by following an entirely different technique that uses filtrations. Consider the complex algebraic variety $\cY_{\tau_1,\tau_2}$ defined by the vanishing of the $2\times 2$ minors of $[T_1w\, \, \,  T_2 w]$, the union of linear subspaces $\cZ_{\tau_1,\tau_2}:=\ker(\otau_1-\otau_2)\cup\ker(\otau_1)\cup\ker(\otau_2)$ and the quasi-variety
	\begin{align*}
		\cU_{\tau_1,\tau_2}=\cY_{\tau_1,\tau_2}\backslash \cZ_{\tau_1,\tau_2}.
	\end{align*}
	
	We have the following improvement over \cite{Tsakiris-arXiv18b, Tsakiris-arXiv18b-v6,Tsakiris-ICML2019}: 
	
	\begin{theorem}[homomorphic sensing of a single subspace] \label{theorem:HS}
		Suppose $\rank(\tau)\geq 2d$ for every $\tau\in\cT$. Then $\hsp(\cV,\cT)$ holds true for a generic subspace $\cV$ of $\bbH^n$ of dimension $d$ whenever 
		\begin{align}\label{eq:HS-condition_U}
			d \leq \codim(\cU_{\tau_1,\tau_2}),\ \ \ \forall \tau_1,\tau_2\in\cT.
		\end{align}
	\end{theorem}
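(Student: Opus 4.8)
The plan is to fix an ordered pair $\tau_1,\tau_2\in\cT$ and to show that the obstruction locus
\[
\cO_{\tau_1,\tau_2}:=\bigl\{\cV\in\Gr_\bbC(d,n) : \exists\, v_1\neq v_2\in\cV \text{ with } \otau_1(v_1)=\otau_2(v_2)\bigr\}
\]
is contained in a proper Zariski-closed subset of the complex Grassmannian. Because $\cT$ is finite, the union over all ordered pairs is again contained in a proper closed set, whose complement is dense open; and since the defining data are real when $\bbH=\bbR$ while $\Gr_\bbR(d,n)$ is Zariski dense in $\Gr_\bbC(d,n)$, this complex non-existence statement produces, a fortiori, a dense open set of real subspaces $\cV$ with $\hsp(\cV,\cT)$. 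Thus everything reduces to a single pair; write $T_1,T_2$ for its matrices.

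Next, dispose of the degenerate parts. If $\tau_1=\tau_2$, then $\otau_1(v_1)=\otau_2(v_2)$ forces $v_1-v_2\in\cV\cap\ker\otau_1$; since $\rank\tau_1\geq 2d\geq d$ a generic $\cV$ meets $\ker\otau_1$ trivially, so $v_1=v_2$. If $\tau_1\neq\tau_2$ and the offending pair has $v_1,v_2$ linearly dependent, then, after discarding the possibilities that $v_1$ or $v_2$ is a nonzero vector of $\ker\otau_1$ or $\ker\otau_2$, one has $v_2=\lambda v_1$ with $\lambda\notin\{0,1\}$; now $T_1v_1=\lambda T_2v_1$ exhibits $v_1$ as a point of $\cY_{\tau_1,\tau_2}\setminus\cZ_{\tau_1,\tau_2}=\cU_{\tau_1,\tau_2}$. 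As $\cU_{\tau_1,\tau_2}$ is a cone of dimension at most $n-d$ not containing the origin, and $\ker\otau_i$ has dimension at most $n-2d$, a routine incidence-variety argument shows $\{\cV:\cV\cap\cU_{\tau_1,\tau_2}\neq\emptyset\}$ and $\{\cV:\cV\cap\ker\otau_i\neq 0\}$ are proper closed subsets of $\Gr_\bbC(d,n)$. This is the direct use of $d\leq\codim\cU_{\tau_1,\tau_2}$, and it handles the linearly dependent case.

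The substance is the case of offending pairs $v_1,v_2$ linearly independent (so $d\geq 2$). Let $\cI\subseteq\Gr_\bbC(d,n)\times\bbC^n\times\bbC^n$ be the Zariski closure of the set of triples $(\cV,v_1,v_2)$ with $v_1,v_2\in\cV$ linearly independent and $T_1v_1=T_2v_2$; the goal is to show the first projection $\cI\to\Gr_\bbC(d,n)$ is not dominant. Projecting $\cI$ instead to the linear space $L:=\ker[\,T_1\mid -T_2\,]\subseteq\bbC^{2n}$, of dimension $2n-\dim(\im\tau_1+\im\tau_2)$, and noting that the fiber over a linearly independent pair is a copy of $\Gr(d-2,n-2)$, of dimension $(d-2)(n-d)$, one gets $\dim\cI\leq 2n-\dim(\im\tau_1+\im\tau_2)+(d-2)(n-d)$. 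Since $\dim(\im\tau_1+\im\tau_2)\geq\max(\rank\tau_1,\rank\tau_2)\geq 2d$, this is at most $d(n-d)=\dim\Gr_\bbC(d,n)$, and strictly less as soon as $\dim(\im\tau_1+\im\tau_2)>2d$, which settles that subcase.

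The remaining boundary subcase, $\im\tau_1=\im\tau_2$ with $\rank\tau_1=\rank\tau_2=2d$, is the main obstacle: there the crude count is tight and one must genuinely prove that a generic $\cV$ satisfies $T_1\cV\cap T_2\cV=\{0\}$. Here I would use the filtration technique, building $\cV$ along a flag $0=\cV_0\subset\cV_1\subset\dots\subset\cV_d=\cV$ with $\cV_i=\cV_{i-1}+\Span(w_i)$ and each $w_i$ chosen generically given $\cV_{i-1}$, and proving inductively that $\hsp(\cV_{i-1},\cT)$ together with a generic $w_i$ forces $\hsp(\cV_i,\cT)$. Writing a putative offending pair in $\cV_i$ as $v_j=v_j'+a_jw_i$ with $v_j'\in\cV_{i-1}$, the case $a_1=a_2=0$ is the inductive hypothesis, and otherwise the relation $T_1v_1=T_2v_2$ says exactly that $T_1w_i$ and $T_2w_i$ become linearly dependent modulo $T_1\cV_{i-1}+T_2\cV_{i-1}$; so the bad set for $w_i$ is the affine cone defined by the $2\times 2$ minors attached to the two projected maps, which a generic $w_i$ escapes precisely when those projected maps are neither both of rank at most $1$ (impossible here, given $\rank\tau_j\geq 2d$) nor proportional. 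The delicate point — and, I expect, exactly what the paper's filtration bookkeeping is built to control, with $\rank\tau\geq 2d$ and $d\leq\codim\cU_{\tau_1,\tau_2}$ entering essentially — is to ensure this non-degeneracy of the projected maps persists at every stage of the flag: a failure at stage $i$ would exhibit a member $c_2\tau_1-c_1\tau_2$ of the pencil spanned by $\tau_1,\tau_2$ whose image has dimension at most $2(i-1)$, inflating $\dim\cU_{\tau_1,\tau_2}$ (or $\dim\ker(\otau_1-\otau_2)$), and reconciling this with the codimension hypothesis uniformly in $i$ — the late stages being the most delicate — is the crux.
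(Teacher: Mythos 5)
Your reduction to a single pair $\tau_1,\tau_2$, your treatment of the linearly dependent offending pairs via a direct intersection with $\cU_{\tau_1,\tau_2}$, and your incidence-variety count over $L=\ker[\,T_1\mid -T_2\,]$ are all sound; the last of these cleanly dispatches the case $\dim(\im\tau_1+\im\tau_2)>2d$ without even invoking the codimension hypothesis. But this leaves the boundary case $\im\tau_1=\im\tau_2$, $\rank\tau_1=\rank\tau_2=2d$ as the entire content of the theorem, and there you have correctly identified the obstruction but not overcome it. Your proposed remedy --- build $\cV$ one vector at a time along a flag $\cV_0\subset\cV_1\subset\cdots\subset\cV_d$, choosing $w_i$ generic at each step --- has a genuine hole. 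When extending from $\cV_{i-1}$ to $\cV_i$ you need the projected maps $\pi T_1,\pi T_2$ (quotient by $T_1\cV_{i-1}+T_2\cV_{i-1}$, of dimension $2(i-1)$) to be nondegenerate; a failure at stage $i$ occurs when some $T_1-\mu T_2$ has rank $\le 2(i-1)$. For $\mu\ne 0,1$ this does force $\dim\cU_{\tau_1,\tau_2}\ge n-2(i-1)$, but that only contradicts $\dim\cU_{\tau_1,\tau_2}\le n-d$ in the \emph{early} stages $i\le d/2$; for $i>d/2$ no contradiction ensues. Worse, for $\mu=1$ the degeneracy means $\dim\cE_{(\tau_1,\tau_2),1}\ge n-2(i-1)$, and the codimension hypothesis on $\cU_{\tau_1,\tau_2}$ says \emph{nothing} about $\cE_{(\tau_1,\tau_2),1}$, which is explicitly excised from $\cU_{\tau_1,\tau_2}$. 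Indeed $\dim\cE_{(\tau_1,\tau_2),1}>n-d$ is perfectly consistent with the hypotheses, and in that regime one cannot even hope for $\dim(\tau_1(\cV)+\tau_2(\cV))=2d$, since a generic $\cV$ of dimension $d$ must meet $\cE_{(\tau_1,\tau_2),1}$ nontrivially and on that intersection $\tau_1=\tau_2$. So the one-vector-at-a-time flag breaks down outright there.

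This is precisely why the paper's proof does not filter $\cV$ but the ambient space, and why it splits on $\dim\cE_{(\tau_1,\tau_2),1}$ rather than on your quantities. When $\dim\cE_{(\tau_1,\tau_2),1}\le n-d$ (Proposition~\ref{prop:E<=n-d}), it constructs the decreasing chains $\cR_j,\cF_j$ and the spaces $\cG_j$ purely from $\tau_1,\tau_2$ via alternating inverse images and intersections, lets them stabilize at $\cR_\alpha=\cF_\alpha$ where $\tau_1$ and $\tau_2$ have a common image, and then builds $\cV^*$ from the \emph{inside out}: an initialization at the bottom of the chain (Lemmas~\ref{lemma:W_alpha}--\ref{lemma:Rj+1->R_j_cap_F_j}, the first of which uses the codimension hypothesis through Lemma~\ref{lemma:lemma5ofHS} on the induced automorphism $\tau_\cH$), followed by extension steps (Lemmas~\ref{lemma:enlarge_RcapF_Z->R_W} and~\ref{lemma:enlarge_R_W->RcapF_Z}) that climb the chain while controlling exactly the late-stage degeneracy you flag, because the summand added at each step is chosen to avoid $\tau_2^{-1}$ of everything already produced. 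When $\dim\cE_{(\tau_1,\tau_2),1}>n-d$ (Proposition~\ref{prop:E>n-d}) the paper changes target: it constructs a $d_0$-dimensional $\cV_0\subset\cV$ with $\dim(\tau_1(\cV_0)+\tau_2(\cV))=d_0+d$ and uses the flag variety $\F_{\bbH}(d_0,d,n)$ with Lemma~\ref{lem:flag} to get an open condition on $\cV$ alone. Neither piece of machinery appears in your sketch. The part of the theorem you have proved (the easy off-boundary case) is precisely the part that does not need condition~\eqref{eq:HS-condition_U}; the part that needs it is the part you have left open.
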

	
	Note that, by definition, $\cU_{\tau_1,\tau_2}$ is a subset of $\cU_{\rho\tau_1,\tau_2}$, so condition \eqref{eq:HS-condition_U} is tighter than that of \cite{Tsakiris-arXiv18b,Tsakiris-arXiv18b-v6,Tsakiris-ICML2019}. Indeed, condition \eqref{eq:HS-condition_U} is the tightest possible in the following sense.

	\begin{prop}\label{prop:HS}
		Suppose $\bbH=\bbC$ and that \eqref{eq:HS-condition_U} is not true. Then $\hsp(\cV,\cT)$ is violated for a generic subspace $\cV\subset\bbH^n$ of dimension $d$.
	\end{prop}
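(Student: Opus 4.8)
The plan is to produce, for $\cV$ in a dense open subset of $\Gr_\bbC(d,n)$, two distinct vectors of $\cV$ with the same image under two maps of $\cT$, which is precisely a violation of $\hsp(\cV,\cT)$.

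I would first unwind the hypothesis. That \eqref{eq:HS-condition_U} fails means there is a pair $\tau_1,\tau_2\in\cT$ with $\codim(\cU_{\tau_1,\tau_2})<d$, i.e.\ $\cU_{\tau_1,\tau_2}$ is a nonempty subset of $\bbC^n$ of dimension $\ge n-d+1$. Since $\bbH=\bbC$, the complexifications are the maps themselves, so every $w\in\cU_{\tau_1,\tau_2}$ satisfies $T_1w\wedge T_2w=0$ together with $T_1w\neq0$, $T_2w\neq0$, $T_1w\neq T_2w$; from linear dependence and nonvanishing there is a unique $\lambda\in\bbC\setminus\{0\}$ with $T_1w=\lambda T_2w$, and $T_1w\neq T_2w$ forces $\lambda\neq1$. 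The elementary observation is then: if such a $w$ lies in $\cV$, so does $\lambda w$, and $v_1:=w$, $v_2:=\lambda w$ obey $v_1\neq v_2$ and $\tau_1(v_1)=T_1w=\lambda T_2w=T_2(\lambda w)=\tau_2(v_2)$, so $\hsp(\cV,\cT)$ is violated. It therefore suffices to show that a generic $\cV\in\Gr_\bbC(d,n)$ contains a nonzero point of $\cU_{\tau_1,\tau_2}$.

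For this, I would note that $\cU_{\tau_1,\tau_2}=\cY_{\tau_1,\tau_2}\setminus\cZ_{\tau_1,\tau_2}$ is invariant under scalar multiplication (it is the common zero locus of homogeneous quadrics with a union of linear subspaces removed) and is locally closed in $\bbC^n$, so its Zariski closure $\overline{\cU_{\tau_1,\tau_2}}$ is an affine cone with $\dim\overline{\cU_{\tau_1,\tau_2}}=\dim\cU_{\tau_1,\tau_2}\ge n-d+1$, in which $\cU_{\tau_1,\tau_2}$ is dense and open. Choose an irreducible component $X$ of $\overline{\cU_{\tau_1,\tau_2}}$ of maximal dimension; it is again a cone (the connected group $\bbC\setminus\{0\}$ permutes the components, hence fixes each), with $\dim X\ge n-d+1$, and $B:=X\setminus\cU_{\tau_1,\tau_2}$ is a proper closed subcone, so $\dim B\le\dim X-1$. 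Invoking the classical genericity of linear sections, there is a dense open $\Omega\subseteq\Gr_\bbC(d,n)$ on which $\dim(\cV\cap X)=\dim X+d-n\ (\ge 1)$ and $\dim(\cV\cap B)\le\dim B+d-n$ (reading $\{0\}$ as $0$-dimensional); hence $\dim(\cV\cap B)<\dim(\cV\cap X)$ for $\cV\in\Omega$, so $\cV\cap X\not\subseteq B$ and $\cV$ contains a nonzero point $w\in X\setminus B=\cU_{\tau_1,\tau_2}\cap X$. Applying the observation above to this $w$, $\hsp(\cV,\cT)$ fails for every $\cV\in\Omega$, proving the proposition.

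The main obstacle is the last step. The dimension theorem for linear sections alone shows only that \emph{every} $d$-plane $\cV$ meets the closure $\overline{\cU_{\tau_1,\tau_2}}$; one has to rule out that the intersection always lands inside the excised degenerate locus $\cZ_{\tau_1,\tau_2}$. Passing to a top-dimensional component $X$ with its strictly lower-dimensional boundary $B$, and combining the expected-dimension statements for $\cV\cap X$ and for $\cV\cap B$ on a common dense open set — which follows from upper semicontinuity of fibre dimension on the incidence variety $\{(\cV,w):w\in\cV\cap X\}\to\Gr_\bbC(d,n)$ — forces $\cV\cap X$ to protrude beyond $B$; homogeneity of all varieties in sight makes the passage between these affine cones and their projectivizations unnecessary.
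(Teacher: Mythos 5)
Your proof is correct and follows the same overall strategy as the paper's: observe that if $\cV$ meets $\cU_{\tau_1,\tau_2}$ then $\hsp(\cV,\cT)$ fails, and reduce to showing a generic $\cV\in\Gr_\bbC(d,n)$ intersects $\cU_{\tau_1,\tau_2}$. The difference lies in how the latter genericity statement is established. The paper outsources it to Lemma~\ref{lemma:homogeneous}, which is proved by commutative-algebraic means: one passes to the saturation ideal $\fraka = I:J^\infty$, checks it is homogeneous, reduces modulo $n-d$ generic linear forms, compares Krull dimensions of $\frakR/\fraka$ and of the ideal $\frakb$ of the boundary $\cU^{\cl}\setminus\cU$, and must separately verify $\frakb$ is homogeneous and has smaller dimension. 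You instead re-derive the same content geometrically: restrict to a top-dimensional irreducible component $X$ of $\overline{\cU_{\tau_1,\tau_2}}$ (which you correctly argue is again a cone via connectedness of $\bbC^*$), note the boundary $B=X\setminus\cU_{\tau_1,\tau_2}$ is a proper closed subcone of the irreducible $X$ so has strictly smaller dimension, and then compare the generic linear-section dimensions $\dim(\cV\cap X)=\dim X+d-n\ge 1$ and $\dim(\cV\cap B)<\dim X+d-n$, which you justify by upper semicontinuity on the incidence variety. Both routes handle the same subtlety — that the linear section might land entirely inside the excised locus $\cZ_{\tau_1,\tau_2}$ — your restriction to a single irreducible component of top dimension makes the comparison of dimensions slightly more transparent than the paper's treatment, which works with the possibly reducible boundary $\cU^{\cl}\setminus\cU$ all at once. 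You also make explicit the elementary mechanism (find $w$ with $T_1w=\lambda T_2w$, $\lambda\ne 0,1$, and take $v_1=w$, $v_2=\lambda w$) that the paper leaves implicit.
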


	Using the proof technique of Theorem \ref{prop:HS}, we get the following extension for $\hsp_{\pm}(\cV,\cT)$.
	\begin{prop}\label{prop:HS-sign}
		Suppose that for every $\tau\in\cT$ we have $\rank(\tau)\geq 2d$. Let $\cU^\pm_{\tau_1,\tau_2}:=\cU_{\tau_1,\tau_2}\backslash \ker(\otau_1 +\otau_2)$. Then $\hsp_\pm(\cV,\cT)$ holds true for a generic subspace $\cV$ of $\bbH^n$ of dimension $d$ whenever 
		\begin{align}\label{eq:HS-condition_U_sign}
			d \le \codim(\cU^\pm_{\tau_1,\tau_2}),\ \ \ \forall \tau_1,\tau_2\in\cT.
		\end{align}	
	\end{prop}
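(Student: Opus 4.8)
The plan is to revisit the proof of Theorem~\ref{theorem:HS} and to locate, inside the quasi-variety $\cU_{\tau_1,\tau_2}$, exactly the stratum responsible for the ``$v_1=-v_2$'' solutions of $\tau_1(v_1)=\tau_2(v_2)$, and then to delete it. Recall that for a single subspace the argument passes to the complexifications $\otau_1,\otau_2$ and to $\cV_{\bbC}\subset\bbC^n$, and reduces ``$\hsp(\cV,\cT)$ for generic $\cV$'' to the assertion that, for every $\tau_1,\tau_2\in\cT$, a generic $d$-dimensional $\cV$ has $\cV_{\bbC}\cap\cU_{\tau_1,\tau_2}=\emptyset$; under $\rank(\tau)\ge 2d$ this follows from $d\le\codim(\cU_{\tau_1,\tau_2})$ by the standard generic-intersection count for a $d$-plane against a cone. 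I would keep this scheme verbatim, with $\cU_{\tau_1,\tau_2}$ replaced everywhere by $\cU^\pm_{\tau_1,\tau_2}$.

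To justify the replacement, recall the collinear picture underlying $\cU_{\tau_1,\tau_2}$: a point $w\in\cY_{\tau_1,\tau_2}\setminus\big(\ker(\otau_1)\cup\ker(\otau_2)\big)$ satisfies $T_1w=\lambda T_2w$ for a unique nonzero scalar $\lambda$, so the pair $(v_1,v_2)=(w,\lambda w)$ obeys $\tau_1(v_1)=\tau_2(v_2)$, with $v_1=v_2\iff\lambda=1\iff w\in\ker(\otau_1-\otau_2)$ and $v_1=-v_2\iff\lambda=-1\iff w\in\ker(\otau_1+\otau_2)$. The set $\cZ_{\tau_1,\tau_2}$ already excises the first stratum; excising $\ker(\otau_1+\otau_2)$ as well leaves precisely the collinear configurations that violate $\hsp_\pm$, namely $\cU^\pm_{\tau_1,\tau_2}$. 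Non-collinear configurations, where $v_1,v_2$ are linearly independent, automatically satisfy $v_1\ne\pm v_2$, so for them ``bad for $\hsp_\pm$'' and ``bad for $\hsp$'' coincide, and the filtration reduction of Theorem~\ref{theorem:HS} disposes of them by the same computation, now read as landing on $\cU^\pm_{\tau_1,\tau_2}$. Hence any $(v_1,v_2)\in\cV_{\bbC}\times\cV_{\bbC}$ with $\tau_1(v_1)=\tau_2(v_2)$ and $v_1\ne\pm v_2$ forces $\cV_{\bbC}$ to meet $\cU^\pm_{\tau_1,\tau_2}$.

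It remains to run the genericity step with $\cU^\pm_{\tau_1,\tau_2}$: this set is invariant under nonzero scaling and, since $0\in\ker(\otau_1)$, avoids the origin, so $d\le\codim(\cU^\pm_{\tau_1,\tau_2})$ yields $\cV_{\bbC}\cap\cU^\pm_{\tau_1,\tau_2}=\emptyset$ for $\cV$ in a dense open subset of $\Gr_{\bbH}(d,n)$; intersecting these finitely many conditions over all $\tau_1,\tau_2\in\cT$, and transferring genericity from the complex to the real Grassmannian when $\bbH=\bbR$ exactly as in Theorem~\ref{theorem:HS}, produces a dense open set of $\cV$ with no bad pair, i.e.\ $\hsp_\pm(\cV,\cT)$. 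The step I expect to need genuine care is verifying that the reduction/filtration of Theorem~\ref{theorem:HS} is sign-agnostic — that it nowhere exploits the target relation being $v_1=v_2$ rather than $v_1=\pm v_2$ — so that the only change is enlarging the deleted locus $\cZ_{\tau_1,\tau_2}$ by the single linear subspace $\ker(\otau_1+\otau_2)$. Granting this, note $\cU^\pm_{\tau_1,\tau_2}\subseteq\cU_{\tau_1,\tau_2}$, so \eqref{eq:HS-condition_U_sign} is no stronger than \eqref{eq:HS-condition_U}, consistent with $\hsp\Rightarrow\hsp_\pm$.
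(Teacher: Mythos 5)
Your reduction mischaracterizes the sufficiency direction of Theorem~\ref{theorem:HS}, and this matters. The claim that ``$\hsp(\cV,\cT)$ for generic $\cV$'' reduces to ``a generic $d$-plane $\cV$ satisfies $\cV\cap\cU_{\tau_1,\tau_2}=\emptyset$'' (and then follows by a generic-intersection count) is the \emph{necessity} argument of Proposition~\ref{prop:HS}, not the sufficiency argument. Avoiding $\cU_{\tau_1,\tau_2}$ and the kernels rules out only the ``collinear'' bad pairs $v_2=\mu v_1$; a pair of \emph{linearly independent} $v_1,v_2\in\cV$ with $\tau_1(v_1)=\tau_2(v_2)$ is a genuine violation that need not place any vector of $\cV$ inside $\cU_{\tau_1,\tau_2}$. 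Concretely, take $n=m=4$, $d=2$, $\tau_1=\id$, $\tau_2$ a generic invertible map, and $\cV=\Span\{e_1,\tau_2^{-1}(e_1)\}$: then $\cV\cap\cU^\pm_{\tau_1,\tau_2}=\emptyset$ but $\tau_1(e_1)=\tau_2(\tau_2^{-1}e_1)$ with $e_1\ne\pm\tau_2^{-1}(e_1)$. This is exactly why Theorem~\ref{theorem:HS} does not argue this way: it builds the filtration $\cR_j,\cF_j,\cG_j$, and the codimension hypothesis on $\cU_{\tau_1,\tau_2}$ is consumed only inside Lemma~\ref{lemma:W_alpha}, where it bounds $\dim(\cU^{\cl}_{\tau_1,\tau_2}\cap\cH)$ via Lemma~\ref{lemma:subspaces_intersect_subspace} so that Lemma~\ref{lemma:lemma5ofHS} applies to $\tau_\cH$. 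So the last sentence of your second paragraph is false as stated, and ``the same generic-intersection count, now read as landing on $\cU^\pm$'' is not a proof.

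The step you flag as ``needing genuine care'' is, accordingly, the whole content, and it is not purely cosmetic. Two things must change. First, in Lemma~\ref{lemma:W_alpha} the eigenspace $\cE_{\tau_\cH,-1}$ equals $\ker(\otau_1+\otau_2)\cap\cH$, which is \emph{excised} from $\cU^\pm_{\tau_1,\tau_2}$; the weaker bound $\codim(\cU^\pm_{\tau_1,\tau_2})\ge d$ therefore says nothing about it, and one must separately require $\dim\big(\cE_{(\tau_1,\tau_2),-1}\big)\le n-d$ in the $\pm$-analogue of Proposition~\ref{prop:E<=n-d}, exactly as $\cE_{(\tau_1,\tau_2),1}$ is already controlled. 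Second, the two-way case split of Theorem~\ref{theorem:HS} becomes three-way: (a) both $\dim\big(\cE_{(\tau_1,\tau_2),\pm 1}\big)\le n-d$, handled by the amended Proposition~\ref{prop:E<=n-d}; (b) $\dim\big(\cE_{(\tau_1,\tau_2),1}\big)>n-d$, which runs as Proposition~\ref{prop:E>n-d} and outputs $v_1=v_2$; (c) $\dim\big(\cE_{(\tau_1,\tau_2),-1}\big)>n-d$, a genuinely new case in which one decomposes $v_1=v_0+u$ with $v_0\in\cV_0$ and $u\in\cV\cap\cE_{(\tau_1,\tau_2),-1}$, so $\tau_1(v_1)=\tau_2(v_2)$ becomes $\tau_1(v_0)=\tau_2(v_2+u)$ and the flag condition forces $v_0=0$, $v_2=-u=-v_1$. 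Cases (b) and (c) cannot co-occur, since $\cE_{(\tau_1,\tau_2),1}\cap\cE_{(\tau_1,\tau_2),-1}\subset\ker(\tau_1)\cap\ker(\tau_2)$ has dimension at most $n-2d$. With these concrete amendments the adaptation closes; as written, your proposal identifies the right locus $\ker(\otau_1+\otau_2)$ but builds on the wrong mechanism and omits the additional case that the sign introduces.
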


	\subsection{Homomorphic sensing of a subspace arrangement}\label{subsection:HSSA}
	
	We extend Theorem \ref{theorem:HS} from a single subspace $\cV$ to a subspace arrangement $\cA=(\cV_1,\dots,\cV_\ell)$, the latter being an ordered set of subspaces $\cV_i, \, i \in [\ell]:=\{1,\dots,\ell\}$ of $\bbH^n$. With $d_i=\dim (\cV_i)$, we refer to $(d_1,\dots, d_\ell)$ as the dimension configuration of $\cA$. Thus, by a \textit{generic} subspace arrangement $\cA$ with dimension configuration $(d_1,\dots, d_\ell)$ we mean a non-empty Zariski open subset of the product $\Gr_\bbH(d_1,n)\times \cdots\times \Gr_{\bbH}(d_\ell,n)$ of Grassmannians (see also \S \ref{section:pre}). Consider an ordered set $\rI=(\cI_1,\dots,\cI_s)$ of subsets of $[\ell]$. Each $\cI_j$ gives rise to a subspace $\cV_{\cI_j}:=\sum_{i\in\cI_j}\cV_i$  with dimension upper bounded by $d_{\cI_j}:=\sum_{i\in\cI_j}d_i$, where $\cV_{\varnothing}:=0$. Thus the ordered set $\rI$, together with $\cA$, induces the \textit{structured} subspace arrangement $\cA_{\rI}=(\cV_{\cI_1},\dots,\cV_{\cI_s})$. This construction allows various levels of flexibility that will be exploited later in the paper. For example, $\cA_{\rI}$ becomes the original $\cA$ when $\cI_j=\{j\}$ and $s=\ell$, and if in addition $s=1$, then $\cA_{\rI}$ becomes a single subspace. We write $\ocAI:=\bigcup_{j\in[s]}\cV_{\cI_j}$ and consider the property $\hsp(\ocAI,\cT)$. We have:
	\begin{theorem}[homomorphic sensing of a subspace arrangement] \label{theorem:HS-SA}
		Let $(d_1,\dots,d_\ell)$ be a dimension configuration and $\rI=(\cI_1,\dots,\cI_s)$ an ordered set of subsets of $[\ell]$. Let $d:=\max_{j\in[s]} d_{\cI_j}$. Suppose $\rank(\tau)\geq 2d$ for every $\tau\in\cT$. Then $\hsp(\ocAI,\cT)$ holds for a generic subspace arrangement $\cA=(\cV_1,\dots,\cV_{\ell})$ with $\dim(\cV_i)=d_i$, whenever \eqref{eq:HS-condition_U} holds. Similarly, $\hsp_\pm(\ocAI,\cT)$ holds for a generic subspace arrangement $(\cV_1,\dots,\cV_{\ell})$ with $\dim(\cV_i)=d_i$, whenever \eqref{eq:HS-condition_U_sign} holds.
	\end{theorem}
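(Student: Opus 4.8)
The plan is to reduce $\hsp(\ocAI,\cT)$ to finitely many conditions indexed by pairs of structured subspaces and pairs of maps, and then invoke Theorem \ref{theorem:HS} together with one genuinely new argument. Since $\ocAI=\bigcup_{j\in[s]}\cV_{\cI_j}$, the property $\hsp(\ocAI,\cT)$ fails for $\cA$ exactly when there exist $j,k\in[s]$, $\tau_1,\tau_2\in\cT$ and $v_1\in\cV_{\cI_j}$, $v_2\in\cV_{\cI_k}$ with $\tau_1(v_1)=\tau_2(v_2)$ and $v_1\neq v_2$ (for $\hsp_\pm$, with $v_1\neq\pm v_2$). As there are only finitely many choices of $(j,k,\tau_1,\tau_2)$, it suffices to produce for each of them a dense Zariski-open subset of $\Gr_\bbH(d_1,n)\times\cdots\times\Gr_\bbH(d_\ell,n)$ on which no such bad pair exists, and then to intersect these finitely many dense open sets. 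I would first dispose of the ``diagonal'' contributions $j=k$: for a generic arrangement $\cA$ the sum $\cV_{\cI_j}=\sum_{i\in\cI_j}\cV_i$ is a generic subspace of $\bbH^n$ of dimension $d'_j:=\min\{d_{\cI_j},n\}\le d$, because $\cA\mapsto\cV_{\cI_j}$ is a dominant morphism onto $\Gr_\bbH(d'_j,n)$ and so pulls dense open sets back to dense open sets. Since $\rank(\tau)\ge 2d\ge 2d'_j$ for every $\tau\in\cT$ and, by \eqref{eq:HS-condition_U}, $d'_j\le d\le\codim(\cU_{\tau_1,\tau_2})$ for all $\tau_1,\tau_2\in\cT$, Theorem \ref{theorem:HS} applies with ``$d$'' taken to be $d'_j$ and yields $\hsp(\cV_{\cI_j},\cT)$ on a dense open subset; Proposition \ref{prop:HS-sign} does the same for $\hsp_\pm(\cV_{\cI_j},\cT)$ under \eqref{eq:HS-condition_U_sign}.

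Granting $\hsp(\cV_{\cI_j},\cT)$ for every $j$, any remaining bad pair must have $j\neq k$, $v_1\notin\cV_{\cI_k}$ and $v_2\notin\cV_{\cI_j}$: indeed, if, say, $v_1\in\cV_{\cI_k}$, then $v_1,v_2\in\cV_{\cI_k}$ and $\hsp(\cV_{\cI_k},\cT)$ already forces $v_1=v_2$, and symmetrically. Such $v_1,v_2$ are then automatically nonzero and linearly independent, since a dependence $v_2=cv_1$ with $c\neq 0$ would give $v_1=c^{-1}v_2\in\cV_{\cI_k}$. The same reduction, with $\cU^\pm_{\tau_1,\tau_2}$ and \eqref{eq:HS-condition_U_sign} replacing $\cU_{\tau_1,\tau_2}$ and \eqref{eq:HS-condition_U}, works for $\hsp_\pm$. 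Hence everything comes down to proving: for a generic arrangement $\cA$ with $\dim(\cV_i)=d_i$, and for any $j\neq k$ and $\tau_1,\tau_2\in\cT$, there is no linearly independent pair $v_1\in\cV_{\cI_j}$, $v_2\in\cV_{\cI_k}$ with $\tau_1(v_1)=\tau_2(v_2)$.

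This last step is the main obstacle. The natural ambient subspace is $\cV_{\cI_j}+\cV_{\cI_k}$, whose dimension can be as large as $d_{\cI_j}+d_{\cI_k}\le 2d$ --- strictly larger than $\codim(\cU_{\tau_1,\tau_2})$ in general --- so Theorem \ref{theorem:HS} cannot be applied to it verbatim. Instead I would revisit the filtration argument underlying the proof of Theorem \ref{theorem:HS} and re-run it while tracking which structured subspace each vector produced along the way lies in, aiming to show that a bad linearly independent cross-pair can, after finitely many filtration steps, be reduced either to a nonzero vector lying in one of the controlled subspaces $\cV_{\cI_j}$ or $\cV_{\cI_k}$ --- each of dimension at most $d\le\codim(\cU_{\tau_1,\tau_2})$ --- and belonging to the quasi-variety $\cU_{\tau_1,\tau_2}$ (or to $\cU^\pm_{\tau_1,\tau_2}$ in the signed case), which a generic subspace of dimension $\le d$ avoids by the same dimension count as in the proof of Theorem \ref{theorem:HS}; or else to a degenerate configuration lying in $\ker(\otau_1-\otau_2)\cup\ker\otau_1\cup\ker\otau_2$ (and, for $\hsp_\pm$, also $\ker(\otau_1+\otau_2)$), which forces $v_1=v_2$ (resp. $v_1=\pm v_2$) once one uses that $\ker(\tau)\cap\cV_{\cI_j}=\{0\}=\ker(\tau)\cap\cV_{\cI_k}$ generically when $\rank(\tau)\ge 2d$. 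The delicate part is precisely this bookkeeping: the filtration must be arranged so that the reduced vector never drifts into the oversized space $\cV_{\cI_j}+\cV_{\cI_k}$, and every degenerate branch must be checked to collapse to $v_1=v_2$ (resp. $v_1=\pm v_2$). Finally, intersecting the finitely many dense open subsets produced above gives the desired dense open set, and when $\bbH=\bbR$ one descends from the complex generic statement to the real one exactly as in the proof of Theorem \ref{theorem:HS}, using that real points are Zariski-dense in $\Gr_\bbC(d_1,n)\times\cdots\times\Gr_\bbC(d_\ell,n)$.
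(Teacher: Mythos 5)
Your reduction to finitely many pairs $(j,k,\tau_1,\tau_2)$ and the treatment of the diagonal case $j=k$ (pulling back $\hsp(\cV_{\cI_j},\cT)$ through the dominant map $\cA\mapsto\cV_{\cI_j}$ and invoking Theorem~\ref{theorem:HS}, resp.\ Proposition~\ref{prop:HS-sign}) are correct and match the paper's proof in spirit. You also correctly identify the real obstacle: for $j\neq k$ the ambient space $\cV_{\cI_j}+\cV_{\cI_k}$ can have dimension up to $2d$, so Theorem~\ref{theorem:HS} does not apply to it verbatim. But at exactly this point your argument stops being a proof. You propose to re-run the filtration $(\cR_j,\cF_j,\cG_j)$ from the proof of Theorem~\ref{theorem:HS} while ``tracking which structured subspace each vector lies in,'' and you yourself flag the bookkeeping as delicate and unfinished. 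That is a genuine gap: nothing in the paper's filtration machinery is compatible with a two-block structure $\cV_{\cI_j}\cup\cV_{\cI_k}$, the recursion mixes $\tau_1^{-1}$ and $\tau_2^{-1}$ in a way that does not respect the decomposition $\cV_{\cI_j\setminus\cI_k}\oplus\cV_{\cI_j\cap\cI_k}\oplus\cV_{\cI_k\setminus\cI_j}$, and you give no mechanism to prevent the reduced vector from drifting into $\cV_{\cI_j}+\cV_{\cI_k}$. As written this step does not go through.

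The paper handles the cross-pair case by an entirely different and much lighter device that never touches the filtration. It picks the open locus $\sU_{\cI_i\cup\cI_j}$ on which $\dim\big(\tau_\alpha(\cV_{\cI_i})+\tau_\beta(\cV_{\cI_j})\big)$ attains its generic maximum $c$, and shows by a dimension count (using $d_{\cI_i}+d_{\cI_i\cap\cI_j}\le 2d-d_{\cI_j\setminus\cI_i}\le\rank(\tau_\beta)-d_{\cI_j\setminus\cI_i}$, together with the freedom to perturb only the factors indexed by $\cI_j\setminus\cI_i$) that on $\sU_{\cI_i\cup\cI_j}$ one must have
$\tau_\beta(\cV_{\cI_j\setminus\cI_i})\cap\big(\tau_\alpha(\cV_{\cI_i})+\tau_\beta(\cV_{\cI_i\cap\cI_j})\big)=0$; otherwise one could enlarge the sum, contradicting maximality of $c$. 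Then, given $\tau_\alpha(v_i)=\tau_\beta(v_j)$ with $v_i\in\cV_{\cI_i}$ and $v_j\in\cV_{\cI_j}$, writing $v_j=v_{j\setminus i}+v_{i\cap j}$ gives $\tau_\beta(v_{j\setminus i})=\tau_\alpha(v_i)-\tau_\beta(v_{i\cap j})$, which lies in that trivial intersection, hence $\tau_\beta(v_{j\setminus i})=0$; since generically $\ker(\tau_\beta)\cap\cV_{\cI_j}=0$, we get $v_{j\setminus i}=0$, so $v_j\in\cV_{\cI_i}$ and the single-subspace property $\hsp(\cV_{\cI_i},\{\tau_\alpha,\tau_\beta\})$ (already guaranteed by Theorem~\ref{theorem:HS}) closes the argument. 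Note also that your auxiliary reduction to the case $v_1\notin\cV_{\cI_k}$, $v_2\notin\cV_{\cI_j}$ with $v_1,v_2$ linearly independent, while true, is not what makes the proof go: the effective observation is the splitting of $v_j$ along $\cI_j\setminus\cI_i$ and $\cI_i\cap\cI_j$ and the vanishing of the cross-term.
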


\begin{remark}
Affine subspaces are very important in applications, because data tend to come biased away from the origin; see for example the case of affine real phase retrieval \cite{Gao-AAM2018} or arrangements of affine subspaces in clustering \cite{Tsakiris-TPAMI2018}. 
Given an affine subspace $\cV+b$ of $\bbH^n$ one obtains immediately a result about $\hsp(\cV+b,\cT)$ by applying Theorem \ref{eq:HS-condition_U} to the $(d+1)$-dimensional linear subspace $\cV+\Span(b)$. One obtains a homomorphic sensing result for an arrangement of affine subspaces in a similar manner by applying Theorem \ref{theorem:HS-SA} to $(\cV_1 + b_1,\dots,\cV_\ell+b_\ell)$. 
An approach which takes into explicit consideration the nature of the affine subspaces will be considered in a future manuscript.   
\end{remark} 
	
	\subsection{Noisy homomorphic sensing}\label{subsection:noisy-HS}
		For any (column) vector $q\in\bbH^m$, denote by $q^H$ its Hermitian transpose. With $u, w \in \bbH^m$ define $\inner{u}{w}:=u^Hw$, the standard inner product $\bbH^m\times \bbH^m\to \bbH$, and also $\norm{w}:=\sqrt{\inner{w}{w}}=\sqrt{w^Hw}$. 
	
	We consider the homomorphic sensing problem in the presence of additive noise $\epsilon\in\bbH^{m}$. For $v^*\in\cV$ and $\tau^*\in\cT$ set $y=\tau^*(v^*)$ and $\oy=y+\epsilon$. We are interested in the optimization problem
	\begin{align}\label{eq:HS-MLE}
		(\htau,\hv)\in\argmin_{v\in\cV,\tau\in\cT} \norm{\oy-\tau(v)}.
	\end{align}  
	What can we say about the optimal solution $\hv$? Under what conditions is $\hv$ close to $v^*$?
		
	For a non-trivial subspace $\cW\subset \bbH^m$ and a non-zero $u\in\bbH^m$ define
	\begin{align*}
		\cos(u,\cW):=\max\big\{ \frac{\inner{u}{w}+\inner{w}{u}}{2\norm{u}}:
		w\in\cW \text{ and } \norm{w}=1 \big\}.
	\end{align*}
	
	Denote by $\sigma(X)$ the largest singular value of a matrix $X$. Then we have the following stability result.
	
	\begin{theorem}[noisy homomorphic sensing] \label{theorem:HS-deterministic-noise}
Let $\cV\subset\bbH^{n}$  be a subspace of dimension $d$ that satisfies $\hsp(\cV,\cT)$ and let $V\in\bbH^{n\times d}$ be a matrix that has $\cV$ as its column-space. Let $(\htau,\hv)$ be a solution to \eqref{eq:HS-MLE} with $\hT$ the matrix representation of $\htau$. Set $\cT_1:=\{\tau\in\cT:y\in\tau(\cV)\}$. If $\cT = \cT_1$ or 
		\begin{align}\label{eq:deterministic_noise_condition}
			2\norm{\epsilon}<\norm{y}\Big(1-\max_{\tau\in\cT\backslash\cT_1}\cos\big(y,\tau(\cV)\big)\Big),	
		\end{align} 
		then $\hv - v^*=V(\hT V)^{\dagger}\epsilon$, where $(\hT V)^{\dagger}$ is the pseudoinverse of $\hT V$. In particular $\|\hv - v^*\|_2 \leq \sigma(V(\hT V)^{\dagger}) \|\epsilon\|_2$.
	\end{theorem}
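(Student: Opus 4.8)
The plan is to reduce the noisy problem to the noiseless homomorphic sensing property: I will first prove that any optimal transformation $\htau$ for \eqref{eq:HS-MLE} must lie in $\cT_1=\{\tau\in\cT:y\in\tau(\cV)\}$, and then carry out an explicit least–squares computation that uses $\hsp(\cV,\cT)$. If $\cT=\cT_1$ the first part is automatic, so assume \eqref{eq:deterministic_noise_condition}. Since $\tau^*\in\cT_1$ and $\|\oy-\tau^*(v^*)\|=\|\epsilon\|$, the optimal value of \eqref{eq:HS-MLE} is at most $\|\epsilon\|$; I would then show that every $\tau\in\cT\setminus\cT_1$ attains strictly more than $\|\epsilon\|$, which forces $\htau\in\cT_1$. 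For such a $\tau$ we have $\min_{v\in\cV}\|\oy-\tau(v)\|=\operatorname{dist}(\oy,\tau(\cV))$, the distance to the linear subspace $\tau(\cV)\subseteq\bbH^m$, which by the reverse triangle inequality is at least $\operatorname{dist}(y,\tau(\cV))-\|\epsilon\|$.

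It remains, within this first step, to lower bound $\operatorname{dist}(y,\tau(\cV))$ by the quantity appearing in \eqref{eq:deterministic_noise_condition}. I would first verify the identity $\cos(u,\cW)=\|\Pi_\cW u\|/\|u\|$ for any nonzero $u$ and nontrivial subspace $\cW$, with $\Pi_\cW$ the orthogonal projector onto $\cW$: indeed $\tfrac{\inner{u}{w}+\inner{w}{u}}{2}=\operatorname{Re}\inner{u}{w}=\operatorname{Re}\inner{\Pi_\cW u}{w}$ for $w\in\cW$, and maximizing this over unit $w\in\cW$ via Cauchy–Schwarz gives $\|\Pi_\cW u\|$, attained at $w=\Pi_\cW u/\|\Pi_\cW u\|$. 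In particular $\cos(u,\cW)\in[0,1]$, so by Pythagoras $\operatorname{dist}(y,\tau(\cV))=\|y\|\sqrt{1-\cos^2(y,\tau(\cV))}\ge\|y\|\bigl(1-\cos(y,\tau(\cV))\bigr)$, the last inequality being valid precisely because $0\le\cos(y,\tau(\cV))\le 1$. Combining with \eqref{eq:deterministic_noise_condition} gives $\operatorname{dist}(\oy,\tau(\cV))\ge\|y\|\bigl(1-\max_{\tau\in\cT\setminus\cT_1}\cos(y,\tau(\cV))\bigr)-\|\epsilon\|>\|\epsilon\|$, so no $\tau\in\cT\setminus\cT_1$ can be optimal and $\htau\in\cT_1$.

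Once $\htau\in\cT_1$ I would invoke $\hsp(\cV,\cT)$ twice. Taking $v_2=0$ in Definition \ref{definition:HSP} shows $\ker\htau\cap\cV=0$, so $\htau|_\cV$ is injective and hence $\hT V\in\bbH^{m\times d}$ has full column rank $d$ (as $V$ does). Moreover, since $y\in\htau(\cV)$ there is $u\in\cV$ with $\htau(u)=y=\tau^*(v^*)$, whence $u=v^*$ by $\hsp(\cV,\cT)$; thus $\htau(v^*)=y$, i.e. $y=\hT Vc^*$ where $v^*=Vc^*$. Writing $\hv=V\hat c$ (well defined since $V$ has full column rank), optimality of $\hv$ for fixed $\htau$ means $\hT V\hat c=\Pi\,\oy$ with $\Pi$ the orthogonal projector onto $\im(\hT V)=\htau(\cV)$; subtracting $y=\hT Vc^*=\Pi y$ gives $\hT V(\hat c-c^*)=\Pi\epsilon$. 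Since $\hT V$ has full column rank, $(\hT V)^\dagger\hT V=I_d$ and $(\hT V)^\dagger=(\hT V)^\dagger\Pi$, so $\hat c-c^*=(\hT V)^\dagger\epsilon$; multiplying by $V$ yields $\hv-v^*=V(\hT V)^\dagger\epsilon$, and $\|\hv-v^*\|_2\le\sigma\bigl(V(\hT V)^\dagger\bigr)\|\epsilon\|_2$ follows since $\sigma(\cdot)$ is the operator norm.

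I expect the only genuine difficulty to be the first step, and within it the bookkeeping that matches the somewhat loose expression $\|y\|(1-\cos)$ appearing in \eqref{eq:deterministic_noise_condition} to the sharper distance $\|y\|\sqrt{1-\cos^2}$ — this is exactly where the observation $\cos(y,\tau(\cV))\ge 0$ is needed. The remaining steps are routine linear algebra whose only nontrivial ingredient is the full–column–rank fact extracted from $\hsp(\cV,\cT)$.
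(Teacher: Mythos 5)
Your proof is correct and reaches the same conclusion, but the argument for the key first step (establishing $\htau\in\cT_1$) follows a different and arguably more transparent route than the paper's. The paper first rewrites \eqref{eq:HS-MLE} as $\htau=\argmax_{\tau\in\cT}\cos(\oy,\tau(\cV))$ and then verifies, via a chain of inner-product manipulations (bounding each term of the cross-terms involving $\epsilon$, $y$ and $w_2$ separately), that condition \eqref{eq:deterministic_noise_condition} forces $\cos(\oy,\tau_1(\cV))>\cos(\oy,\tau_2(\cV))$ for any $\tau_1\in\cT_1$ and $\tau_2\in\cT\setminus\cT_1$. You instead stay with the distance formulation, use the identity $\cos(u,\cW)=\norm{\Pi_{\cW}u}/\norm{u}$ to write $\operatorname{dist}(y,\tau(\cV))=\norm{y}\sqrt{1-\cos^2(y,\tau(\cV))}$, invoke the $1$-Lipschitz property of the distance function to pass from $y$ to $\oy$, and then apply the elementary bound $\sqrt{1-c^2}\ge 1-c$ valid for $c\in[0,1]$ (which uses, and correctly notes, that $\cos(y,\cW)\ge 0$). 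This makes it clear why \eqref{eq:deterministic_noise_condition} takes the form $\norm{y}(1-\cos)$ rather than the tighter $\norm{y}\sqrt{1-\cos^2}$: the former is a convenient relaxation of the exact distance. The second step (the least-squares computation once $\htau\in\cT_1$) matches the paper's almost verbatim; your version is slightly more explicit in extracting the full-column-rank of $\hT V$ from $\hsp(\cV,\cT)$ and in identifying $\htau(v^*)=y$, both of which the paper does implicitly.
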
    
	
\subsection{Applications of homomorphic sensing theory}\label{subsection:HS-applications} 
We consider the applications of Theorems \ref{theorem:HS}-\ref{theorem:HS-deterministic-noise} to problems mentioned in \S \ref{subsection:Examples}, namely linear regression without correspondences ($\cS_{m}$), unlabeled sensing ($\cS_{r,m}$), real phase retrieval ($\cB_m$) and unsigned unlabeled sensing ($\cS_{r,m}\cB_m$). Bounding the dimension of $\cU_{\tau_1,\tau_2}$ for each of these cases and applying Theorem \ref{theorem:HS}, gives us the following results, which have already been obtained in a diverse literature via diverse methods:
 
	\begin{theorem}[unlabeled sensing and real phase retrieval]\label{corollary:HS-R^n}
		For a generic matrix $A$ of $\bbR^{m\times n}$
		\begin{enumerate}[label=\roman*)]
			\item $m\geq 2n\Rightarrow\hsp(\bbR^n,\cS_m A)$ \cite{Tsakiris-ICML2019,Unnikrishnan-TIT18,Han-ACHA2018,Dokmanic-SPL2019}. 
			\item  $r\geq 2n\Rightarrow\hsp(\bbR^n,\cS_{r,m} A)$  \cite{Tsakiris-ICML2019,Unnikrishnan-TIT18,Han-ACHA2018}.
			\item  $m\geq 2n\Rightarrow \hsp_{\pm}(\bbR^n,\cB_m A)$ \cite{Balan-ACHA2006,Dokmanic-SPL2019}.
			\item\label{C1.4} $r\geq 2n\Rightarrow\hsp_{\pm}(\bbR^n,\cS_{r,m}\cB_m A)$ \cite{Tsakiris-ICML2019,Lv-ACM2018}.
		\end{enumerate}	
	\end{theorem}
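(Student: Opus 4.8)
The plan is to derive i)--ii) from Theorem~\ref{theorem:HS} and iii)--iv) from Proposition~\ref{prop:HS-sign}, via a single reduction common to all four cases followed by a codimension estimate tailored to each family of maps. Since $m\ge n$ in every case (as $m\ge r\ge 2n$ in ii), iv) and $m\ge 2n$ in i), iii)), a generic $A\in\bbR^{m\times n}$ has full column rank, hence is injective, and $\cV_A:=\im(A)$ is a generic point of $\Gr_\bbR(n,m)$ because $A\mapsto\im(A)$ is open on the full-rank locus. For any finite family $\cG$ of linear maps out of $\bbR^m$, injectivity of $A$ gives $g_1(Av_1)=g_2(Av_2)\iff g_1(w_1)=g_2(w_2)$ for $w_i:=Av_i\in\cV_A$, so $\hsp(\bbR^n,\cG A)$ is equivalent to $\hsp(\cV_A,\cG)$, and likewise for $\hsp_\pm$. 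It therefore suffices, with $d=n$ and $\cV=\cV_A$, to check the hypotheses of Theorem~\ref{theorem:HS} (resp.\ Proposition~\ref{prop:HS-sign}) for $\cG\in\{\cS_m,\cS_{r,m},\cB_m,\cS_{r,m}\cB_m\}$. The rank hypothesis is immediate: $\rank(P)=m$ for $P\in\cS_m$, $\rank(B)=m$ for $B\in\cB_m$, and $\rank(S)=\rank(SB)=r$ for $S\in\cS_{r,m}$, so ``$\rank(\tau)\ge 2d$'' reads as ``$m\ge 2n$'' (resp.\ ``$r\ge 2n$''). Everything then comes down to bounding $\codim\cU_{\tau_1,\tau_2}$, resp.\ $\codim\cU^\pm_{\tau_1,\tau_2}$, below by $n$ for all $\tau_1,\tau_2\in\cG$, these varieties now living in $\bbC^m$.

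\textbf{The linear families $\cS_m$ and $\cB_m$.} For $\cS_m$, writing $P_1,P_2$ for the permutation matrices and $Q:=P_2^{-1}P_1$, the relation $P_1w\wedge P_2w=0$ with $w\ne0$ forces $Qw=\lambda w$, so $\cY_{\tau_1,\tau_2}=\bigcup_\lambda\ker(Q-\lambda I)$; since $\ker(\otau_1-\otau_2)=\ker(Q-I)$ and $\ker(\otau_i)=0$, this gives $\cU_{\tau_1,\tau_2}\subseteq\bigcup_{\lambda\ne1}\ker(Q-\lambda I)$. The eigenvalues of $Q$ are roots of unity, and a primitive $p$-th root has multiplicity equal to the number of cycles of $Q$ of length divisible by $p$, hence at most $\lfloor m/p\rfloor\le\lfloor m/2\rfloor$; so every eigenspace with $\lambda\ne1$ has dimension at most $\lfloor m/2\rfloor$, whence $\codim\cU_{\tau_1,\tau_2}\ge\lceil m/2\rceil\ge n$ when $m\ge 2n$. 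For $\cB_m$ the same computation with $D:=B_1B_2$ (eigenvalues $\pm1$) gives $\cY_{\tau_1,\tau_2}=\ker(D-I)\cup\ker(D+I)$, while $\ker(\otau_1-\otau_2)=\ker(D-I)$ and $\ker(\otau_1+\otau_2)=\ker(D+I)$; hence $\cU^\pm_{\tau_1,\tau_2}=\varnothing$ and \eqref{eq:HS-condition_U_sign} holds vacuously, which together with the rank bound yields iii).

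\textbf{The selection families $\cS_{r,m}$ and $\cS_{r,m}\cB_m$.} Here $\cY_{\tau_1,\tau_2}$ is no longer a union of linear subspaces, and the core of the argument is a combinatorial dimension count, in the spirit of the one carried out for unlabeled sensing in \cite{Tsakiris-arXiv18b} but now for the projection-free variety $\cU_{\tau_1,\tau_2}$. Writing $S_iw=(w_{a^i_1},\dots,w_{a^i_r})$, the rank-$\le1$ condition $S_1w\wedge S_2w=0$ stratifies $\bbC^m$ by which coordinates of $S_1w,S_2w$ vanish and by the proportionality factor $\lambda\in\bbP^1$; on the stratum where nothing vanishes and $\lambda\ne0,\infty$ one gets the linear relations $w_{a^1_k}=\lambda\,w_{a^2_k}$, encoded by the digraph on $[m]$ with arcs $a^1_k\to a^2_k$: a component containing a directed cycle forces $\lambda$ to a root of unity and pins its entries down to a single scalar, while a cycle-free component with $e$ arcs imposes $e$ independent conditions. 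Carrying this bookkeeping over all strata, all pairs $(S_1,S_2)$, and (in the signed case) all sign patterns, which merely rescale the $\lambda$'s, one checks that the smallest-codimension components of $\cY_{\tau_1,\tau_2}$ surviving the removal of $\cZ_{\tau_1,\tau_2}$ --- in particular of the $\lambda=1$ locus $\ker(\otau_1-\otau_2)$ --- and, in the $\hsp_\pm$ case, also of the $\lambda=-1$ locus $\ker(\otau_1+\otau_2)$, come from disjoint transpositions: for $\cS_{r,m}$ the surviving piece is the $\lambda=-1$ subspace, of codimension $\lceil r/2\rceil$; for $\cS_{r,m}\cB_m$ one uses disjoint transpositions carrying opposite signs, which push $\lambda$ to $\pm i$ and thereby avoid both excluded loci, again with codimension $\lceil r/2\rceil$. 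Hence $\codim\cU_{\tau_1,\tau_2}\ge\lceil r/2\rceil$ and $\codim\cU^\pm_{\tau_1,\tau_2}\ge\lceil r/2\rceil$, both $\ge n$ once $r\ge 2n$, so Theorem~\ref{theorem:HS} and Proposition~\ref{prop:HS-sign} give ii) and iv).

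\textbf{Main obstacle.} The reduction and the $\cS_m,\cB_m$ cases are routine; the real work is the selection case, and within it the step needing most care is showing that \emph{no} configuration of $S_1,S_2$ (together with sign patterns in the $\cS_{r,m}\cB_m$ case) produces a component of $\cY_{\tau_1,\tau_2}$ that simultaneously survives removal of $\cZ_{\tau_1,\tau_2}$ (and $\ker(\otau_1+\otau_2)$) and has codimension less than $\lceil r/2\rceil$. This requires handling mixed strata --- directed cycles, trees, and vanishing coordinates at once --- pinning down exactly which of the resulting linear pieces lie inside the excluded loci, and then minimizing the codimension over the combinatorial type. Dropping the projection $\rho$ streamlines this relative to \cite{Tsakiris-arXiv18b}, but the case analysis remains the crux.
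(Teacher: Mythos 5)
Your reduction is the same as the paper's: a full‑rank $A$ identifies $\hsp(\bbR^n,\cG A)$ with $\hsp(\im(A),\cG)$, and the surjective morphism $A\mapsto \im(A)$ from the full‑rank locus to $\Gr_\bbR(n,m)$ transports ``generic subspace'' to ``generic matrix''. The substitution of parameters in Theorem~\ref{theorem:HS} ($n\mapsto m$, $m\mapsto r$, $d\mapsto n$) and the rank check for $\cS_m$, $\cS_{r,m}$, $\cB_m$, $\cS_{r,m}\cB_m$ are all as in the paper.

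Where you diverge is in how $\codim(\cU_{\tau_1,\tau_2})\ge n$ (resp.\ $\codim(\cU^\pm_{\tau_1,\tau_2})\ge n$) is established. The paper packages these bounds into Lemma~\ref{lemma:U_dimensionbound}, and for the selection cases it does not re‑derive them: it simply records the inclusion $\cU_{S_1,S_2}\subset\cU_{\rho S_1,S_2}$ and imports the bound $\dim(\cU_{\rho S_1,S_2})\le m-\lfloor r/2\rfloor$ already proved (with the projection $\rho$ present) in \cite{Tsakiris-arXiv18b-v6}. In other words, the very fact that $\cU_{\tau_1,\tau_2}$ is a subset of the old $\cU_{\rho\tau_1,\tau_2}$ means there is nothing new to prove about its dimension — the prior‑work bound carries over for free. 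Your alternative is a direct combinatorial analysis of $\cU_{\tau_1,\tau_2}$ from scratch. For $\cS_m$ (eigenvalues of the permutation $P_2^{-1}P_1$) and for $\cB_m$ (where $\cU^\pm_{B_1,B_2}=\varnothing$ because $\cY_{B_1,B_2}=\ker(D-I)\cup\ker(D+I)$ with $D=B_1B_2$ is precisely what is removed) your argument is correct, self‑contained, and more transparent than citing prior work.

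For $\cS_{r,m}$ and $\cS_{r,m}\cB_m$, however, the proof is not actually there: you stop at ``one checks that the smallest‑codimension components\dots come from disjoint transpositions'' and explicitly flag the stratified digraph analysis as the crux without carrying it out. That stratification — handling cycles, trees, coincidences between the index sets of $S_1$ and $S_2$, vanishing coordinates, and the excluded loci, then minimizing codimension over all resulting linear pieces — is genuinely the hard part, and leaving it as ``one checks'' leaves ii) and iv) unproved. (As a side remark, your stated minimal codimension $\lceil r/2\rceil$ is sharper than the $\lfloor r/2\rfloor$ the paper actually uses; both suffice since $r\ge 2n$ gives $\lfloor r/2\rfloor\ge n$, but if you do carry out the count you should be careful about which one actually comes out.) The cheapest fix is exactly the paper's observation: because $\cU_{S_1,S_2}\subset\cU_{\rho S_1,S_2}$ for any projection $\rho$ onto $\im(S_2)$, the bound from \cite{Tsakiris-arXiv18b-v6} already gives what you need, and you can drop the reconstruction of the combinatorics entirely.
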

	
	Next we consider the sparse counterpart of Theorem \ref{corollary:HS-R^n}. This is mostly unexplored territory in prior work and the main player here is Theorem \ref{theorem:HS-SA}. Consider the standard basis $e_1,\dots,e_n$ of $\bbR^n$ and the subspace arrangement $\cK=(\cV_1,\dots,\cV_n)$ of $\bbR^n$ with $\cV_i=\Span(e_i)$. Let $s=\binom{n}{k}$ and consider the set $\rI=(\cI_1,\dots,\cI_s)$ of all subsets of $[n]$ of cardinality $k$, ordered, say, in the lexicographic order. It gives the structured subspace arrangement $\cK_{\rI}=(\cV_{\cI_1},\dots,\cV_{\cI_s})$ where $\cV_{\cI_j}=\sum_{i\in\cI_j}\cV_i$. By construction, the union of subspaces $\ocKI:=\cup_{j\in[s]}\cV_{\cI_j}$ is the set of all $k$-sparse vectors of $\bbR^n$. With this notation, sparse real phase retrieval is equivalent to $\hsp_{\pm}(\ocKI,\cB_m A)$, and this has been studied by \cite{Wang-ACHA2014} and \cite{Akccakaya-arXiv2013v2}, while sparse unlabeled sensing ($\hsp(\ocKI,\cS_{r,m} A)$) and sparse unsigned unlabeled sensing ($\hsp_{\pm}(\ocKI,\cS_{r,m}\cB_m A)$) have not been considered yet, to the best of our knowledge. Theorem \ref{theorem:HS-SA} together with the bounds on $\dim (\cU_{\tau_1,\tau_2})$ give:
	
	\begin{theorem}[sparse unlabeled sensing and sparse real phase retrieval]\label{corollary:HS-k-sparse}
		For a generic matrix $A$ of $\bbR^{m\times n}$ and $k\leq n$
		\begin{enumerate}[label=\roman*)]
			\item\label{C2.1} $m\geq 2k\Rightarrow\hsp(\ocKI,\cS_m A)$.
			\item\label{C2.2}  $r\geq 2k\Rightarrow\hsp(\ocKI,\cS_{r,m} A)$.
			\item\label{C2.3} $m\geq 2k\Rightarrow \hsp_{\pm}(\ocKI,\cB_m A)$ \cite{Wang-ACHA2014,Akccakaya-arXiv2013v2}.
			\item\label{C2.4} $r\geq 2k\Rightarrow\hsp_{\pm}(\ocKI,\cS_{r,m}\cB_m A)$.
		\end{enumerate}
	\end{theorem}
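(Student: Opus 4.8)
The plan is to derive all four items from Theorem~\ref{theorem:HS-SA}, applied \emph{not} to the coordinate arrangement $\cK$ (which is very far from generic) but, after transporting the problem into $\bbR^{m}$ by $A$, to the arrangement of lines spanned by the columns of $A$. Throughout, write $\cT_{m}$ for the fixed family of linear maps $\bbR^{m}\to\bbR^{m}$, resp.\ $\bbR^{m}\to\bbR^{r}$, given by $\cS_{m}$, $\cS_{r,m}$, $\cB_{m}$, $\cS_{r,m}\cB_{m}$ in the four items respectively, so that the map family of Definition~\ref{definition:HSP} is $\cT=\cT_{m}A$. As an $r\times m$ selection matrix has $r\leq m$, in each item the stated hypothesis forces $m\geq 2k$.

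First I would observe that a generic $A\in\bbR^{m\times n}$ is injective on $\ocKI$ whenever $m\geq 2k$: the restriction of $A$ to any $\cV_{\cI_i}+\cV_{\cI_j}$ is multiplication by a submatrix of $A$ with at most $2k\leq m$ columns, which has full column rank for $A$ off a proper Zariski-closed subset, and $\ocKI-\ocKI\subset\bigcup_{i,j}(\cV_{\cI_i}+\cV_{\cI_j})$. For such $A$ one has $\hsp(A\,\ocKI,\cT_{m})\Rightarrow\hsp(\ocKI,\cT_{m}A)$, and the same with $\hsp_{\pm}$: if $S_{1}Av_{1}=S_{2}Av_{2}$ with $v_{i}\in\ocKI$ and $S_{i}\in\cT_{m}$, then $Av_{1},Av_{2}\in A\,\ocKI$, so $\hsp(A\,\ocKI,\cT_{m})$ gives $Av_{1}=Av_{2}$ and hence $v_{1}=v_{2}$ (resp.\ $v_{1}=\pm v_{2}$). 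Thus it suffices to establish $\hsp(A\,\ocKI,\cT_{m})$, resp.\ $\hsp_{\pm}(A\,\ocKI,\cT_{m})$, for generic $A$.

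Next I would verify the hypotheses of Theorem~\ref{theorem:HS-SA} for the family $\cT_{m}$, with dimension configuration $(1,\dots,1)$, the list $\rI$ of all $k$-subsets of $[n]$ (so $d_{\cI_j}=k$), and $d:=k$; these hypotheses involve only $\cT_{m}$. The rank condition holds since every element of $\cS_{m}$, $\cS_{r,m}$, $\cB_{m}$, $\cS_{r,m}\cB_{m}$, as a matrix on $\bbR^{m}$, has rank $m$, $r$, $m$, $r$ respectively, each $\geq 2k=2d$ by hypothesis. For condition~\eqref{eq:HS-condition_U}, resp.\ \eqref{eq:HS-condition_U_sign}, I would invoke the bounds on $\dim\cU_{\tau_{1},\tau_{2}}$, resp.\ $\dim\cU^{\pm}_{\tau_{1},\tau_{2}}$, for these four families that already underlie Theorem~\ref{corollary:HS-R^n} (see \S\ref{subsection:HS-applications}): for $\cS_{m}$ one has $\codim\cU_{\tau_{1},\tau_{2}}\geq\lceil m/2\rceil\geq k$ (an eigenspace of a permutation matrix for an eigenvalue $\neq 1$ has dimension $\leq\lfloor m/2\rfloor$); for $\cB_{m}$ a direct computation gives $\cU^{\pm}_{\tau_{1},\tau_{2}}=\varnothing$, so \eqref{eq:HS-condition_U_sign} is vacuous and only $m\geq 2k$ is used; and for $\cS_{r,m}$, $\cS_{r,m}\cB_{m}$ the analogous bounds give $\codim\cU_{\tau_{1},\tau_{2}}\geq k$, resp.\ $\codim\cU^{\pm}_{\tau_{1},\tau_{2}}\geq k$, whenever $r\geq 2k$. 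Theorem~\ref{theorem:HS-SA} then yields: for every arrangement $\cW=(\cW_{1},\dots,\cW_{n})$ of $n$ lines in $\bbR^{m}$ lying in a fixed dense open $\cO\subset\Gr_{\bbR}(1,m)^{n}$, the union $\overline{\cW_{\rI}}=\bigcup_{j}\cW_{\cI_j}$, where $\cW_{\cI_j}:=\sum_{i\in\cI_j}\cW_{i}$, satisfies $\hsp(\overline{\cW_{\rI}},\cT_{m})$, resp.\ $\hsp_{\pm}(\overline{\cW_{\rI}},\cT_{m})$.

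It remains to connect this with $A$. Taking $\cW_{i}:=\Span(Ae_{i})=A\,\cV_{i}$ gives $\cW_{\cI_j}=A\,\cV_{\cI_j}$, hence $\overline{\cW_{\rI}}=A\,\ocKI$. The morphism $A\mapsto(\cW_{1},\dots,\cW_{n})$, defined on the dense open locus of matrices with no zero column, is surjective onto $\Gr_{\bbR}(1,m)^{n}$ (choose nonzero representatives of $n$ prescribed lines as the columns of $A$), so the preimage of $\cO$ is dense and open in $\bbR^{m\times n}$; intersecting with the injectivity locus of the second paragraph, a generic $A$ is both injective on $\ocKI$ and induces $\cW\in\cO$. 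For such $A$, $\hsp(\overline{\cW_{\rI}},\cT_{m})$ holds by the third paragraph and coincides with $\hsp(A\,\ocKI,\cT_{m})$, whence $\hsp(\ocKI,\cT_{m}A)$ — and likewise $\hsp_{\pm}$ — by the second paragraph, which is the claimed property. The one point requiring real care is this last paragraph: arranging that genericity of the induced line arrangement and injectivity of $A$ on $\ocKI$ hold simultaneously on a dense open set of matrices. The dimension estimates for $\cU_{\tau_{1},\tau_{2}}$ are exactly those behind Theorem~\ref{corollary:HS-R^n} and are not reproved here.
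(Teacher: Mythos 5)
Your proof is correct and takes essentially the same route as the paper: both apply Theorem~\ref{theorem:HS-SA} to the arrangement of $n$ lines with dimension configuration $(1,\dots,1)$, $\rI$ all $k$-subsets of $[n]$, and $d=k$, then transport the resulting open locus of line arrangements back to $\bbR^{m\times n}$ via the surjective map $A\mapsto(\Span(Ae_1),\dots,\Span(Ae_n))$ and intersect with the locus where any $\min\{n,2k\}$ columns of $A$ are linearly independent so that $\hsp(A\ocKI,\cT_m)$ pulls back to $\hsp(\ocKI,\cT_m A)$. The only cosmetic difference is that you spell out the codimension checks for each family of maps in slightly more detail, whereas the paper cites Lemma~\ref{lemma:U_dimensionbound} directly; the substance is identical.
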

	
	Our final result is a corollary of Theorem \ref{theorem:HS-deterministic-noise}. We only state the result for unlabeled sensing, where $y=S^*Ax^*$ for some $S^*\in\cS_{r,m}$, $\oy=y+\epsilon$, and the objective function of interest as a special case of \eqref{eq:HS-MLE} is 
	\begin{align*}
		(\hS,\hx)\in\argmin_{x\in\bbR^{n},\, S\in\cS_{r,m}} \norm{\oy-SAx}.
	\end{align*} 
	\begin{corollary}\label{corollary:noise}
	    \vspace{-0.4cm}
		If \eqref{eq:deterministic_noise_condition} holds with $\cT=\cS_{r,m}$ and if $A$ satisfies $\hsp(\bbR^n,\cS_{r,m}A)$, then $\hx-x^*=(\hT A)^{\dagger}\epsilon$.
	\end{corollary}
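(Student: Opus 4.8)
The plan is to obtain Corollary~\ref{corollary:noise} as a direct specialization of Theorem~\ref{theorem:HS-deterministic-noise}, the only real work being to recast unlabeled sensing so that the finite set of linear maps becomes $\cS_{r,m}$ alone, acting on a suitable $n$-dimensional subspace. I would begin by noting that the standing hypothesis $\hsp(\bbR^n,\cS_{r,m}A)$ forces $A$ to have full column rank: if $Ax_1=Ax_2$ for some $x_1\ne x_2$, then $SAx_1=SAx_2$ for every $S\in\cS_{r,m}$, contradicting $\hsp(\bbR^n,\cS_{r,m}A)$. Hence $x\mapsto Ax$ is a linear isomorphism from $\bbR^n$ onto $\im(A)$, and in particular $A^{\dagger}A=I_n$.

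Next I would set up the reformulation. Let $\cV:=\im(A)\subset\bbR^m$, a subspace of dimension $n$, let $V:=A\in\bbR^{m\times n}$, whose column space is $\cV$, and regard $\cS_{r,m}$ as a finite set of linear maps $\bbR^m\to\bbR^r$. Through the isomorphism above one checks that $\hsp(\cV,\cS_{r,m})$ is equivalent to $\hsp(\bbR^n,\cS_{r,m}A)$, so the former holds by assumption. Writing $v=Ax$, the objective in the corollary equals $\norm{\oy-Sv}$, and because $A$ is injective a pair $(\hS,\hx)$ minimizes $\norm{\oy-SAx}$ over $(S,x)$ if and only if $(\hS,\hv)$, with $\hv:=A\hx$, minimizes $\norm{\oy-Sv}$ over $(S,v)\in\cS_{r,m}\times\cV$. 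Moreover $y=S^*Ax^*=S^*v^*$ with $v^*:=Ax^*\in\cV$ and $\tau^*:=S^*\in\cS_{r,m}$, and $\oy=y+\epsilon$. Thus the tuple $(\cV,V,\cT=\cS_{r,m},v^*,\tau^*,\oy)$ satisfies the hypotheses of Theorem~\ref{theorem:HS-deterministic-noise}, with the ambient dimensions $n$ and $m$ there playing the roles of $m$ and $r$ here; the noise requirement is met because \eqref{eq:deterministic_noise_condition} is assumed with $\cT=\cS_{r,m}$ (or else the alternative $\cT=\cT_1$ holds), where $\cT_1=\{S\in\cS_{r,m}:y\in\im(SA)\}$ contains $S^*$ and is nonempty.

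Now I would simply invoke Theorem~\ref{theorem:HS-deterministic-noise}. Taking $\htau=\hS$, whose matrix representation is $\hT=\hS$, and $\hv=A\hx$, the theorem gives $\hv-v^*=V(\hT V)^{\dagger}\epsilon$, that is $A(\hx-x^*)=A(\hS A)^{\dagger}\epsilon$. Left-multiplying by $A^{\dagger}$ and using $A^{\dagger}A=I_n$ yields $\hx-x^*=(\hS A)^{\dagger}\epsilon=(\hT A)^{\dagger}\epsilon$, which is the claim.

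Since the proof is essentially a translation of the general statement, there is no genuine analytic obstacle; the one point demanding care is the bookkeeping of the reformulation — identifying $\cV$ with $\im(A)$ and $V$ with $A$, verifying that $\hsp(\cV,\cS_{r,m})$ and $\hsp(\bbR^n,\cS_{r,m}A)$ are the same condition, matching the two optimization problems, and finally using the full column rank of $A$ to pass from the identity for $\hv-v^*$ to the identity for $\hx-x^*$.
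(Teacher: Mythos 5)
Your proof is correct and is essentially the only sensible route: the paper states Corollary~\ref{corollary:noise} as an immediate specialization of Theorem~\ref{theorem:HS-deterministic-noise} with $\cV=\im(A)$, $V=A$, $\cT=\cS_{r,m}$, and gives no separate argument. Your two bookkeeping observations — that $\hsp(\bbR^n,\cS_{r,m}A)$ forces $A$ to have full column rank, and that this injectivity lets you identify $\hsp(\cV,\cS_{r,m})$ with $\hsp(\bbR^n,\cS_{r,m}A)$, match the two optimization problems, and cancel $A$ to pass from $\hv-v^*$ to $\hx-x^*$ — are exactly the points that need checking, and you handle them correctly.
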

	We note that condition \eqref{eq:deterministic_noise_condition} of Corollary \ref{corollary:noise} defines a non-asymptotic regime, where the local stability of estimating $x^*$ is guaranteed, and this implies the asymptotic result of \cite{Unnikrishnan-TIT18}.
	
	%
	
	
	\section{Preliminaries}\label{section:pre}
Let $\bbH$ be equal to $\bbR$ or $\bbC$. For $j=1,2$ let $\tau_j$ be an $\bbH$-linear map $\bbH^n\to\bbH^m$ and write $T_j \in \bbH^{m \times n}$ for its matrix representation with respect to the canonical basis. Denote by $\otau_j:\bbC^n\to\bbC^m$ the complexification of $\tau_j$. That is $\otau_j:=\tau_j$ if $\bbH=\bbC$, and $\otau_j(u+iv):=\tau_j(u)+i\tau_j(v)$ for every $u,v\in\bbR^n$ if $\bbH=\bbR$; here $i = \sqrt{-1}$. Note that if $\bbH=\bbR$, then $T_j$ is also a matrix representation for $\otau_j$. With $\lambda\in\bbC$, denote by $\cE_{(\tau_1,\tau_2),\lambda}$ the set of all $w \in \bbC^n$ satisfying $\otau_1(w)=\lambda \otau_2(w)$. This is a $\bbC$-subspace of $\bbC^n$. If $\lambda \in \bbR$, then $\cE_{(\tau_1,\tau_2),\lambda}\cap\bbR^m$ is an $\bbR$-subspace of $\bbR^n$ and we have $\dim_\bbR \big(\cE_{(\tau_1,\tau_2),\lambda}\cap\bbR^n \big)=\dim_\bbC\big(\cE_{(\tau_1,\tau_2),\lambda}\big)$, where $\dim_\bbR, \dim_\bbC$ denote real and complex vector space dimension respectively. In the sequel, we will drop the subscript indicating the field, with the convention that by $\dim (\cW)$ we mean $\dim_\bbH (\cW)$ whenever $\cW$ is a $\bbH$-subspace, while $\cV \cap \cE_{(\tau_1,\tau_2),\lambda}$ will always be treated as an $\bbR$-subspace whenever $\cV$ is such. For simplicity, we write $\cE_{\tau_j,\lambda}:=\cE_{(\tau_j,\id),\lambda}$ for the eigenspace of $\tau_j$ corresponding to eigenvalue $\lambda$, where $\id$ is the identity map. For a map of sets $\tau:\cX\to \cY$ we let $\tau^{-1}(Q)$ be the inverse image of $\cQ\subset \cY$ under $\tau$. Denote by $0$ the trivial subspace, the zero vector, and the number zero, to be made clear by the context. We say that two subspaces $\cV,\cW$ do not intersect if $\cV \cap \cW=0$.

An algebraic variety is a subset of $\bbH^n$ defined as the common zero locus of a set of polynomials in $n$ variables with coefficients in $\bbH$. The \textit{Zariski topology} on $\bbH^n$ is defined by identifying closed sets with algebraic varieties of $\bbH^n$. Hence Zariski open sets arise as loci in $\bbH^n$ of non-simultaneous vanishing of sets of polynomials. An irreducible algebraic variety is one which can not be written as the union of two proper subvarieties of it. Here by subvariety we mean a closed set in the subspace topology. By a \textit{generic}  point of an irreducible algebraic variety having some property of interest, we mean that there is a non-empty Zariski open (and thus necessarily dense) subset in the variety, each element of which satisfies the property. We denote by $\Gr_\bbH(d,n)$ the Grassmannian of $d$-dimensional $\bbH$-subspaces of $\bbH^n$. One defines a Zariski topology in projective space in a similar fashion as above and under the Pl\"ucker embedding \cite{Harris-AG} $\Gr_\bbH(d,n)$ becomes an irreducible projective variety of dimension $d(n-d)$. For integers $1 \le d_1,\dots,d_\ell \le n-1$ the product $\Gr_\bbH(d_1,n) \times \cdots \times \Gr_\bbH(d_\ell,n)$ is also an irreducible projective variety. Since the affine space $\bbH^{m\times n}$ is irreducible as well, we have justified what we mean by a generic $m\times n$ matrix over $\bbH$ or a generic $d$-dimensional $\bbH$-subspace of $\bbH^n$ or a generic subspace arrangement $(\cV_1,\dots,\cV_\ell)$ with $\cV_i  \in \Gr_\bbH(d_i,n)$. Another classical irreducible variety that will play a role is the \textit{flag variety} $\F_{\bbH}(d_0,d,n)$. This lives in the product $\Gr_{\bbH}(d_0,n)\times \Gr_{\bbH}(d,n)$ and consists of those pairs $(\cV_0,\cV)$ that satisfy $\cV_0\subset \cV$. The following fact about projections, proved in \S \ref{proof:lem:flag}, will be needed in the proof of Theorem \ref{theorem:HS}:

\begin{lemma} \label{lem:flag}
Let $\phi: \F_{\bbH}(d_0,d,n) \rightarrow \Gr_{\bbH}(d,n)$ be the canonical projection that sends $(\cV_0,\cV)$ to $\cV$. If $\sU$ is a non-empty Zariski open subset of $\F_{\bbH}(d_0,d,n)$, then the image $\phi(\sU)$ contains a non-empty Zariski open subset of $\Gr_{\bbH}(d,n)$.
\end{lemma}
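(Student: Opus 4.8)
The plan is to reduce the statement to two facts: first, that $\Gr_\bbH(d,n)$ can be covered by Zariski open sets $W$ over which $\phi$ is isomorphic to the projection $W\times\Gr_\bbH(d_0,d)\to W$; and second, the elementary observation that the image under a projection $X\times Z\to X$ of a non-empty Zariski open subset of a product of $\bbH$-varieties contains a non-empty Zariski open subset of $X$. Granting these, the argument is immediate: given a non-empty Zariski open $\sU\subseteq\F_\bbH(d_0,d,n)$, since the preimages $\phi^{-1}(W)$ cover the flag variety we have $\sU\cap\phi^{-1}(W)\neq\varnothing$ for some chart $W$; under the trivialization this intersection is a non-empty Zariski open subset $\cW$ of $W\times\Gr_\bbH(d_0,d)$, and $\phi(\sU)$ contains $\phi\big(\sU\cap\phi^{-1}(W)\big)$, which is the projection of $\cW$ to $W$ and therefore contains a non-empty Zariski open subset of $W$, hence of $\Gr_\bbH(d,n)$.

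For the second fact I would argue directly: if $\cW\subseteq X\times Z$ is non-empty and Zariski open, pick $(x_0,z_0)\in\cW$; then $\iota_{z_0}\colon X\to X\times Z$, $x\mapsto(x,z_0)$, is a morphism, hence continuous in the Zariski topology, so $\iota_{z_0}^{-1}(\cW)$ is a Zariski open subset of $X$ containing $x_0$, and it is clearly contained in the projection of $\cW$. This uses only continuity of $\iota_{z_0}$, so it is insensitive to whether $\bbH=\bbR$ or $\bbH=\bbC$; I would specifically avoid deducing the result from Chevalley's theorem on constructibility of images, which over $\bbR$ would force one to pass to complexifications and track real points.

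For the local triviality I would use the classical affine charts of the Grassmannian. For each $(n-d)$-element subset $J\subseteq\{1,\dots,n\}$ let $W_J$ be the set of $d$-dimensional $\cV$ with $\cV\cap\Span(e_j:j\in J)=0$; these form a Zariski open cover of $\Gr_\bbH(d,n)$, and on $W_J$ the coordinate projection onto $\Span(e_j:j\notin J)$ restricts to an isomorphism on each $\cV$, which identifies the tautological $d$-plane bundle over $W_J$ with the trivial bundle $W_J\times\bbH^d$. Passing to $d_0$-dimensional subspaces through this identification gives an isomorphism $\phi^{-1}(W_J)=\{(\cV_0,\cV):\cV\in W_J,\ \cV_0\subseteq\cV\}\cong W_J\times\Gr_\bbH(d_0,d)$ under which $\phi$ becomes the first projection. (If $d_0\in\{0,d\}$ then $\phi$ is an isomorphism and there is nothing to prove; otherwise $\Gr_\bbH(d_0,d)$ is a non-empty variety.)

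The only point that is not purely formal is checking that this last identification is an isomorphism of varieties rather than a bijection of sets, and this is where I expect the (routine) technical work to lie: one writes $W_J$ in its matrix coordinates — $d$-planes complementary to $\Span(e_j:j\in J)$ are graphs of linear maps $\Span(e_j:j\notin J)\to\Span(e_j:j\in J)$, parametrized by a $d\times(n-d)$ matrix — represents a sub-$d_0$-plane as the row span of a $d_0\times d$ matrix, and observes that the incidence $\cV_0\subseteq\cV$ becomes a polynomial condition, exhibiting $\phi^{-1}(W_J)$ explicitly as the product chart; alternatively one simply invokes the standard construction of the relative Grassmannian of a vector bundle.
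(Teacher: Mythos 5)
Your proof is correct and takes a genuinely different route from the paper. The paper argues abstractly: over $\bbC$ it invokes Chevalley's theorem on constructibility of images and derives a contradiction with the irreducibility of $\F_{\bbC}(d_0,d,n)$ if $\phi(\sU)$ contained no non-empty open set; over $\bbR$ it then has to re-run this scheme-theoretically, passing to $\operatorname{Spec}$, invoking the Jacobson property, localizing at a nonzero polynomial, and finally using that $\bbR$ is infinite to recover $\bbR$-valued points. Your proof sidesteps all of that by exploiting the explicit Zariski-local triviality of $\phi$ over the affine charts $W_J$ of the Grassmannian, reducing to the elementary observation that the first-factor projection of a non-empty Zariski open subset of a product $X\times Z$ contains a non-empty open subset of $X$, proved by slicing with the morphism $x\mapsto (x,z_0)$ through a chosen point. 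This treats $\bbR$ and $\bbC$ on identical footing and stays entirely within classical varieties. The trade-off is scope: the paper's argument works for any dominant morphism from an irreducible variety, while yours leans on the specific geometric structure of the flag-to-Grassmannian map; but since that structure is completely standard (the flag variety is the relative Grassmannian of the tautological bundle, which the $W_J$ trivialize), your route is simpler and more self-contained for the purpose at hand. The chart-by-chart check you defer, that the bijection $\phi^{-1}(W_J)\cong W_J\times\Gr_{\bbH}(d_0,d)$ is biregular, is indeed routine and poses no gap.
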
 

The dimension $\dim(\cQ)$ of an algebraic variety $\cQ$ is the maximal length $t$ of the chains $\cQ_0\subset\cQ_1\subset\cdots\subset \cQ_t$ of distinct irreducible algebraic varieties contained in $\cQ$. The dimension of any set $\cQ$ is the dimension of its \textit{closure} $\cQ^{\cl}$, i.e., $\cQ^{\cl}$ is the smallest algebraic variety which contains $\cQ$. Linear subspaces are algebraic varieties and their linear algebra dimension coincides with their algebraic-geometric dimension. By convention $\dim \cQ = -1$ if and only if $\cQ$ is empty, while over $\bbC$ we have that $\dim (\cQ)=0$ if and only if $\cQ$ is a finite set of points. A polynomial $p$ is called homogeneous if, writing $p$ as a linear combination of distinct monomials, all monomials that appear with non-zero coefficient have the same degree. For instance, $\cY_{\tau_1,\tau_2}$ is clearly defined by homogeneous polynomials, while so is any union of linear subspaces such as $\cZ_{\tau_1,\tau_2}$. Let $\cY, \cZ$ be two algebraic varieties defined by homogeneous polynomials and set $\cU = \cY \setminus \cZ$. Then $\cU^{\cl}$ is also defined by homogeneous polynomials \cite{cox2013ideals}. The next statement is a folklore fact in commutative algebra and algebraic geometry and we will use it often:

\begin{lemma}\label{lemma:subspaces_intersect_subspace}
Given algebraic varieties $\cQ_1,\dots,\cQ_t$ in $\bbC^n$ of dimensions $r_1,\dots,r_t$, each defined by homogeneous polynomials, there exists a $\bbC$-subspace $\cV \in \Gr_{\bbC}(d,n)$ with $\dim(\cQ_j\cap\cV)=\max\{r_j+d-n,0\}$ for any $j\in[t]$.  
\end{lemma}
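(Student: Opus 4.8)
The claim is that for finitely many homogeneous varieties $\cQ_1,\dots,\cQ_t \subset \bbC^n$ of dimensions $r_1,\dots,r_t$, there is a single $d$-dimensional subspace $\cV$ meeting each $\cQ_j$ in the expected dimension $\max\{r_j+d-n,0\}$. The natural approach is to prove that a generic $\cV \in \Gr_{\bbC}(d,n)$ works, and then invoke the irreducibility of the Grassmannian: if for each $j$ the set $\cG_j := \{\cV \in \Gr_{\bbC}(d,n) : \dim(\cQ_j \cap \cV) = \max\{r_j+d-n,0\}\}$ contains a non-empty Zariski open subset of $\Gr_{\bbC}(d,n)$, then $\bigcap_{j=1}^t \cG_j$ is a finite intersection of non-empty Zariski open sets in an irreducible variety, hence non-empty (in fact dense), and any subspace in it has the desired property. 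So the task reduces to a single variety $\cQ = \cQ_j$ of dimension $r = r_j$.

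\textbf{Key steps.} First I would observe that since $\cQ$ is defined by homogeneous polynomials, it is a cone, so it suffices to reason projectively: let $\bbP\cQ \subset \bbP^{n-1}$ be the associated projective variety of dimension $r-1$ (with the convention that $\bbP\cQ = \varnothing$ when $\cQ = \{0\}$, i.e.\ $r=0$, in which case the statement is trivial since $\cQ \cap \cV = \{0\}$ always, matching $\max\{0+d-n,0\}=0$ for $d \le n$). A $d$-dimensional linear subspace $\cV$ corresponds to a projective linear subspace $\bbP\cV \cong \bbP^{d-1}$. The intersection $\bbP\cQ \cap \bbP\cV$ then has dimension $(r-1)+(d-1)-(n-1) = r+d-n-1$ for generic $\bbP\cV$, which lifts back to $\dim(\cQ \cap \cV) = r+d-n$ when this is non-negative, and is empty (so $\cQ \cap \cV = \{0\}$, dimension $0$) otherwise. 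Thus the core fact needed is the classical statement: \emph{a generic linear subspace of $\bbP^{n-1}$ of a given dimension meets a fixed projective variety in the expected dimension.} I would cite this (it is in Harris, \emph{Algebraic Geometry}, the reference \cite{Harris-AG} already used in the paper, or Shafarevich) — the standard proof intersects with one generic hyperplane at a time: a generic hyperplane does not contain any irreducible component of $\bbP\cQ$, hence cuts each component in dimension exactly one less (this is where one uses that the ground field is $\bbC$ and $\bbP\cQ$ is projective, so components are non-empty and the incidence-correspondence / fiber-dimension argument applies), and iterating $n-d$ times gives the result; once the expected dimension drops below $0$ the intersection becomes empty after enough generic hyperplane cuts.

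\textbf{Main obstacle.} The genuinely delicate point is the bookkeeping between affine cones and their projectivizations, together with the edge cases: when $r=0$ (so $\cQ=\{0\}$) there is nothing to prove; when $r+d-n \le 0$ one must argue that a generic $\bbP\cV$ misses $\bbP\cQ$ entirely — which again is the iterated-generic-hyperplane argument run past the point of emptiness — and then $\cQ\cap\cV = \{0\}$ has dimension $0 = \max\{r+d-n,0\}$ exactly as claimed. None of this is hard, but it must be stated cleanly. An alternative, if one prefers to stay purely affine: apply a generic element of $GL_n(\bbC)$ to a fixed coordinate subspace and use that the dimension of intersection with a fixed variety is generically the expected one by a Bertini-type / generic-translate argument; the payoff is the same but the hyperplane-by-hyperplane projective version is the most transparent to write down. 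I expect the proof in the paper to be a couple of sentences invoking \cite{Harris-AG} or \cite{Shafarevich} for the generic-linear-section fact plus the reduction via irreducibility of the Grassmannian described above.
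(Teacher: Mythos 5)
The paper offers no proof of this lemma, stating it only as ``a folklore fact in commutative algebra and algebraic geometry,'' so there is no argument of record to compare against. Your proposal is correct and is the standard route: reduce to a single cone $\cQ_j$ via irreducibility of the Grassmannian, projectivize to $\bbP\cQ_j \subset \bbP^{n-1}$, invoke the classical generic-linear-section fact (Harris \cite{Harris-AG}), and lift back to the affine cone. One point worth making explicit is why the set $\cG_j$ you define is genuinely Zariski open rather than merely containing some well-chosen $\cV$: the inequality $\dim(\cQ_j \cap \cV) \ge \max\{r_j + d - n, 0\}$ holds for \emph{every} $\cV$, because $\cQ_j$ and $\cV$ are cones and so always share the origin, and the affine dimension theorem then bounds from below the dimension of the component of $\cQ_j \cap \cV$ through the origin; the reverse inequality on a non-empty set is the generic-linear-section fact, and upper semicontinuity of fiber dimension for the incidence correspondence $\{(\cV,q): q \in \cQ_j \cap \cV\} \to \Gr_\bbC(d,n)$ upgrades non-emptiness to openness. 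Also note that $\cQ_j$ is automatically non-empty (it contains $0$), so $r_j \ge 0$ and the $r_j = 0$ edge case is the only degenerate one, which you handle. With these observations, the finite intersection $\bigcap_j \cG_j$ is non-empty by irreducibility, exactly as you conclude.
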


Another fact that will play a role in the proof of Proposition \ref{prop:HS}, proved in \S \ref{proof:lemma:homogeneous}, is: 

\begin{lemma}\label{lemma:homogeneous}
Let $0 \subsetneq \cZ\subset \cY$ be two algebraic varieties of $\bbC^n$ defined by homogeneous polynomials. If $\dim(\cY\backslash \cZ)>n-d$, then a generic subspace $\cV\subset \bbC^n$ of dimension $d$ intersects $\cY\backslash \cZ$.
\end{lemma}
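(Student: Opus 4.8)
The plan is to pass from $\cY$ to the cone $\cW:=(\cY\backslash\cZ)^{\cl}$. By the fact quoted just before the statement, $\cW$ is again cut out by homogeneous polynomials; since $\cY$ is closed we have $\cW\subseteq\cY$, and clearly $\cW\backslash\cZ=\cY\backslash\cZ$ while $\dim\cW=\dim(\cY\backslash\cZ)=:e$, with $e\ge n-d+1$ by hypothesis. The point of replacing $\cY$ by $\cW$ is that $\cY\backslash\cZ$ is dense in $\cW$, hence meets every irreducible component of $\cW$, so \emph{no} component of $\cW$ lies in $\cZ$ — a property that may fail for $\cY$ itself, whose top-dimensional components could all be contained in $\cZ$. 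I would then fix an irreducible component $\cW_1$ of $\cW$ with $\dim\cW_1=e$; since the components of a cone are cones, $\cW_1$ is defined by homogeneous polynomials, and since $\cW_1\not\subseteq\cZ$, the proper closed subvariety $\cW_1\cap\cZ$ of the irreducible variety $\cW_1$ satisfies $\dim(\cW_1\cap\cZ)\le e-1$.

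Next I would record two dimension estimates for a subspace $\cV\in\Gr_\bbC(d,n)$. Lower bound: $0\in\cW_1\cap\cV$ and $\cV$ is cut out by $n-d$ linear forms, so Krull's principal ideal theorem gives $\dim(\cW_1\cap\cV)\ge e+d-n\ge1$ for \emph{every} $\cV$. Upper bound on the part we must avoid: Lemma \ref{lemma:subspaces_intersect_subspace} applied to $\cW_1\cap\cZ$ yields some $\cV_0$ with $\dim\big((\cW_1\cap\cZ)\cap\cV_0\big)=\max\{\dim(\cW_1\cap\cZ)+d-n,0\}\le e+d-n-1$, and by upper semicontinuity of fibre dimension for the projectivized (hence proper) incidence variety $\{(\cV,[p]):[p]\in\bbP(\cW_1\cap\cZ),\ p\in\cV\}\subseteq\Gr_\bbC(d,n)\times\bbP^{n-1}$, the locus $\cO:=\{\cV:\dim((\cW_1\cap\cZ)\cap\cV)\le e+d-n-1\}$ is Zariski open in $\Gr_\bbC(d,n)$; it contains $\cV_0$, so it is nonempty and, the Grassmannian being irreducible, dense. (If $\cW_1\cap\cZ=0$ the bound is immediate and $\cO$ is everything.)

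Finally, for every $\cV\in\cO$ one has $\dim(\cW_1\cap\cV)\ge e+d-n>e+d-n-1\ge\dim\big((\cW_1\cap\cV)\cap\cZ\big)$, so $\cW_1\cap\cV\not\subseteq\cZ$; any $p\in(\cW_1\cap\cV)\backslash\cZ$ is nonzero (as $0\in\cZ$) and lies in $\cV\cap(\cY\backslash\cZ)$ because $\cW_1\subseteq\cY$. Hence a generic subspace $\cV\subset\bbC^n$ of dimension $d$ intersects $\cY\backslash\cZ$.

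The step I expect to be the main obstacle is precisely the passage from Lemma \ref{lemma:subspaces_intersect_subspace}, which guarantees only one suitable $\cV_0$, to a statement about a \emph{generic} $\cV$: this needs the semicontinuity of $\cV\mapsto\dim\big((\cW_1\cap\cZ)\cap\cV\big)$, together with some care in projectivizing so that the semicontinuity theorem applies to a proper morphism. Everything else — the reduction to $\cW$, the choice of a component off $\cZ$, and the Krull lower bound — is routine. A cleaner-looking but less self-contained alternative is to projectivize from the outset and invoke the generic linear section theorem for irreducible projective varieties, applied once to $\bbP(\cW_1)$ and once to $\bbP(\cW_1\cap\cZ)$.
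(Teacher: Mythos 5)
Your proposal is correct, and it reaches the conclusion by a genuinely different route than the paper. The paper's proof stays entirely in the graded ring: it identifies $\cU^{\cl}$, $\cU=\cY\backslash\cZ$, with $V(\fraka)$ where $\fraka=I:J^\infty$ is the saturation, introduces the boundary $\cX=\cU^{\cl}\backslash\cU$ with its homogeneous vanishing ideal $\frakb$, proves $\dim(\frakR/\frakb)<\dim(\frakR/\fraka)$ exactly as you argue that no component of $\cW$ lies in $\cZ$, and then invokes the standard commutative-algebra fact that cutting a graded quotient by $n-d$ generic linear forms reduces its Krull dimension by exactly $n-d$ (down to $0$). That single fact does double duty: applied simultaneously to $\fraka$ and $\frakb$ it yields $\dim(\cU^{\cl}\cap\cV)>\dim(\cX\cap\cV)$ for a generic $\cV$, which is the contrast you also reach. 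Where you differ is precisely in the step you flagged as the main obstacle: the paper gets genericity \emph{for free} because ``generic linear forms'' is already a statement about a dense open set of hyperplane tuples, whereas you start from Lemma \ref{lemma:subspaces_intersect_subspace}, which only produces one witness $\cV_0$, and then must manufacture genericity by hand via upper semicontinuity of fibre dimension for the projectivized incidence correspondence over $\Gr_\bbC(d,n)$. Both are legitimate; the paper's is more compact once one accepts the graded Noether/parameter-system fact, while yours is more geometric, makes the component-by-component structure explicit, and isolates exactly where the passage from existence to genericity happens. Your use of Krull's principal ideal theorem for the lower bound $\dim(\cW_1\cap\cV)\ge e+d-n$ (valid for every $\cV$, since $0\in\cW_1\cap\cV$) replaces the paper's ``$>0$'' conclusion from generic cutting, and works cleanly. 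The ``cleaner alternative'' you mention at the end — projectivize and apply the generic linear section theorem to $\bbP(\cW_1)$ and $\bbP(\cW_1\cap\cZ)$ — is essentially a geometric rephrasing of what the paper actually does algebraically.
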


Next is an important fact, Lemma 5 of \cite{Tsakiris-arXiv18b-v6}, used in the proof of Theorem \ref{theorem:HS}:

\begin{lemma}\label{lemma:lemma5ofHS}
Let $n,d$ be positive integers with $n \ge 2d$. Let $\tau: \bbC^n \rightarrow \bbC^n$ be a $\bbC$-linear map with $\dim(\cE_{\tau,\lambda})\leq n-d$ for every $\lambda \in \bbC$. Then there is a $\bbC$-subspace $\cV$ of $\bbC^n$ of dimension $d$ such that $\dim(\cV+\tau(\cV))=2d$.
\end{lemma}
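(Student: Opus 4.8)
The plan is to argue by contradiction: I will assume that \emph{every} $d$-dimensional subspace $\cV\subset\bbC^n$ satisfies $\dim(\cV+\tau(\cV))\le 2d-1$ and deduce $n\le 2d-1$, contradicting $n\ge 2d$. The geometric heart is that such a universal degeneracy would force (a dense set of) subspaces $\cV$ to contain a $2$-plane of the special shape $\Span(u,\tau(u))$, and there are too few such planes for this to be possible. The case $d=1$ is handled separately and is immediate: the hypothesis with $d=1$ says no eigenspace of $\tau$ equals $\bbC^n$, so $\tau$ is not a scalar operator, hence some $v$ is not an eigenvector and $\cV=\Span(v)$ already gives $\dim(\cV+\tau(\cV))=2$. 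So from now on $d\ge 2$.

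First I would record that the set $O\subset\Gr_\bbC(d,n)$ of subspaces $\cV$ meeting each of the finitely many eigenspaces $\cE_{\tau,\lambda}$ only at $0$ is a non-empty Zariski-dense open subset; this is exactly where the hypothesis $\dim(\cE_{\tau,\lambda})\le n-d$ enters, via Lemma~\ref{lemma:subspaces_intersect_subspace} (each $\{\cV:\cV\cap\cE_{\tau,\lambda}\ne 0\}$ is then a \emph{proper} Zariski-closed subset, and $O$ is the complement of their union). Now fix $\cV\in O$. Then $\cV\cap\ker\tau=0$, so $\dim\tau(\cV)=d$, and under the contradiction hypothesis $\dim(\cV\cap\tau(\cV))=2d-\dim(\cV+\tau(\cV))\ge 1$. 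Picking $0\ne w\in\cV\cap\tau(\cV)$ and writing $w=\tau(u)$ with $u\in\cV$, the assumption $\cV\in O$ forces $u\ne 0$ and forces $u$ not to be an eigenvector of $\tau$; hence $u\wedge\tau(u)\ne 0$ and $\cV$ contains the genuine $2$-plane $\Span(u,\tau(u))$. Thus every $\cV\in O$ contains a plane from the subset $\cP\subset\Gr_\bbC(2,n)$ equal to the image of the morphism $\bbP^{n-1}\setminus\{\text{eigenvectors of }\tau\}\to\Gr_\bbC(2,n)$, $[u]\mapsto[u\wedge\tau(u)]$; in particular $\dim\cP^{\cl}\le n-1$.

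The contradiction then comes from a dimension count on the incidence variety $J:=\{(\cV,\cP_0)\in\Gr_\bbC(d,n)\times\cP^{\cl}:\cP_0\subseteq\cV\}$, which is Zariski-closed hence projective. By the previous paragraph its first projection is a closed subset containing the dense set $O$, hence equals $\Gr_\bbC(d,n)$, so $\dim J\ge d(n-d)$. On the other hand every fibre of the second projection $J\to\cP^{\cl}$ is isomorphic to $\Gr_\bbC(d-2,n-2)$, of dimension $(d-2)(n-d)$, so $\dim J\le\dim\cP^{\cl}+(d-2)(n-d)\le(n-1)+(d-2)(n-d)$. Combining the two estimates, $d(n-d)\le(n-1)+(d-2)(n-d)$, i.e.\ $2(n-d)\le n-1$, i.e.\ $n\le 2d-1$ --- the desired contradiction. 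Hence some $d$-dimensional $\cV$ has $\dim(\cV+\tau(\cV))=2d$.

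I expect the only delicate point to be the verification that the relevant loci (the set $\cP$, the incidence variety $J$, the ``bad'' loci of subspaces) are genuinely Zariski-closed, or reduce to such by taking closures, so that the fibre-dimension inequality $\dim J\le\dim\cP^{\cl}+\max(\text{fibre dimension})$ is legitimately applicable and ``containing dense $\Rightarrow$ equal'' can be used; everything else is routine Grassmannian bookkeeping. As a fallback if the incidence-variety formalities become unwieldy, one can instead build $\cV$ by hand from a Jordan basis of $\tau$, combining two gadgets --- within a single Jordan block take every other basis vector; across two blocks with distinct eigenvalues take a sum of their eigenvectors --- and use a counting argument based on $\dim(\cE_{\tau,\lambda})\le n-d$ to assemble these pieces into a $d$-dimensional $\cV$ with $\dim(\cV+\tau(\cV))=2d$.
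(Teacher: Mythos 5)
The paper does not contain its own proof of this lemma; it is imported verbatim as Lemma~5 of \cite{Tsakiris-arXiv18b-v6}, so there is no in-paper argument to compare against. Your contradiction argument via an incidence correspondence is correct and self-contained. The key steps all check out: the open set $O\subset\Gr_\bbC(d,n)$ of subspaces avoiding every eigenspace (including $\ker\tau$) is non-empty by Lemma~\ref{lemma:subspaces_intersect_subspace} and is indeed Zariski open; for $\cV\in O$, the contradiction hypothesis forces $\cV\cap\tau(\cV)\neq 0$ and the fact that $\cV$ avoids all eigenspaces correctly upgrades this to ``$\cV$ contains a genuine two-plane $\Span(u,\tau(u))$ with $u$ not an eigenvector''; the family $\cP$ of such planes is the image of a morphism from an $(n-1)$-dimensional quasi-projective source, so $\dim\cP^{\cl}\le n-1$; the incidence set $J\subset\Gr_\bbC(d,n)\times\cP^{\cl}$ is closed, hence projective, so its first projection is closed, and containing the dense set $O$ forces surjectivity onto $\Gr_\bbC(d,n)$ and $\dim J\ge d(n-d)$; the second projection has all fibres isomorphic to $\Gr_\bbC(d-2,n-2)$ of dimension $(d-2)(n-d)$, giving $\dim J\le(n-1)+(d-2)(n-d)$; and these two bounds yield $n\le 2d-1$, contradicting $n\ge 2d$. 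The $d=1$ base case is handled correctly and is genuinely needed (the fibre $\Gr_\bbC(d-2,n-2)$ argument requires $d\ge 2$). The point you flagged as delicate---whether the fibre-dimension inequality and the ``dense closed implies everything'' step are legitimate---does go through, precisely because $J$ is projective (closed image) and the fibres of $\pi_2$ have constant dimension over all of $\cP^{\cl}$, so the usual dimension estimate applies to each irreducible component of $J$. The Jordan-form fallback you sketch would be a more elementary, hands-on route but would require substantially more bookkeeping to assemble the pieces into a $d$-dimensional $\cV$; the incidence-variety route you actually carry out is cleaner.
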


We close with some facts needed for the proof of Theorems \ref{corollary:HS-R^n} and \ref{corollary:HS-k-sparse}. With $S_1,S_2\in\cS_{r,m}$, it was proved in \cite{Tsakiris-arXiv18b-v6} that there is a projection $P$ onto the column space of $S_2$ with $\dim(\cU_{PT_1,T_2})\leq m-\floor{r/2}$. Since $\cU_{S_1,S_2}\subset\cU_{PS_1,S_2}$, we have $\dim(\cU_{S_1,S_2}) \le m-\floor{r/2}$. It is also easy to see that $\cU_{B_1,B_2}=\varnothing$. We have:

\begin{lemma}\label{lemma:U_dimensionbound}
		Let $\Pi_1,\Pi_2\in\cS_m$, $S_1,S_2\in\cS_{r,m}$, and $B_1,B_2\in\cB_m$ be permutations, rank-$r$ selections,  and sign matrices, respectively.
		\begin{enumerate}[label=\roman*)]
			\item $m\geq 2n\Rightarrow \dim(\cU_{\Pi_1,\Pi_2})\leq m-n$.
			\item $r\geq 2n\Rightarrow\dim(\cU_{S_1,S_2})\leq m-n$.
			\item $m\geq 2n\Rightarrow \dim(\cU_{B_1,B_2}^{\pm})\leq m-n$.
			\item $r\geq 2n\Rightarrow \dim(\cU_{S_1B_1,S_2B_2}^{\pm})\leq m-n$.
		\end{enumerate}
	\end{lemma}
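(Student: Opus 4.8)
\medskip
\noindent\textbf{Proof proposal.} The plan is to treat the four parts in turn, of which only (iv) requires real work. For (ii), from $r\geq 2n$ we get $\floor{r/2}\geq n$, so $m-\floor{r/2}\leq m-n$, and the claim follows from the bound $\dim(\cU_{S_1,S_2})\leq m-\floor{r/2}$ recalled just above. For (iii), since $B_1,B_2$ are invertible we have $\cZ_{B_1,B_2}=\ker(\overline{B_1}-\overline{B_2})$, while $\overline{B_2}^{-1}\overline{B_1}$ is diagonal with entries $\pm 1$, so $\cY_{B_1,B_2}=\ker(\overline{B_1}-\overline{B_2})\cup\ker(\overline{B_1}+\overline{B_2})$; hence $\cY_{B_1,B_2}\setminus\cZ_{B_1,B_2}\subseteq\ker(\overline{B_1}+\overline{B_2})$, and deleting $\ker(\overline{B_1}+\overline{B_2})$ gives $\cU^{\pm}_{B_1,B_2}=\varnothing$, so $\dim(\cU^{\pm}_{B_1,B_2})=-1\leq m-n$. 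For (i), a permutation matrix lies in $\cS_{m,m}$, so the selection bound with $r=m$ gives $\dim(\cU_{\Pi_1,\Pi_2})\leq m-\floor{m/2}=\ceil{m/2}$, and $m\geq 2n$ forces $n\leq\floor{m/2}$, i.e. $\ceil{m/2}\leq m-n$. (One can also see (i) directly, which foreshadows (iv): $\cY_{\Pi_1,\Pi_2}$ is the union of the eigenspaces of $\Pi:=\Pi_2^{-1}\Pi_1$, $\cZ_{\Pi_1,\Pi_2}$ is its $1$-eigenspace, and for $\lambda\neq 1$ the $\lambda$-eigenspace of $\Pi$ has dimension equal to the number of cycles of $\Pi$ of length divisible by the order of $\lambda$, hence at most $\floor{m/2}$, which is $\leq m-n$; the union over the finitely many $\lambda$ finishes it.)

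For (iv) the first step is the matrix identity $S_iB_i=D_iS_i$ with $D_i\in\cB_r$, the $(k,k)$ entry of $D_i$ being the sign that $B_i$ assigns to the coordinate picked by the $k$th row of $S_i$. Left multiplication by the invertible $D_1$ preserves linear dependence of columns, so $\cY_{S_1B_1,S_2B_2}=\cY_{S_1,DS_2}$ with $D:=D_1D_2\in\cB_r$, and therefore $\cU^{\pm}_{S_1B_1,S_2B_2}\subseteq\bigcup_{\lambda\in\bbC\setminus\{0,1,-1\}}\ker(S_1-\lambda DS_2)$. Fixing such a $\lambda$, writing $\sigma_i\colon[r]\to[m]$ for the index-selection map of $S_i$, $J_i:=\sigma_i([r])$, and $d_k$ for the $k$th diagonal entry of $D$, the space $\ker(S_1-\lambda DS_2)$ is cut out by the equations $w_{\sigma_1(k)}=\lambda d_k\,w_{\sigma_2(k)}$, $k\in[r]$. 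I would encode these as the directed graph on $[m]$ with edges $\sigma_1(k)\to\sigma_2(k)$: injectivity of $\sigma_1,\sigma_2$ forces every vertex to have at most one incoming and at most one outgoing edge, so the graph is a disjoint union of simple paths, simple cycles, and isolated vertices, the isolated vertices being $[m]\setminus(J_1\cup J_2)$, the path-starts being exactly $J_1\setminus J_2$, and the cycles lying in $J_1\cap J_2$. Chaining the equations along components, every isolated vertex and every path component contributes $1$ to $\dim\ker(S_1-\lambda DS_2)$, while a cycle of length $\ell$ contributes $1$ iff the product of its $\lambda d_k$ equals $1$, else $0$; since $\lambda\notin\{1,-1\}$ loops ($\ell=1$) are never consistent, so consistent cycles have $\ell\geq 2$ and, being vertex-disjoint inside $J_1\cap J_2$, number at most $\floor{|J_1\cap J_2|/2}\leq\floor{r/2}$. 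Since $(m-|J_1\cup J_2|)+|J_1\setminus J_2|=m-r$, this gives
\begin{align*}
\dim\ker(S_1-\lambda DS_2)=(m-|J_1\cup J_2|)+|J_1\setminus J_2|+(\text{number of consistent cycles})\leq m-r+\floor{r/2}=m-\ceil{r/2}.
\end{align*}

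To finish (iv) I would note that for generic $\lambda$ no power $\lambda^{\ell}$ ($\ell\geq 1$) equals $\pm 1$, so no cycle is consistent and the generic fibre dimension is $m-r$; a standard incidence-variety argument (the union is the image of a family of linear spaces parametrized by the $\lambda$-line) then bounds $\dim\big(\bigcup_{\lambda\in\bbC\setminus\{0,1,-1\}}\ker(S_1-\lambda DS_2)\big)$ by $\max\{m-r+1,\ m-\ceil{r/2}\}=m-\ceil{r/2}$, the equality using $r\geq 2n\geq 2$. Hence $\dim(\cU^{\pm}_{S_1B_1,S_2B_2})\leq m-\ceil{r/2}\leq m-n$, as $r\geq 2n$ gives $\ceil{r/2}\geq n$. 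The main obstacle is entirely within (iv): getting the combinatorial description of $\ker(S_1-\lambda DS_2)$ exactly right (the graph being a disjoint union of simple paths and cycles, and the precise count of consistent components), and controlling the dimension of the \emph{infinite} union over $\lambda$ so that the generic-fibre dimension $m-r$ costs only one extra unit; everything else, including (i)--(iii) and the closing arithmetic, is routine.
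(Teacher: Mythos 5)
Your proof is correct, and for part (iv) it goes genuinely beyond what the paper does. The paper does not write out a proof of this lemma: the remarks preceding its statement handle (i)--(iii) by citing the bound $\dim(\cU_{S_1,S_2})\leq m-\lfloor r/2\rfloor$ from \cite{Tsakiris-arXiv18b-v6} (via a projection onto $\operatorname{im}(S_2)$) and by the one-line observation about $\cU_{B_1,B_2}$, and then (iv) is simply asserted. You instead supply a self-contained argument. For (ii) you reuse the cited bound, for (i) you correctly specialize it to $r=m$, and for (iii) you give the clean eigenspace decomposition of $\cY_{B_1,B_2}$; your statement $\cU^{\pm}_{B_1,B_2}=\varnothing$ is in fact more accurate than the paper's $\cU_{B_1,B_2}=\varnothing$, which fails when $B_1\neq B_2$ (since $\cU_{B_1,B_2}=\ker(\overline{B_1}+\overline{B_2})\setminus\{0\}$). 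For (iv), the identity $S_iB_i=D_iS_i$ reduces to bounding $\bigcup_{\lambda\neq 0,\pm1}\ker(S_1-\lambda DS_2)$, and your graph-theoretic analysis of each fiber (isolated vertices plus path components contribute $m-r$, consistent cycles contribute at most $\lfloor r/2\rfloor$) together with the incidence-variety argument over the $\lambda$-line is exactly what is needed and is correct; loops are ruled out because $\lambda\neq\pm1$. A minor bonus is that this gives $m-\lceil r/2\rceil$ rather than $m-\lfloor r/2\rfloor$, one unit better when $r$ is odd, so the conclusion holds already for $r\geq 2n-1$. The paper's route leans on an external reference and a projection device; yours is more elementary and actually proves the $\cU^{\pm}_{S_1B_1,S_2B_2}$ case directly, which the paper leaves implicit.
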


	\section{Proofs}\label{section:proofs}	

In this section we give the proofs. For the convenience of the reader a dependency graph of the various statements is given in Figure \ref{fig:TheoremsDependency}.
	
	\begin{figure}
	    \centering
	    \includegraphics[width=0.6\textwidth]{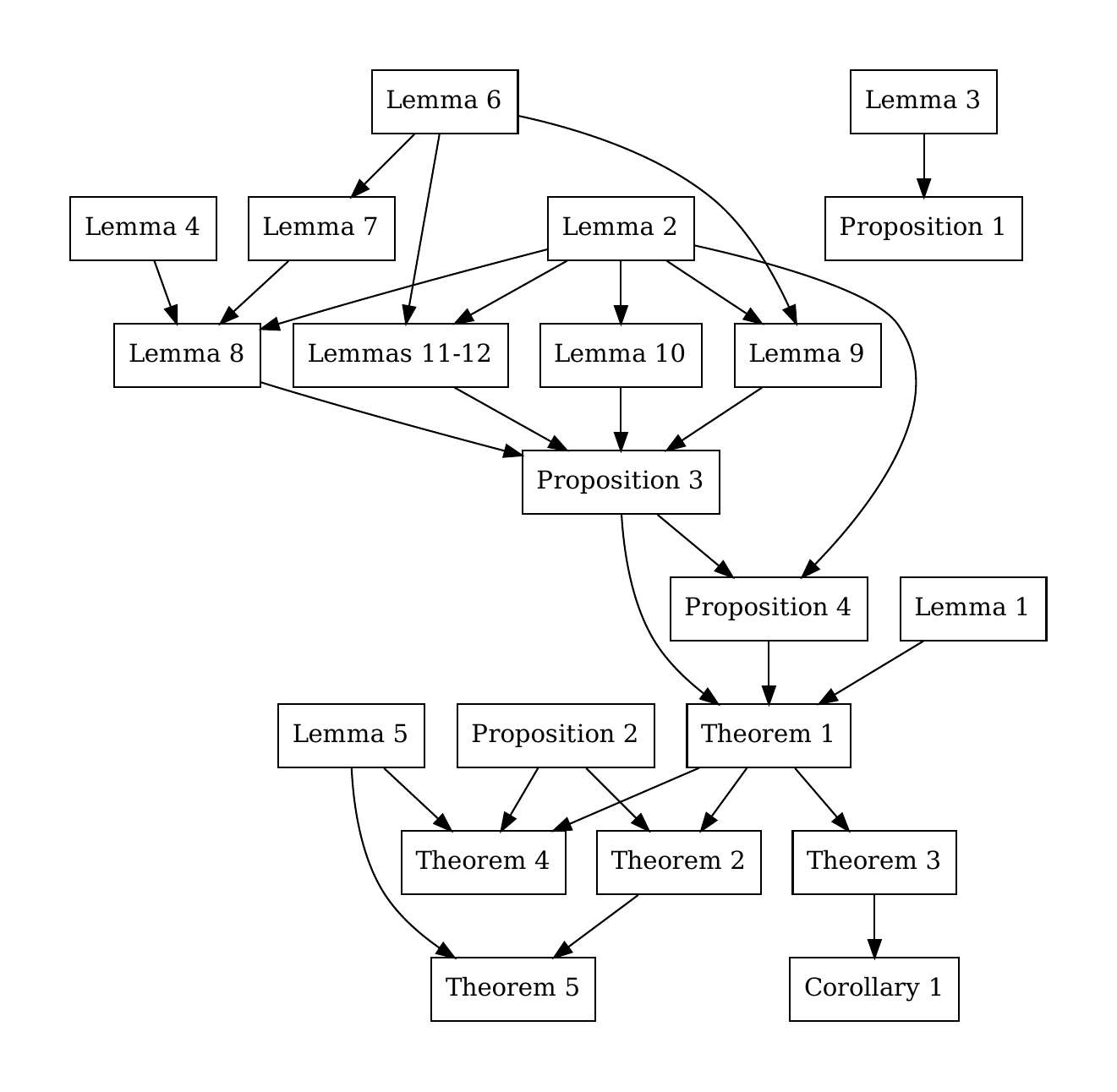}
	    \caption{Dependency graph of the various statements in the paper.}
	    \label{fig:TheoremsDependency}
	\end{figure}

\subsection{Proof of Theorem \ref{theorem:HS}}\label{subsection:proof_theorem 1} Since any linear map $\bbH^n \rightarrow \bbH^m$ can be trivially extended to one of the form $\bbH^\ell \rightarrow \bbH^\ell$ ($\ell=m$ or $\ell=n$), either by enlarging the source space or the target space, we will assume without loss of generality that the maps in $\cT$ are of the form $\bbH^n \rightarrow \bbH^n$.
	
	It suffices that for arbitrary $\tau_1,\tau_2\in\cT$ we exhibit a non-empty Zariski open subset of $\Gr_{\bbH}(d,n)$ on which every subspace $\cV$ satisfies $\hsp(\cV,\{\tau_1,\tau_2\})$. This will imply $\hsp(\cV,\cT)$ since $\cT$ is a finite set and the intersection of finitely many non-empty Zariski open subsets of $\Gr_{\bbH}(d,n)$ is also non-empty and open. We divide the proof in two cases, $\dim(\cE_{(\tau_1,\tau_2),1})\leq n-d$ and  $\dim(\cE_{(\tau_1,\tau_2),1})> n-d$. Assume that we are in the first case. Then we have the following proposition whose proof is placed at \S \ref{subsubsection:proof_prop:E<=n-d}.
\begin{prop}\label{prop:E<=n-d}
In addition to the hypotheses of Theorem \ref{theorem:HS}, further assume $\dim (\cE_{(\tau_1,\tau_2),1})\leq n-d$. Then there is an $n$-dimensional subspace $\cV^*$ of $\bbH^n$ which satisfies $\dim (\tau_1(\cV^*)+\tau_2(\cV^*))=2d$.
\end{prop}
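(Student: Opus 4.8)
The plan is to reduce the statement to Lemma~\ref{lemma:lemma5ofHS}, which is exactly this kind of assertion for a single self-map of $\bbC^n$, and to first pass to $\bbH=\bbC$. For a basis matrix $V$ of $\cV$ the condition $\dim(\tau_1(\cV)+\tau_2(\cV))=2d$ is the non-vanishing of some $2d\times 2d$ minor of the $n\times 2d$ matrix $[\,T_1V\ \ T_2V\,]$, so it cuts out a Zariski open subset of $\Gr_\bbC(d,n)$; if that subset is non-empty it is dense and therefore meets $\Gr_\bbR(d,n)$, which is Zariski dense in $\Gr_\bbC(d,n)$. Since complexification preserves dimensions of real subspaces (cf.\ \S\ref{section:pre}), it then suffices to produce a $d$-dimensional complex subspace $\cV^*\subset\bbC^n$ with $\dim(\otau_1(\cV^*)+\otau_2(\cV^*))=2d$. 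So from now on I take $\bbH=\bbC$. (I read the statement as asserting $\dim\cV^*=d$.)

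Next I would repackage the hypotheses into the single clean bound $\dim\cE_{(\tau_1,\tau_2),\lambda}\le n-d$ for \emph{every} $\lambda\in\bbC$, together with $\dim\ker\tau_1,\dim\ker\tau_2\le n-2d$. The kernel bounds and the case $\lambda=0$ are immediate from $\rank(\tau_i)\ge 2d$, and $\lambda=1$ is the extra hypothesis of the proposition. For $\lambda\ne 0,1$ I would observe that $\cE_{(\tau_1,\tau_2),\lambda}\setminus(\ker\otau_1\cup\ker\otau_2)\subset\cU_{\tau_1,\tau_2}$; since $\cE_{(\tau_1,\tau_2),\lambda}$ is a linear subspace, hence irreducible, it is either contained in $\ker\otau_1\cup\ker\otau_2$ — then in one of the two, so of dimension $\le n-2d$ — or that punctured set is dense in it, whence $\dim\cE_{(\tau_1,\tau_2),\lambda}\le\dim\cU_{\tau_1,\tau_2}\le n-d$ by \eqref{eq:HS-condition_U}.

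For the construction itself, note $n\ge 2d$ since $2d\le\rank(\tau_1)\le n$. If $\tau_2$ is invertible, set $\tau:=\tau_2^{-1}\tau_1$; then $\cE_{\tau,\lambda}=\cE_{(\tau_1,\tau_2),\lambda}$ has dimension $\le n-d$ for all $\lambda$, so Lemma~\ref{lemma:lemma5ofHS} yields a $d$-dimensional $\cV^*$ with $\dim(\cV^*+\tau(\cV^*))=2d$, and applying the isomorphism $\tau_2$ gives $\dim(\tau_1(\cV^*)+\tau_2(\cV^*))=\dim\tau_2(\cV^*+\tau(\cV^*))=2d$. If $\tau_2$ is not invertible but the pencil $\lambda\mapsto\tau_1-\lambda\tau_2$ is regular (i.e.\ $\det(\tau_2+t\tau_1)\not\equiv 0$ in $t$), I would replace $\tau_2$ by $\tau_2'=\tau_2+t\tau_1$ for a value of $t$ making $\tau_2'$ invertible: then $\tau_1(\cV)+\tau_2'(\cV)=\tau_1(\cV)+\tau_2(\cV)$ for every $\cV$, and every eigenspace of the pencil $(\tau_1,\tau_2')$ is either some $\cE_{(\tau_1,\tau_2),\nu}$ or $\ker\tau_2$, hence again of dimension $\le n-d$, so the invertible case applies to $(\tau_1,\tau_2')$.

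What remains, and is the \textbf{main obstacle}, is a singular pencil, where $\tau_2+t\tau_1$ is non-invertible for every $t$; this genuinely occurs under the hypotheses (already for $d=2$ and $n$ large, and trivially for $d=1$). My plan here is to decompose $\bbC^n$ along the Kronecker canonical form of $\lambda\mapsto\tau_1-\lambda\tau_2$: the regular part carries a doubling subspace obtained as in the invertible case via Lemma~\ref{lemma:lemma5ofHS}, while on each singular $L_k$-block — whose column space has dimension $k+1$ and on which, in suitable bases, $\tau_1,\tau_2$ act by shifts $\tau_1e_0=0,\ \tau_1e_j=f_j,\ \tau_2e_j=f_{j+1},\ \tau_2e_k=0$ — one builds a doubling subspace explicitly by spanning every second basis vector while avoiding $e_0$ and $e_k$. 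The real work is then to check that the dimensions of the pieces add up to at least $d$, and this is exactly where the bounds $\dim\cE_{(\tau_1,\tau_2),\lambda}\le n-d$ and $\dim\ker\tau_i\le n-2d$ get consumed. An equivalent, more ``filtration-flavoured'' route (matching the technique advertised in \S\ref{subsection:HSAS}) is to grow $\cV^*$ one dimension at a time, choosing at step $j$ a vector $v$ with $\dim(\tau_1(\cV_j+\bbC v)+\tau_2(\cV_j+\bbC v))=2j+2$ and showing the failing $v$ form a proper subvariety of $\bbC^n$, the properness being forced by the eigenspace bounds. I expect the bookkeeping in this singular case — not any single new idea — to be the crux; the invertible case is essentially Lemma~\ref{lemma:lemma5ofHS} verbatim.
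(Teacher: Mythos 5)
Your reduction to $\bbH=\bbC$, your derivation of the uniform bound $\dim\cE_{(\tau_1,\tau_2),\lambda}\le n-d$ for all $\lambda$, and your handling of the two ``nice'' cases (where $\tau_2$ is invertible, or some $\tau_2+t\tau_1$ is) are all correct; in those cases your invocation of Lemma~\ref{lemma:lemma5ofHS} is essentially the paper's Lemma~\ref{lemma:W_alpha} specialised to $\cR_\alpha=\bbC^n$. But, as you yourself flag, the singular-pencil case --- $\tau_2+t\tau_1$ non-invertible for every $t\in\bbC$ --- is where the proposition's actual content lies, and what you give there is a plan, not a proof. The Kronecker-form route is plausible in outline, but the dimension bookkeeping is exactly where the hypotheses $\dim\cE_{(\tau_1,\tau_2),\lambda}\le n-d$ and $\rank\tau_i\ge 2d$ must be spent, and those constrain eigenspaces and kernels, not the minimal indices of the singular blocks, in any transparent way; you have not shown the achievable doubling dimensions across the regular core and the $L_k,L_k^\top$ blocks sum to at least $d$. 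Your alternative ``grow $\cV^*$ one dimension at a time'' sketch has a sharper defect: a greedily chosen $\cV_j$ can admit no admissible extension $v$ even when the proposition is true (one can get stuck), so the failing locus need not be proper for that particular $\cV_j$, and some global reorganisation would be required which you do not supply.

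The paper's proof avoids your trichotomy entirely. It runs the (Wong-type) recursion \eqref{eq:def-RFG} defining nested chains $\cR_j,\cF_j,\cG_j$, shows these stabilise at a common core $\cR_\alpha$ on which $\tau_1,\tau_2$ have equal image $\cG_\alpha$ (Lemma~\ref{lemma:stabilize}), and then \emph{builds $\cV^*$ outward along the chain} \eqref{eq:chain}: one of Lemmas~\ref{lemma:W_alpha}, \ref{lemma:RjcapFj->Rj}, \ref{lemma:Rj+1->R_j_cap_F_j} plants an initial doubling subspace at the appropriate depth, and Lemmas~\ref{lemma:enlarge_RcapF_Z->R_W}, \ref{lemma:enlarge_R_W->RcapF_Z} extend it one level at a time up to $\cR_0=\bbC^n$. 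Lemma~\ref{lemma:lemma5ofHS} enters only in the $\cW_\alpha$-initialisation, applied to the isomorphism $(\tau_1|_\cH)^{-1}\tau_2|_\cH$ of a subspace $\cH\subset\cR_\alpha$ cut out by Lemma~\ref{lemma:subspaces_intersect_subspace}; the remaining steps are elementary dimension counts, each again certified by Lemma~\ref{lemma:subspaces_intersect_subspace}. This filtration is precisely what replaces your Kronecker bookkeeping and is the technical novelty of the paper. If you want to complete your proof, either carry out the singular-block dimension count in full, or --- closer to the paper's method --- replace the greedy step by this filtration: initialise on the stabilised core and extend outward.
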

	With the subspace $\cV^*$ of Proposition \ref{prop:E<=n-d} we get that the set $\mathscr{U}_1$ of subspaces $\cV \in \Gr_{\bbH}(d,n)$ for which $\dim(\tau_1(\cV)+\tau_2(\cV))=2d$ is non-empty. Moreover, $\mathscr{U}_1$ is open. To see this, let $V \in\bbH^{n\times d}$ have $\cV \in \mathscr{U}_1$ as its column space. Then $\dim(\tau_1(\cV)+\tau_2(\cV))=2d$ is equivalent to $\rank [T_1 V \ \, T_2 V]=2d$, which in turn is equivalent to the non-vanishing of some $2n\times 2n$ minor of $[T_1 V \, \, T_2 V]$. Each such minor is a quadratic polynomial in the Pl\"ucker coordinates of $\cV$, so that their non-simultaneous vanishing indeed gives an open set of $\Gr_{\bbH}(d,n)$. We next show that $\hsp(\cV,\{\tau_1,\tau_2\})$ holds for every $\cV\in \mathscr{U}_1$. Indeed, let $v_1,v_2\in\cV$ be such that $\tau_1(v_1)=\tau_2(v_2)$. But $\dim(\tau_1(\cV)+\tau_2(\cV))=2d$ implies  $\tau_1(\cV)\cap\tau_2(\cV)=0$ and also $\dim(\tau_1(\cV))=\dim(\tau_2(\cV))=\dim(\cV)=d$. So $\ker(\tau_1)\cap\cV=0$ and $\ker(\tau_2)\cap\cV=0$. We conclude that $\tau_1(v_1)=\tau_2(v_2)=0$ and moreover $v_1=v_2=0$.
	
	We tackle the second case $\dim(\cE_{(\tau_1,\tau_2),1})> n-d$ by the following proposition, proved in \S \ref{subsubsection:proof-prop:E>n-d}.
	\begin{prop}\label{prop:E>n-d}
		In addition to the hypotheses of Theorem \ref{theorem:HS}, 
		suppose $\dim(\cE_{(\tau_1,\tau_2),1})=n-d_0>n-d$. There are two subspaces $\cV_0^* \subset \cV^*$ of $\bbH^n$ of dimension $d_0$ and $d$ respectively so that $\dim(\tau_1(\cV_0^*)+\tau_2(\cV^*))=d_0+d$.
	\end{prop}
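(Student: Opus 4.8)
\emph{Proof proposal.} The plan is to reduce to the case already covered by Proposition \ref{prop:E<=n-d}, applied with the smaller integer $d_0$ in place of $d$, and then to enlarge the subspace it produces by a direct transversality argument. First I would check that Proposition \ref{prop:E<=n-d} is applicable to the pair $(\tau_1,\tau_2)$ with target dimension $d_0$: since $d_0<d$, the hypothesis $\rank(\tau)\ge 2d$ of Theorem \ref{theorem:HS} yields $\rank(\tau)\ge 2d_0$, and \eqref{eq:HS-condition_U} yields $d_0\le d\le\codim(\cU_{\tau_1,\tau_2})$; moreover $\dim(\cE_{(\tau_1,\tau_2),1})\le n-d_0$ holds, in fact with equality. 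Proposition \ref{prop:E<=n-d} then provides a $d_0$-dimensional subspace $\cV_0^*$ of $\bbH^n$ with $\dim\big(\tau_1(\cV_0^*)+\tau_2(\cV_0^*)\big)=2d_0$, which forces $\dim(\tau_1(\cV_0^*))=\dim(\tau_2(\cV_0^*))=d_0$, $\tau_1(\cV_0^*)\cap\tau_2(\cV_0^*)=0$, and $\cV_0^*\cap\ker(\tau_2)=0$.

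Next I would extend $\cV_0^*$ to the required $\cV^*$ by working relative to the single subspace $\cK:=\tau_2^{-1}\big(\tau_1(\cV_0^*)\big)$. Its dimension is $\dim(\ker\tau_2)+\dim\big(\im(\tau_2)\cap\tau_1(\cV_0^*)\big)\le(n-2d)+d_0$, using $\rank(\tau_2)\ge 2d$ and $\dim(\tau_1(\cV_0^*))=d_0$, and one checks $\cV_0^*\cap\cK=0$ because $v\in\cV_0^*$ with $\tau_2(v)\in\tau_1(\cV_0^*)$ gives $\tau_2(v)\in\tau_2(\cV_0^*)\cap\tau_1(\cV_0^*)=0$, hence $v\in\cV_0^*\cap\ker(\tau_2)=0$. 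Since $\dim(\cK)+d\le(n-2d+d_0)+d\le n$, I can pick a $d$-dimensional $\cV^*\supset\cV_0^*$ with $\cV^*\cap\cK=0$ (pass to $\bbH^n/\cV_0^*$, where the image of $\cK$ still has dimension $\dim(\cK)$ as $\cK\cap\cV_0^*=0$, and choose a $(d-d_0)$-dimensional subspace avoiding it, which the dimension count permits). Finally, $\ker(\tau_2)\subseteq\cK$ gives $\cV^*\cap\ker(\tau_2)=0$, so $\dim(\tau_2(\cV^*))=d$; and $u\in\tau_1(\cV_0^*)\cap\tau_2(\cV^*)$, say $u=\tau_2(v)$ with $v\in\cV^*$, forces $v\in\cK\cap\cV^*=0$, so $\tau_1(\cV_0^*)\cap\tau_2(\cV^*)=0$. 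Hence $\dim\big(\tau_1(\cV_0^*)+\tau_2(\cV^*)\big)=d_0+d$, as required.

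The one genuinely delicate point is the reduction in the first step: one must confirm that Proposition \ref{prop:E<=n-d} may be reinstantiated verbatim with $d$ replaced by $d_0$ — in particular that its extra hypothesis $\dim(\cE_{(\tau_1,\tau_2),1})\le n-d_0$ is met (it is, but only at the boundary) — and that this does not introduce a circular dependency, i.e.\ that Proposition \ref{prop:E<=n-d} is proved independently of Proposition \ref{prop:E>n-d}. Everything after that is elementary linear algebra; the only place needing a little care is arranging $\dim(\tau_2(\cV^*))=d$ and $\tau_1(\cV_0^*)\cap\tau_2(\cV^*)=0$ simultaneously, which is exactly what isolating the preimage subspace $\cK$ accomplishes.
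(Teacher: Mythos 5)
Your proof is correct and follows essentially the same route as the paper: both invoke Proposition \ref{prop:E<=n-d} with $d$ replaced by $d_0$ (so your worry about circularity is moot --- that proposition is proved independently of this one, and the paper itself reinstantiates it with $d_0$ in exactly this way), and then extend $\cV_0^*$ by a $(d-d_0)$-dimensional piece chosen to avoid a suitable $\tau_2$-preimage. The only cosmetic difference is that the paper applies Lemma \ref{lemma:subspaces_intersect_subspace} to a $\cW$ avoiding $\tau_2^{-1}\big(\tau_1(\cV_0^*)+\tau_2(\cV_0^*)\big)$, $\cV_0^*$ and $\ker(\tau_2)$ and sets $\cV^*=\cW+\cV_0^*$, whereas you work with the slightly smaller $\cK=\tau_2^{-1}\big(\tau_1(\cV_0^*)\big)$ and a quotient-space argument; the dimension counts close the same way in both.
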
	 With $\cV_0^*$ and $\cV^*$ of Proposition \ref{prop:E>n-d} we know that
	\begin{align*}
		\rU_2:= \{ (\cV_0,\cV)\in \F_{\bbH}(d_0,d,n): 
		\dim_\bbH(\tau_1(\cV_0)+\tau_2(\cV))=d_0+d \}
	\end{align*}
	is not empty. By a similar argument that showed $\mathscr{U}_1$ is open, $\mathscr{U}_2$ is also open in $\F_{\bbH}(d_0,d,n)$. Now, Lemma \ref{lem:flag} shows that $\mathscr{U}_2$ induces a non-empty open set $\mathscr{U}_3 \subset \Gr_{\bbH}(d,n)$ such that for every $\cV \in \mathscr{U}_3$ there exists a $\cV_0 \in \Gr_{\bbH}(d_0,n)$ with $(\cV_0,\cV) \in \mathscr{U}_2$. We show that $\hsp(\cV,\{\tau_1,\tau_2\})$ holds for any $\cV \in \mathscr{U}_3$. So suppose that $\tau_1(v_1)=\tau_2(v_2)$ with $v_1,v_2\in\cV$ and let $\cV_0$ be as above. Thus $\dim(\tau_1(\cV_0)+\tau_2(\cV))=d_0+d$ and in particular $\dim(\tau_1(\cV_0)+\tau_2(\cV_0))=2d_0$. Necessarily $\cV_0\cap \cE_{(\tau_1,\tau_2),1}=0$.  
	Now by hypothesis, $\cV \cap \cE_{(\tau_1,\tau_2),1}$ has dimension at least $d+n-d_0-n=d-d_0$. In fact, we must have $\dim(\cV\cap\cE_{(\tau_1,\tau_2),1})=d-d_0$ otherwise the subspaces $\cV_0$ and $\cV\cap\cE_{(\tau_1,\tau_2),1}$ of $\cV$ must intersect, contradicting the fact $\cV_0$ does not intersect $\cE_{(\tau_1,\tau_2),1}$. We conclude that  $\cV_0$ and $\cV \cap \cE_{(\tau_1,\tau_2),1}$ are subspace complements in $\cV$. Write $v_1$ as the sum of two vectors $v_0$ and $u$ in $\cV_0$ and $\cV\cap\cE_{(\tau_1,\tau_2),1}$ respectively. Then $\tau_1(v_1)=\tau_2(v_2)$ implies $\tau_1(v_0)=\tau_2(v_2)-\tau_1(u)=\tau_2(v_2-u)$. Since $v_0\in\cV_0$, $(v_2-u)\in\cV$ and $(\cV_0,\cV)\in\rU_2$, the definition of $\mathscr{U}_2$ implies that $v_0=0$ and $v_2-u=0$. That is, $v_1=u=v_2$. \qed	

	\subsubsection{Proof of Proposition \ref{prop:E<=n-d}}\label{subsubsection:proof_prop:E<=n-d}	
	
	For $\bbH=\bbR$ we have $T_1,T_2\in\bbR^{n\times n}$ and it suffices to show that $\rank[T_1V\, \, T_2 V]=2d$ for some $V\in\bbR^{n\times d}$. This is equivalent to showing some $2d\times 2d$ minor of $[T_1 V\, \, T_2 V]$ is a nonzero polynomial in the entries of $A$. This is certainly true if the evaluation of that minor is non-zero for some $V^*\in\bbC^{n\times d}$. Hence it suffices to prove Proposition \ref{prop:E<=n-d} for $\bbH=\bbC$, which will be the field of choice for the rest of this section. 
	
	We start by introducing a sequence of subspaces and study some useful properties. Set $\cR_0, \cF_0$ equal to $\bbC^n$. For any non-negative integer $j$ define
	\begin{align}
		\cG_{j+1}&=\tau_1(\cR_j\cap\cF_j)\cap \tau_2(\cR_j\cap\cF_j),\nonumber\\
		\cR_{j+1}&=\tau_1^{-1}(\cG_{j+1})\cap \cR_j\cap\cF_j,\label{eq:def-RFG} \\
		\cF_{j+1}&=\tau_2^{-1}(\cG_{j+1})\cap \cR_j\cap\cF_j.\nonumber
	\end{align} Part iv) of the next lemma gives some motivation behind the definition of recursions\footnote{Our recursions bear some resemblance with the somewhat less complicated Wong sequence \cite{Wong-1974}, which for example was used in \cite{Berger-LAA2012Quasi-Weierstrab,Berger-SIAM-J-MAA2012}. A detailed comparison of the two constructions is left as future work.} \eqref{eq:def-RFG}:
	\begin{lemma}\label{lemma:chain-property}
		For any non-negative integer $j$ we have i) $\cR_{j+1}\subset \cR_j\cap\cF_j
		\subset\cR_j$, ii) $\cF_{j+1}\subset \cR_j\cap\cF_j\subset \cF_j$, 
		iii) $\cG_{j+2}\subset\cG_{j+1}$ and iv) $\tau_1(\cR_{j+1})=\tau_2(\cF_{j+1})=\cG_{j+1}$.
	\end{lemma}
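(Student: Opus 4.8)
The plan is to prove parts (i)--(iv) by a simultaneous induction on $j$, since the four statements are tightly intertwined through the recursions \eqref{eq:def-RFG}. First I would record the key algebraic facts about preimages that will be used throughout: for any linear map $\tau$ and any subspace $\cW$ one has $\tau(\tau^{-1}(\cW)) = \cW \cap \im(\tau)$, and $\tau^{-1}(\cW) \supset \ker(\tau)$; moreover $\tau^{-1}$ is monotone, i.e. $\cW \subset \cW'$ implies $\tau^{-1}(\cW) \subset \tau^{-1}(\cW')$. Parts (i) and (ii) are essentially immediate from the definitions: by \eqref{eq:def-RFG}, $\cR_{j+1} = \tau_1^{-1}(\cG_{j+1}) \cap \cR_j \cap \cF_j$ is by construction contained in $\cR_j \cap \cF_j$, and the latter is contained in $\cR_j$; symmetrically for $\cF_{j+1}$. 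This handles (i) and (ii) with no induction needed at all.

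Next I would establish (iv), that $\tau_1(\cR_{j+1}) = \tau_2(\cF_{j+1}) = \cG_{j+1}$. The inclusions $\tau_1(\cR_{j+1}) \subset \cG_{j+1}$ and $\tau_2(\cF_{j+1}) \subset \cG_{j+1}$ follow directly since $\cR_{j+1} \subset \tau_1^{-1}(\cG_{j+1})$ and $\cF_{j+1} \subset \tau_2^{-1}(\cG_{j+1})$. For the reverse inclusion, take $g \in \cG_{j+1} = \tau_1(\cR_j \cap \cF_j) \cap \tau_2(\cR_j \cap \cF_j)$; then $g = \tau_1(w)$ for some $w \in \cR_j \cap \cF_j$, and this $w$ lies in $\tau_1^{-1}(\cG_{j+1})$ because $\tau_1(w) = g \in \cG_{j+1}$, hence $w \in \cR_{j+1}$ and $g \in \tau_1(\cR_{j+1})$; the argument for $\tau_2$ and $\cF_{j+1}$ is identical. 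So (iv) holds unconditionally.

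Finally, (iii) $\cG_{j+2} \subset \cG_{j+1}$ is where the induction genuinely enters, and I expect this to be the main (though still modest) obstacle. By definition $\cG_{j+2} = \tau_1(\cR_{j+1} \cap \cF_{j+1}) \cap \tau_2(\cR_{j+1} \cap \cF_{j+1})$. Using monotonicity of $\tau_1$ under inclusion and the fact that $\cR_{j+1} \cap \cF_{j+1} \subset \cR_{j+1}$, we get $\tau_1(\cR_{j+1} \cap \cF_{j+1}) \subset \tau_1(\cR_{j+1}) = \cG_{j+1}$ by part (iv). Intersecting, $\cG_{j+2} \subset \cG_{j+1}$, as desired; note this only uses part (iv) at level $j+1$, which was already proven unconditionally, so in fact no induction is needed and all four parts follow in the order (i), (ii), (iv), (iii). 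I would present the write-up in that order, stating explicitly the elementary preimage identities at the outset so the chain of inclusions reads cleanly.
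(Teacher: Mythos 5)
Your proposal is correct and takes essentially the same approach as the paper: parts (i) and (ii) are read off the definitions, and (iv) is established by the two inclusions exactly as you describe. The only cosmetic difference is in (iii): the paper deduces $\cG_{j+2}\subset\cG_{j+1}$ directly from $\cR_{j+1}\cap\cF_{j+1}\subset\cR_j\cap\cF_j$ (which follows from (i) and (ii)) and then applying $\tau_1$ and $\tau_2$, without invoking (iv), whereas you route through $\cR_{j+1}\cap\cF_{j+1}\subset\cR_{j+1}$ and $\tau_1(\cR_{j+1})=\cG_{j+1}$; both are one-line arguments of equal difficulty, and your observation that no induction is actually needed matches the paper.
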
 
	\begin{proof}
	i) and ii) are directly by definition and so is $\cR_{j+1}\cap\cF_{j+1}\subset \cR_{j}\cap\cF_{j}$. This latter implies
		\begin{align*}			\cG_{j+2}=\tau_1(\cR_{j+1}\cap\cF_{j+1})\cap\tau_2(\cR_{j+1}\cap\cF_{j+1})\subset \tau_1(\cR_j\cap\cF_j)\cap \tau_2(\cR_j\cap\cF_j)=\cG_{j+1}.
		\end{align*}
We now show that $\tau_1(\cR_{j+1})= \cG_{j+1}$. From $\cR_{j+1}\subset \tau_1^{-1}(\cG_{j+1})$ we have $\tau_1(\cR_{j+1})\subset \tau_1(\tau_1^{-1}(\cG_{j+1})) \subset \cG_{j+1}$. For the reverse direction $\cG_{j+1}\subset \tau_1(\cR_{j+1})$ let $z\in\cG_{j+1}= \tau_1(\cR_j\cap\cF_j)\cap \tau_2(\cR_j\cap\cF_j)$. In particular $z\in\tau_1(\cR_j\cap\cF_j)$ so there is some $w\in\cR_j\cap\cF_j$ with $\tau_1(w)=z$. Then $w\in\tau_1^{-1}(z)\cap \cR_j\cap\cF_j$. But $\tau_1^{-1}(z)\subset\tau_1^{-1}(\cG_{j+1})$ and so $w\in\tau_1^{-1}(\cG_{j+1})\cap \cR_j\cap\cF_j=\cR_{j+1}$. Hence $z\in\tau_1(\cR_{j+1})$. A similar derivation gives $\tau_2(\cF_{j+1})=\cG_{j+1}$. \qed
	\end{proof}

Lemma \ref{lemma:chain-property} gives two subspace chains
		$\cdots\subset \cR_{j+1}\subset\cR_{j}\subset\cdots\subset\cR_0$ and $\cdots\subset \cF_{j+1}\subset\cF_{j}\subset\cdots\subset\cF_0$. These stabilize at a common subspace: 
	\begin{lemma}\label{lemma:stabilize}
		There is a non-negative integer $\alpha$ such that $\cR_{j}=\cF_{j}$ for every $j \ge \alpha$ and $\tau_1(\cR_{\alpha})=\tau_2(\cR_{\alpha}) =\cG_\alpha$.
	\end{lemma}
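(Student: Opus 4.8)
The plan is to exploit the fact that $\bbC^n$ is finite-dimensional, so that any strictly descending chain of subspaces must terminate. By parts i) and ii) of Lemma \ref{lemma:chain-property}, $(\cR_j)_{j\ge 0}$ and $(\cF_j)_{j\ge 0}$ are descending chains of $\bbC$-subspaces of $\bbC^n$, hence the non-increasing sequences of non-negative integers $\dim(\cR_j)$ and $\dim(\cF_j)$ are eventually constant; and since a proper inclusion of subspaces strictly lowers the dimension, there is an integer $\alpha \ge 1$ such that $\cR_{j+1}=\cR_j$ and $\cF_{j+1}=\cF_j$ for all $j\ge\alpha$. (We insist on $\alpha\ge 1$ so that $\cG_\alpha$ is defined and Lemma \ref{lemma:chain-property}iv) can be invoked at index $\alpha$; this is harmless since we may always enlarge $\alpha$.)

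Next I would show $\cR_j=\cF_j$ for every $j\ge\alpha$. Fix such a $j$. Applying Lemma \ref{lemma:chain-property}i) at index $j$ and using $\cR_{j+1}=\cR_j$ gives $\cR_j=\cR_{j+1}\subset\cR_j\cap\cF_j\subset\cF_j$, so $\cR_j\subset\cF_j$. Symmetrically, Lemma \ref{lemma:chain-property}ii) together with $\cF_{j+1}=\cF_j$ yields $\cF_j=\cF_{j+1}\subset\cR_j\cap\cF_j\subset\cR_j$, so $\cF_j\subset\cR_j$. Combining the two inclusions gives $\cR_j=\cF_j$.

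Finally, applying Lemma \ref{lemma:chain-property}iv) with $j=\alpha-1\ge 0$ gives $\tau_1(\cR_\alpha)=\cG_\alpha$ and $\tau_2(\cF_\alpha)=\cG_\alpha$; since $\cR_\alpha=\cF_\alpha$ by the previous paragraph, we conclude $\tau_1(\cR_\alpha)=\tau_2(\cR_\alpha)=\cG_\alpha$, which is the claim. I do not expect a genuine obstacle here: the argument is a routine descending-chain-stabilization plus bookkeeping, and the only point requiring a little care is the simultaneous choice of $\alpha$ past the stabilization index of both chains and with $\alpha\ge 1$.
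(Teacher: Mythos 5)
Your proof is correct and follows essentially the same route as the paper's: stabilize both descending chains, deduce $\cR_j=\cF_j$ from Lemma \ref{lemma:chain-property}\,i)--ii) together with the stabilization equalities, and then invoke Lemma \ref{lemma:chain-property}\,iv) to get $\tau_1(\cR_\alpha)=\tau_2(\cR_\alpha)=\cG_\alpha$. Your explicit insistence on $\alpha\ge 1$ (so that $\cG_\alpha$ is defined and part iv) can be applied at index $\alpha-1$) is a small tidiness that the paper leaves implicit, but the substance is identical.
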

	\begin{proof}
Since the subspaces $\cR_0$ and $\cF_0$ are of finite dimension $m$, both chains stabilize, that is, there exist non-negative integers $\alpha_1$ and $\alpha_2$ such that for any $j_1\geq\alpha_1$ and $j_2\geq\alpha_2$ we have $\cR_{j_1}=\cR_{j_1+1}$ and $\cF_{j_2}=\cF_{j_2+1}$. Let $\alpha:=\max \{\alpha_1,\alpha_2\}$. We then have $\cR_{\alpha}=\cR_{\alpha+1}$ and $\cF_{\alpha}=\cF_{\alpha+1}$. Lemma \ref{lemma:chain-property} and the definition of $\alpha$ give $\cR_{\alpha+1}\subset \cR_\alpha\cap\cF_\alpha\subset \cR_{\alpha}=\cR_{\alpha+1}$. This implies $\cR_{\alpha}=\cR_\alpha\cap\cF_\alpha$. Similarly we get $\cF_{\alpha}=\cR_\alpha\cap\cF_\alpha$. It follows that $\cR_{\alpha}=\cF_{\alpha}$. The equality $\tau_1(\cR_{\alpha})=\tau_2(\cR_{\alpha})= \cG_\alpha$ is immediate from Lemma \ref{lemma:chain-property}. \qed			
	\end{proof}
The strategy of the proof is to make use of the $\cR$ chain ascending from $\cR_\alpha$
	\begin{align}\label{eq:chain}
		\cR_\alpha\subset\cR_{\alpha-1}\cap\cF_{\alpha-1}\subset \cR_{\alpha-1}\subset\cdots\subset \cR_1\subset \cR_0\cap\cF_0\subset \cR_0=\bbC^n
	\end{align} in the following fashion. First we show that there is always a $j_0$, such that either there is a subspace $\cW_{j_0} \subset \cR_{j_0}$ of dimension $[\dim(\cR_{j_0})-(n-d)]$ with $\dim(\tau_1(\cW_{j_0})+\tau_2(\cW_{j_0}))=2\dim(\cW_{j_0})$ or there is a subspace $\cZ_{j_0} \subset \cR_{j_0} \cap \cF_{j_0}$ of dimension $[\dim(\cR_{j_0}\cap\cF_{j_0})-(n-d)]$ with $\dim(\tau_1(\cZ_{j_0})+\tau_2(\cZ_{j_0}))=2\dim(\cZ_{j_0})$. Then we describe devices to pass either from $\cW_{j_0}$ to $\cZ_{j_0-1}$ or from $\cZ_{j_0}$ to $\cW_{j_0-1}$, all the while preserving the properties i) $\dim(\tau_1(\cZ_{{j_0}-1})+\tau_2(\cZ_{{j_0}-1}))=2\dim(\cZ_{{j_0}-1})$ and $\dim(\cZ_{j_0-1}) =[\dim(\cR_{j_0-1}\cap\cF_{j_0-1})-(n-d)]$  or ii) $\dim(\tau_1(\cW_{{j_0}-1})+\tau_2(\cW_{{j_0}-1}))=2\dim(\cW_{{j_0}-1})$ and $\dim(\cW_{j_0-1}) =[\dim(\cR_{j_0-1})-(n-d)]$. Then inductively $\cV^*:= \cW_0$ will satisfy the statement of the proposition since $\cR_0 = \bbC^n$. Below, we distinguish between three cases, in two out of which the existence of a $\cW_{j_0}$ is proved while the third one proves the existence of a $\cZ_{j_0}$. The next lemma handles the case where $j_0$ can be taken to be $\alpha$. 
		
\begin{lemma}[$\cW_\alpha$-Initialization]\label{lemma:W_alpha}
In addition to the hypotheses of Proposition \ref{prop:E<=n-d}, suppose $\dim(\cR_\alpha)>n-d$. Then there is a subspace $\cW_\alpha$ of $\cR_{\alpha}$ of dimension $[\dim(\cR_\alpha)-(n-d)]$ such that $\dim(\tau_1(\cW_\alpha)+\tau_2(\cW_\alpha))=2\dim(\cW_\alpha)$.
	\end{lemma}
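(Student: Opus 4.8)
The plan is to localize everything to the stable space $\cR_\alpha$ and then, after choosing a suitable complement of $\ker\tau_2\cap\cR_\alpha$ inside it, reduce to a single endomorphism so that Lemma \ref{lemma:lemma5ofHS} applies. Write $g:=\dim\cG_\alpha$ and $r:=\dim\cR_\alpha$. By Lemma \ref{lemma:stabilize} both $\tau_1$ and $\tau_2$ map $\cR_\alpha$ onto $\cG_\alpha$, so $\cK_1:=\ker\tau_1\cap\cR_\alpha$ and $\cK_2:=\ker\tau_2\cap\cR_\alpha$ have a common dimension $k$, and $r=k+g$. Set $e:=r-(n-d)$, which is positive by the standing hypothesis $\dim\cR_\alpha>n-d$; the goal is to produce $\cW_\alpha\subseteq\cR_\alpha$ of dimension $e$ with $\dim(\tau_1(\cW_\alpha)+\tau_2(\cW_\alpha))=2e$. (We work over $\bbC$, as already reduced in \S\ref{subsubsection:proof_prop:E<=n-d}.)

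First I would record the dimension arithmetic. Since $\rank(\tau_1)\ge 2d$, one has $k\le\dim\ker\tau_1\le n-2d$, and trivially $r\le n$, so $g+2k=r+k\le n+(n-2d)=2(n-d)$; rearranging gives $g\ge 2e$, and also $g-e=n-d-k>0$. Next I would choose the complement. Identifying $\cR_\alpha\cong\bbC^{r}$, I would apply Lemma \ref{lemma:subspaces_intersect_subspace} to the homogeneous subvarieties $\cK_1$, $\cK_2$, $\cE_{(\tau_1,\tau_2),1}\cap\cR_\alpha$ and $\cU_{\tau_1,\tau_2}^{\cl}\cap\cR_\alpha$ of $\cR_\alpha$ (the last is homogeneous because $\cU_{\tau_1,\tau_2}^{\cl}$ is and $\cR_\alpha$ is linear), obtaining a subspace $\cC\subseteq\cR_\alpha$ of dimension $g$ with minimal intersections: $\cC\cap\cK_1=\cC\cap\cK_2=0$, hence $\cC$ is a complement of $\cK_2$ in $\cR_\alpha$ since $\dim\cC+\dim\cK_2=g+k=r$; and $\dim(\cC\cap\cE_{(\tau_1,\tau_2),1})$ and $\dim(\cC\cap\cU_{\tau_1,\tau_2}^{\cl})$ are each at most $(n-d)+g-r=n-d-k=g-e$, using $\dim\cE_{(\tau_1,\tau_2),1}\le n-d$ and $\dim\cU_{\tau_1,\tau_2}\le n-d$ (the latter is exactly \eqref{eq:HS-condition_U}).

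Since $\cC$ is a complement of $\cK_2$, the restriction $\tau_2|_\cC:\cC\to\cG_\alpha$ is an isomorphism, and because $\tau_1(\cC)\subseteq\tau_1(\cR_\alpha)=\cG_\alpha$ I can define the endomorphism $\beta:=(\tau_2|_\cC)^{-1}\circ(\tau_1|_\cC)$ of $\cC$. Then $\tau_1|_\cC=\tau_2|_\cC\circ\beta$, so for any subspace $\cW\subseteq\cC$ one gets $\tau_1(\cW)+\tau_2(\cW)=\tau_2(\beta(\cW)+\cW)$, whence $\dim(\tau_1(\cW)+\tau_2(\cW))=\dim(\cW+\beta(\cW))$ by injectivity of $\tau_2|_\cC$; and the $\lambda$-eigenspace of $\beta$ is $\{w\in\cC:\beta(w)=\lambda w\}=\cC\cap\cE_{(\tau_1,\tau_2),\lambda}$. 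The one point needing a short argument is a uniform-in-$\lambda$ bound on these eigenspaces: each $\cE_{(\tau_1,\tau_2),\lambda}$ is a linear space contained in $\cY_{\tau_1,\tau_2}$, and since $\cU_{\tau_1,\tau_2}=\cY_{\tau_1,\tau_2}\setminus\cZ_{\tau_1,\tau_2}$ with $\cZ_{\tau_1,\tau_2}\subseteq\cY_{\tau_1,\tau_2}$ one has $\cY_{\tau_1,\tau_2}=\cU_{\tau_1,\tau_2}^{\cl}\cup\cE_{(\tau_1,\tau_2),1}\cup\ker\tau_1\cup\ker\tau_2$; by irreducibility $\cE_{(\tau_1,\tau_2),\lambda}$ lies in one of these four pieces, so $\dim(\cC\cap\cE_{(\tau_1,\tau_2),\lambda})$ is at most the maximum of $\dim(\cC\cap\cU_{\tau_1,\tau_2}^{\cl})$, $\dim(\cC\cap\cE_{(\tau_1,\tau_2),1})$, $\dim(\cC\cap\ker\tau_1)$, $\dim(\cC\cap\ker\tau_2)$, which by the choice of $\cC$ is $\le g-e$. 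With $g\ge 2e$ and all eigenspaces of $\beta$ of dimension $\le g-e$, Lemma \ref{lemma:lemma5ofHS} (applied to $\beta$ on $\cC\cong\bbC^{g}$ with target dimension $e$) yields $\cW_\alpha\subseteq\cC\subseteq\cR_\alpha$ of dimension $e$ with $\dim(\cW_\alpha+\beta(\cW_\alpha))=2e$, i.e.\ $\dim(\tau_1(\cW_\alpha)+\tau_2(\cW_\alpha))=2e=2\dim(\cW_\alpha)$, which is the claim.

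I expect the main obstacle to be choosing the complement $\cC$ correctly: it must simultaneously be a genuine complement of $\cK_2$ (so that $\beta$ is defined via the isomorphism $\tau_2|_\cC$) and be generic with respect to $\cU_{\tau_1,\tau_2}^{\cl}$, $\cE_{(\tau_1,\tau_2),1}$ and $\cK_1$, so that the eigenspaces of $\beta$ remain of dimension $\le g-e$; reconciling these two demands is precisely what Lemma \ref{lemma:subspaces_intersect_subspace} provides, and it works out only because of the arithmetic identity $g+2k=r+k\le 2(n-d)$ that forces $g\ge 2e$. A secondary subtlety is the decomposition $\cY_{\tau_1,\tau_2}=\cU_{\tau_1,\tau_2}^{\cl}\cup\cE_{(\tau_1,\tau_2),1}\cup\ker\tau_1\cup\ker\tau_2$ together with the irreducibility step, which is what lets a single choice of $\cC$ control $\cE_{(\tau_1,\tau_2),\lambda}$ for all $\lambda$ at once.
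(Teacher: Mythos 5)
Your proof is correct and follows essentially the same route as the paper: choose (via Lemma~\ref{lemma:subspaces_intersect_subspace} applied inside $\cR_\alpha$) a $g$-dimensional complement $\cC$ of the kernel, transport $\tau_1$ and $\tau_2$ to a single endomorphism $\beta$ of $\cC$, verify the two hypotheses of Lemma~\ref{lemma:lemma5ofHS} (here $g\ge 2e$ and $\dim\cE_{\beta,\lambda}\le g-e$), and pull the resulting subspace back through the isomorphism. The only cosmetic differences are that you package the eigenspace bound via irreducibility of $\cE_{(\tau_1,\tau_2),\lambda}$ and the decomposition $\cY_{\tau_1,\tau_2}=\cU^{\cl}_{\tau_1,\tau_2}\cup\cE_{(\tau_1,\tau_2),1}\cup\ker\tau_1\cup\ker\tau_2$ (the paper argues pointwise that a nonzero $v\in\cH$ avoids the kernels), and that you use $(\tau_2|_\cC)^{-1}\circ\tau_1|_\cC$ where the paper uses its inverse.
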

	
	\begin{proof}
		By Lemma \ref{lemma:stabilize} and the rank-plus-nullity theorem we have
		\begin{align*}
		\dim(\ker(\tau_1|_{\cR_\alpha}))=\dim(\cR_\alpha)-\dim(\cG_{\alpha})=\dim(\ker(\tau_2|_{\cR_\alpha}))
		\end{align*} Moreover, by hypothesis $\rank(\tau_1|_{\cR_\alpha}) \ge \dim(\cR_\alpha)-(n-2d)>d$. Hence $$(n-d)+\dim(\cG_{\alpha})-\dim(\cR_{\alpha})=\rank(\tau_1|_{\cR_\alpha})-d>0$$ Since $\dim (\cU_{\tau_1,\tau_2}^{\cl}) = \dim (\cU_{\tau_1,\tau_2}) \le n-d$ and $\dim (\cE_{(\tau_1,\tau_2),1}) \le n-d$ as well,  Lemma \ref{lemma:subspaces_intersect_subspace} gives a subspace $\cH$ of $\cR_\alpha$ of dimension $\dim(\cG_{\alpha})$, such that $\cH \cap \ker(\tau_1|_{\cR_\alpha}) = \cH \cap \ker(\tau_2|_{\cR_\alpha})=0$ and 
					\begin{align}
	    	\dim(\cU^{\cl}_{\tau_1,\tau_2}\cap\cH)\le (n-d)+\dim(\cG_\alpha)-\dim(\cR_\alpha)\label{eq:UcapH} \\
			\dim(\cE_{(\tau_1,\tau_2),1}\cap\cH)\le (n-d)+\dim(\cG_\alpha)-\dim(\cR_\alpha)\label{eq:E1capH} 			
		\end{align} Since $\tau_1(\cH)=\cG_\alpha=\tau_2(\cH)$ we have that $\tau_\cH:=(\tau_1|_\cH)^{-1}\tau_2|_\cH$ is an isomorphism of $\cH$. We are going to get our subspace $\cW_\alpha$ by applying Lemma \ref{lemma:lemma5ofHS} with ambient space $\cH$, $\bbC$-linear map $\tau_\cH$ and subspace dimension $[\dim(\cR_\alpha)-(n-d)]$, this number being positive by hypothesis. There are two things we need to check, the first being that $\dim (\cH) \ge 2 [\dim(\cR_\alpha)-(n-d)]$. Indeed, this is true because  
	 \begin{align*}
			\dim(\cH)\geq 2[\dim(\cR_\alpha)-(n-d)]
			&\Leftrightarrow 2n-2d\geq \dim(\cR_\alpha)+\dim(\cR_{\alpha})-\dim(\cG_\alpha)\\
			&\Leftrightarrow 2n-2d\geq \dim(\cR_\alpha)+\dim(\ker(\tau_1|_{\cR_\alpha})) \\
			&\Leftarrow 2n-2d\geq \dim(\cR_\alpha)+\dim(\ker(\tau_1)) \\
			&\Leftrightarrow (\rank(\tau_1)-2d) + (n-\dim(\cR_{\alpha}))\geq 0
		\end{align*} and the last inequality is true by hypothesis. The second thing that we need to check is that for any $\lambda\in\bbC$ 
		\begin{align*}
			\dim(\cE_{\tau_\cH,\lambda})\leq\dim(\cH)-[\dim(\cR_{\alpha})-(n-d)]
		\end{align*}
When $\lambda=0$, we have $\cE_{\tau_\cH,0} = \ker(\tau_\cH)=0$, because $\tau_\cH$ is an isomorphism. When $\lambda\neq 0$, 
$v \in \cE_{\tau_\cH,\lambda}$ is equivalent to $\tau_1|_\cH(v)=\lambda\tau_2|_\cH(v)$ or equivalently $v\in(\cU_{\tau_1,\tau_2}\cap\cH) \cup (\cE_{(\tau_1,\tau_2),1} \cap \cH)$. This shows that $\cE_{\tau_\cH,\lambda}$ lives in $(\cU_{\tau_1,\tau_2}^{\cl}\cap\cH) \cup (\cE_{(\tau_1,\tau_2),1} \cap \cH)$ and as per \eqref{eq:UcapH} and \eqref{eq:E1capH} that latter set has dimension at most $\dim (\cH) -[\dim(\cR_\alpha)-(n-d)]$. 

Now Lemma \ref{lemma:lemma5ofHS} gives a subspace $\cW_\alpha$ of $\cH$ of dimension $[\dim(\cR_{\alpha})-(n-d)]$ such that $\dim(\cW_\alpha+\tau_\cH(\cW_\alpha))=2\dim(\cW_\alpha)$. Since $\tau_1|_\cH$ is an isomorphism from $\cH$ to $\cG_{\alpha}$ and $\cW_\alpha+\tau_\cH(\cW_\alpha)$ is a subspace of $\cH$, we have that $\cW_\alpha+\tau_\cH(\cW_\alpha) \cong \tau_1|_\cH(\cW_\alpha+\tau_\cH(\cW_\alpha)) = \tau_1(\cW_\alpha) + \tau_2(\cW_\alpha)$. That is, $\dim (\tau_1(\cW_\alpha) + \tau_2(\cW_\alpha)) = 2 \dim (\cW_\alpha)$. \qed
	\end{proof}

If $\alpha = 0$, then Proposition \ref{prop:E<=n-d} is proved by Lemma \ref{lemma:W_alpha}, so we assume $\alpha>0$ for the sequel. If $\dim(\cR_\alpha) \le n-d$, and since $\dim (\cR_0) = n$, then necessarily one of the two following cases must occur in the chain \eqref{eq:chain}. Either there is a $\beta$ with $\dim(\cR_{\beta}\cap\cF_{\beta})\leq n-d<\dim(\cR_{\beta})$ or there is a $\gamma$ with $\dim(\cR_{\gamma+1})\leq n-d<\dim(\cR_{\gamma}\cap\cF_\gamma)$. The next two lemmas show how to choose $\cW_{\beta} \subset \cR_\beta$ and $\cZ_\gamma \subset \cR_\gamma \cap \cF_\gamma$, respectively.  

\begin{lemma}[$\cW_\beta$-Initialization]\label{lemma:RjcapFj->Rj}
In addition to the hypotheses of Proposition \ref{prop:E<=n-d}, suppose $\dim(\cR_{\beta}\cap\cF_{\beta})\leq n-d<\dim(\cR_{\beta})$ for some non-negative integer $\beta$. Then there exists a subspace $\cW_\beta$ of $\cR_{\beta}$ of dimension $[\dim(\cR_{\beta})-(n-d)]$ such that $\dim(\tau_1(\cW_\beta)+\tau_2(\cW_\beta))=2\dim(\cW_\beta)$.
\end{lemma}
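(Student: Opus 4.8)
The plan is to construct $\cW_\beta$ by a genericity argument performed \emph{inside} $\cR_\beta$, after isolating the key structural fact that any ``collision'' $\tau_1(v_1)=\tau_2(v_2)$ between vectors of $\cR_\beta$ is automatically confined to the (by hypothesis small) subspace $\cR_\beta\cap\cF_\beta$. Recall that by the reductions already made in the proofs of Theorem \ref{theorem:HS} and Proposition \ref{prop:E<=n-d} we may assume $\bbH=\bbC$ and that $\tau_1,\tau_2$ are endomorphisms of $\bbC^n$.

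First I would record that $\beta\geq 1$: if $\beta=0$ then $\cR_0\cap\cF_0=\bbC^n$ and the hypothesis would read $n\leq n-d<n$, a contradiction. Hence Lemma \ref{lemma:chain-property}(iv), applied with $j=\beta-1$, gives $\tau_1(\cR_\beta)=\cG_\beta$. Now the key claim: if $v_1,v_2\in\cR_\beta$ satisfy $\tau_1(v_1)=\tau_2(v_2)$, then $v_2\in\cR_\beta\cap\cF_\beta$. Indeed $\tau_2(v_2)=\tau_1(v_1)\in\tau_1(\cR_\beta)=\cG_\beta$, so $v_2\in\tau_2^{-1}(\cG_\beta)$; and since $v_2\in\cR_\beta\subset\cR_{\beta-1}\cap\cF_{\beta-1}$, the defining formula for $\cF_\beta$ in \eqref{eq:def-RFG} (with $j=\beta-1$) forces $v_2\in\cF_\beta$.

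Next I would pick $\cW_\beta$ generically in $\cR_\beta$, avoiding two linear subvarieties. Viewing $\cR_\beta$ as the ambient $\bbC$-vector space of dimension $\dim(\cR_\beta)$, I would apply Lemma \ref{lemma:subspaces_intersect_subspace} to the subspaces $\cR_\beta\cap\cF_\beta$, of dimension at most $n-d$ by hypothesis, and $\ker(\tau_1)\cap\cR_\beta$, of dimension at most $\dim\ker(\tau_1)=n-\rank(\tau_1)\leq n-2d$ since $\rank(\tau_1)\geq 2d$, with target dimension $\dim(\cR_\beta)-(n-d)$, which is positive by hypothesis. Since $n-d$ and $n-2d$ are both $\leq n-d$, the resulting subspace $\cW_\beta\subset\cR_\beta$ of dimension $\dim(\cR_\beta)-(n-d)$ satisfies $\cW_\beta\cap(\cR_\beta\cap\cF_\beta)=0$ and $\cW_\beta\cap\ker(\tau_1)=0$.

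Finally I would verify the conclusion. Let $v_1,v_2\in\cW_\beta$ with $\tau_1(v_1)=\tau_2(v_2)$. The key claim gives $v_2\in\cR_\beta\cap\cF_\beta$, hence $v_2\in\cW_\beta\cap(\cR_\beta\cap\cF_\beta)=0$; then $\tau_1(v_1)=\tau_2(0)=0$, so $v_1\in\ker(\tau_1)\cap\cW_\beta=0$. Thus the linear map $\cW_\beta\oplus\cW_\beta\to\bbC^n$ sending $(w,w')$ to $\tau_1(w)-\tau_2(w')$ has trivial kernel, and its image is $\tau_1(\cW_\beta)+\tau_2(\cW_\beta)$, which is therefore of dimension $2\dim(\cW_\beta)$, as required. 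The only step carrying real content is the key claim of the second paragraph — it is exactly what the recursions \eqref{eq:def-RFG} were designed to deliver, and, unlike in Lemma \ref{lemma:W_alpha}, no bound on $\dim(\cU_{\tau_1,\tau_2})$ or on $\dim(\cE_{(\tau_1,\tau_2),1})$ is needed because the collisions are already trapped in a subspace of dimension $\leq n-d$. Minor points to be careful about are recording $\beta\geq 1$ before invoking Lemma \ref{lemma:chain-property}(iv), and noticing that $\ker(\tau_2)\cap\cR_\beta$ requires no separate treatment, as $\tau_2$-injectivity on $\cW_\beta$ follows from the displayed map being injective.
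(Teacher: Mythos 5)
Your proof is correct and follows essentially the same route as the paper's: both construct $\cW_\beta$ generically inside $\cR_\beta$ via Lemma \ref{lemma:subspaces_intersect_subspace}, and both exploit the identity $\tau_1(\cR_\beta)=\cG_\beta$ together with the defining formula for $\cF_\beta$ to trap any collision $\tau_1(v_1)=\tau_2(v_2)$ inside $\cR_\beta\cap\cF_\beta$. The only difference is a small economy you spot: the paper also imposes $\cW_\beta\cap\ker(\tau_2)=0$ as a separate avoidance condition, whereas you correctly observe that $\tau_2$-injectivity on $\cW_\beta$ already follows from the key claim together with $\cW_\beta\cap(\cR_\beta\cap\cF_\beta)=0$ (take $v_1=0$ in your final verification), so only two subspaces need to be avoided rather than three.
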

	
\begin{proof}		
We have
		\begin{align*}
			[\dim(\cR_{\beta})-(n-d)]+\dim(\ker(\tau_1))=\dim(\cR_\beta)+(d-\rank(\tau_1))\le \dim(\cR_\beta)
		\end{align*}
		and a similar inequality for $\tau_2$. Moreover, 
		\begin{align*}
			[\dim(\cR_{\beta})-(n-d)]+ \dim(\cR_\beta\cap\cF_\beta)\leq \dim(\cR_{\beta})+ [\dim(\cR_{\beta}\cap\cF_{\beta})-(n-d)]\leq \dim(\cR_{\beta}).
		\end{align*}
		Consequently, by Lemma \ref{lemma:subspaces_intersect_subspace} there exists a subspace $\cW_\beta$ of $\cR_\beta$ of dimension $[\dim(\cR_{\beta})-(n-d)]$ which does not intersect $\ker(\tau_1), \ker(\tau_2)$ and $\cR_\beta\cap\cF_\beta$. Clearly $\beta >0$ and Lemma \ref{lemma:chain-property} gives $\tau_1(\cW_\beta)\subset\tau_1(\cR_\beta)=\cG_\beta$. Recalling definition  \eqref{eq:def-RFG}, we have
			\begin{align*}
			\cW_\beta\cap \tau_2^{-1}(\tau_1(\cW_\beta))&\subset \cW_\beta\cap \tau_2^{-1}(\cG_\beta)\\
			&=\cW_\beta\cap \tau_2^{-1}(\cG_\beta)\cap\cR_\beta\\
			&=\cW_\beta\cap \tau_2^{-1}(\cG_\beta)\cap\tau_1^{-1}(\cG_\beta)\cap\cR_{\beta-1}\cap\cF_{\beta-1}\\
			&=\cW_\beta\cap \tau_1^{-1}(\cG_\beta)\cap\cF_\beta\\
			&\subset\cW_\beta\cap\cF_\beta=\cW_\beta\cap\cR_\beta\cap\cF_\beta=0.
		\end{align*}
		In short $\cW_\beta\cap \tau_2^{-1}(\tau_1(\cW_\beta))=0$, and it follows that $\tau_2(\cW_\beta)\cap\tau_1(\cW_\beta)=0$. Recalling that $\cW_\beta\cap\ker(\tau_1)=0$ and $\cW_\beta\cap\ker(\tau_2)=0$, we conclude that $\dim(\tau_1(\cW_\beta)+\tau_2(\cW_\beta))=2\dim(\cW_\beta)$. \qed
	\end{proof}

\begin{lemma}[$\cZ_\gamma$-Initialization]\label{lemma:Rj+1->R_j_cap_F_j}
In addition to the hypotheses of Proposition \ref{prop:E<=n-d}, suppose that $\dim(\cR_{\gamma+1})\leq n-d<\dim(\cR_{\gamma}\cap\cF_\gamma)$ for some non-negative integer $\gamma$. Then there exists a subspace $\cZ_\gamma$ of $\cR_{\gamma}\cap\cF_{\gamma}$ of dimension $[\dim(\cR_{\gamma}\cap\cF_{\gamma})-(n-d)]$, such that $\dim(\tau_1(\cZ_\gamma)+\tau_2(\cZ_\gamma))=2\dim(\cZ_\gamma)$.
	\end{lemma}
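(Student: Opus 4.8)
The plan is to do everything inside the subspace $\cQ:=\cR_\gamma\cap\cF_\gamma$. By hypothesis $m:=\dim(\cR_\gamma\cap\cF_\gamma)>n-d$, so the target dimension $k:=m-(n-d)$ is a positive integer. The guiding observation is that the recursion \eqref{eq:def-RFG} is rigged precisely so that $\cR_{\gamma+1}=\tau_1^{-1}(\cG_{\gamma+1})\cap\cQ$ is exactly the set of vectors of $\cQ$ whose $\tau_1$-image can fall into the overlap $\cG_{\gamma+1}=\tau_1(\cQ)\cap\tau_2(\cQ)$. Hence I expect that \emph{any} $k$-dimensional $\cZ_\gamma\subset\cQ$ which avoids both $\cR_{\gamma+1}$ and $\ker(\tau_2)$ will work, and the whole argument reduces to a codimension count fed into Lemma \ref{lemma:subspaces_intersect_subspace}.

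Concretely, I would first record two a priori bounds inside $\cQ$: since $0\in\cG_{\gamma+1}$ we have $\ker(\tau_1)\cap\cQ\subset\tau_1^{-1}(\cG_{\gamma+1})\cap\cQ=\cR_{\gamma+1}$, and $\dim(\cR_{\gamma+1})\le n-d$ by hypothesis; also $\dim(\ker(\tau_2)\cap\cQ)\le\dim(\ker(\tau_2))=n-\rank(\tau_2)\le n-2d$ by the rank hypothesis. I would then invoke Lemma \ref{lemma:subspaces_intersect_subspace} with ambient space $\cQ\cong\bbC^{m}$ applied to the linear subspaces $\cR_{\gamma+1}$ and $\ker(\tau_2)\cap\cQ$; since $\dim(\cR_{\gamma+1})+k-m\le 0$ and $\dim(\ker(\tau_2)\cap\cQ)+k-m\le -d\le 0$, it produces a $k$-dimensional $\cZ_\gamma\subset\cQ$ with $\cZ_\gamma\cap\cR_{\gamma+1}=0$ and $\cZ_\gamma\cap\ker(\tau_2)=0$; the former, combined with $\ker(\tau_1)\cap\cQ\subset\cR_{\gamma+1}$, also gives $\cZ_\gamma\cap\ker(\tau_1)=0$.

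It then remains to check $\dim(\tau_1(\cZ_\gamma)+\tau_2(\cZ_\gamma))=2k$. Injectivity of each $\tau_i$ on $\cZ_\gamma$ gives $\dim(\tau_i(\cZ_\gamma))=k$. For disjointness of the images, I would argue that a common vector $w\in\tau_1(\cZ_\gamma)\cap\tau_2(\cZ_\gamma)$ must lie in $\tau_1(\cQ)\cap\tau_2(\cQ)=\cG_{\gamma+1}$, so any $z_1\in\cZ_\gamma$ with $\tau_1(z_1)=w$ lies in $\tau_1^{-1}(\cG_{\gamma+1})\cap\cQ=\cR_{\gamma+1}$, forcing $z_1\in\cZ_\gamma\cap\cR_{\gamma+1}=0$ and hence $w=0$. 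Thus $\tau_1(\cZ_\gamma)\cap\tau_2(\cZ_\gamma)=0$ and $\dim(\tau_1(\cZ_\gamma)+\tau_2(\cZ_\gamma))=2k=2\dim(\cZ_\gamma)$, as required.

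I do not anticipate a deep obstacle here: in contrast to Lemmas \ref{lemma:W_alpha} and \ref{lemma:RjcapFj->Rj}, this case uses neither Lemma \ref{lemma:lemma5ofHS} nor the hypotheses $d\le\codim(\cU_{\tau_1,\tau_2})$ and $\dim(\cE_{(\tau_1,\tau_2),1})\le n-d$; the combinatorial structure built into \eqref{eq:def-RFG} (and recorded in Lemma \ref{lemma:chain-property}) already supplies what is needed. The only delicate points are the inverse-image identities $\ker(\tau_1)\cap\cQ\subset\cR_{\gamma+1}$ and $\tau_1(\cQ)\cap\tau_2(\cQ)=\cG_{\gamma+1}$, and making sure that the specific target dimension $k=\dim(\cR_\gamma\cap\cF_\gamma)-(n-d)$ renders both codimension inequalities handed to Lemma \ref{lemma:subspaces_intersect_subspace} non-positive — and this is exactly where the hypothesis $\dim(\cR_{\gamma+1})\le n-d$ does its work.
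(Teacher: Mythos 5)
Your proof is correct and takes essentially the same route as the paper's: apply Lemma \ref{lemma:subspaces_intersect_subspace} inside the ambient space $\cR_\gamma\cap\cF_\gamma$ to produce a $\cZ_\gamma$ of the right dimension avoiding $\cR_{\gamma+1}$ and the kernels, then deduce $\tau_1(\cZ_\gamma)\cap\tau_2(\cZ_\gamma)=0$ via the inverse-image identity $\tau_1^{-1}(\cG_{\gamma+1})\cap\cR_\gamma\cap\cF_\gamma=\cR_{\gamma+1}$. The only (harmless) streamlining is that you feed two rather than three subspaces into Lemma \ref{lemma:subspaces_intersect_subspace}, having noticed that $\ker(\tau_1)\cap\cR_\gamma\cap\cF_\gamma\subset\cR_{\gamma+1}$ already; the paper instead performs a separate codimension count for $\ker(\tau_1)$ using $\rank(\tau_1)\ge d$.
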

\begin{proof}
We have 
\begin{align*}
[\dim(\cR_{\gamma}\cap\cF_{\gamma})-(n-d)]+\dim(\ker(\tau_1))=\dim(\cR_{\gamma}\cap\cF_{\gamma})+(d-\rank(\tau_1))\le\dim(\cR_{\gamma}\cap\cF_{\gamma})
\end{align*} and a similar inequality for $\tau_2$. Moreover, 
		\begin{align*}
	    	[\dim(\cR_{\gamma}\cap\cF_{\gamma})-(n-d)]+\dim(\cR_{\gamma+1})=\dim(\cR_{\gamma}\cap\cF_{\gamma})+[\dim(\cR_{\gamma+1})- (n-d)]\leq \dim(\cR_{\gamma}\cap\cF_{\gamma}).
		\end{align*} Thus Lemma \ref{lemma:subspaces_intersect_subspace} implies the existence of a subspace $\cZ_\gamma$ of $\cR_{\gamma}\cap\cF_{\gamma}$ of dimension $[\dim(\cR_{\gamma}\cap\cF_{\gamma})-(n-d)]$ which does not intersect $\ker(\tau_1), \ker(\tau_2)$ and $\cR_{\gamma+1}$. This gives $\dim(\tau_1(\cZ_\gamma))=\dim(\tau_2(\cZ_\gamma))=\dim(\cZ_\gamma)$. It now suffices to prove $\tau_1(\cZ_\gamma)\cap\tau_2(\cZ_\gamma)=0$. Let $\tau_1(v_1)=\tau_2(v_2)$ for some $v_1,v_2\in \cZ_\gamma$. Then		\begin{align*}
			\tau_1(v_1)\in \tau_1(\cR_\gamma\cap\cF_\gamma)\cap\tau_2(\cR_\gamma\cap\cF_\gamma)=:\cG_{\gamma+1}.
		\end{align*}
		This implies $v_1\in\tau_1^{-1}(\cG_{\gamma+1})$ and so
		\begin{align*}
			v_1\in \cZ_\gamma\cap\tau_1^{-1}(\cG_{\gamma+1})=\cZ_\gamma\cap\tau_1^{-1}(\cG_{\gamma+1})\cap \cR_\gamma\cap\cF_\gamma=\cZ_\gamma \cap \cR_{\gamma+1}=0.
		\end{align*} Thus $v_1=0$ and we have proved $\tau_1(\cZ_\gamma)\cap\tau_2(\cZ_\gamma)=0$.  \qed
	\end{proof}

	\begin{table}
		\caption{Three different types of initialization.}
		\label{table:3cases}
		
		\centering
		\begin{tabular}{lll}
			\toprule
			$\cW_\alpha: \, \, n-d<\dim(\cR_{\alpha})$ & Lemma \ref{lemma:W_alpha} \\
			$\cW_\beta: \, \, \dim(\cR_{\beta}\cap\cF_{\beta})\leq n-d<\dim(\cR_{\beta})$ & Lemma \ref{lemma:RjcapFj->Rj} \\
			$\cZ_\gamma: \, \, \dim(\cR_{\gamma+1})\leq n-d<\dim(\cR_{\gamma}\cap\cF_\gamma)$ & Lemma \ref{lemma:Rj+1->R_j_cap_F_j} \\			
			\bottomrule
		\end{tabular}
	\end{table}
Table \ref{table:3cases} summarizes the three different types of initialization, two giving $\cW_\alpha, \cW_\beta$ and the third one $\cZ_\gamma$. With $\mu$ either $\alpha$ or $\beta$, we have that subspace $\cW_\mu$ of $\cR_\mu$ satisfies
	\begin{align*}
		\rP(\cW_\mu):\ \dim(\cW_{\mu})=[\dim(\cR_{\mu})-(n-d)] \text{\ \ and \ } \dim(\tau_1(\cW_{\mu})+\tau_2(\cW_{\mu}))=2\dim(\cW_{\mu}).
	\end{align*} On the other hand, $\cZ_\gamma$ is a subspace of $\cR_\gamma\cap\cF_\gamma$ and satisfies
	\begin{align*}
		\rP(\cZ_\gamma): \	\dim(\cZ_\gamma)=[\dim(\cR_{\gamma}\cap\cF_{\gamma})-(n-d)] \text{\ \ and \ } \dim(\tau_1(\cZ_\gamma)+\tau_2(\cZ_\gamma))=2\dim(\cZ_\gamma).
	\end{align*} Thus, either we have a chain of the form 
	\begin{align}
			\cW_\mu \subset \cR_{\mu}\subset \cR_{\mu-1}\cap\cF_{\mu-1}\subset  \cdots\subset\cR_0=\bbC^n \nonumber 	
	\end{align} or a chain of the form			 
	\begin{align}
		\cZ_\gamma \subset \cR_\gamma&\cap\cF_{\gamma}\subset\cR_{\gamma}\subset  \cdots\subset\cR_0=\bbC^n. \nonumber 		
	\end{align} The next two lemmas show that we can always extend $\cW_\mu$ to $\cZ_{\mu-1}$ or $\cZ_\gamma$ to $\cW_{\gamma-1}$ and so on, which enables induction and thus concludes the proof of the proposition. The proof of Lemma \ref{lemma:enlarge_R_W->RcapF_Z} follows an identical argument as in the proof of Lemma \ref{lemma:enlarge_RcapF_Z->R_W} and is thus omitted.
	
	\begin{lemma}[$\cW_j$-Extension]\label{lemma:enlarge_RcapF_Z->R_W}
		In addition to the hypotheses of Proposition \ref{prop:E<=n-d}, suppose for some $j$ that $\dim(\cR_{j}\cap\cF_{j})> n-d$ and that there exists a subspace $\cZ_j$ of $\cR_j\cap\cF_j$ satisfying
		\begin{align*}
			\rP(\cZ_j):\ \dim(\cZ_j)=[\dim(\cR_{j}\cap\cF_{j})-(n-d)] \text{\ \ and \ } \dim(\tau_1(\cZ_j)+\tau_2(\cZ_j))=2\dim(\cZ_j).
		\end{align*}
		Then there exists a subspace $\cW_j$ of $\cR_{j}$ satisfying $\cZ_j\subset \cW_j$ and
		\begin{align*}
			\rP(\cW_j):\ \dim(\cW_j)=[\dim(\cR_{j})-(n-d)] \text{\ \ and \ } \dim(\tau_1(\cW_j)+\tau_2(\cW_j))=2\dim(\cW_j).
		\end{align*}
	\end{lemma}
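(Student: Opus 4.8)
The plan is to obtain $\cW_j$ by adjoining to $\cZ_j$ a sufficiently generic complement of $\cR_j\cap\cF_j$ inside $\cR_j$. Set $k:=\dim(\cR_j)-\dim(\cR_j\cap\cF_j)$, the number of dimensions by which $\cW_j$ must exceed $\cZ_j$; if $k=0$ (which in particular covers $j=0$, where $\cR_0=\cF_0=\bbC^n$) then $\cR_j=\cR_j\cap\cF_j$ and $\cW_j:=\cZ_j$ already works, so assume $k\ge 1$ and hence $j\ge 1$. The one structural input I would isolate at the outset is the identity
\begin{align*}
\cR_j\cap\cF_j=\tau_2^{-1}(\cG_j)\cap\cR_j=\{\,w\in\cR_j:\tau_2(w)\in\cG_j\,\},
\end{align*}
which is immediate from the recursions \eqref{eq:def-RFG} (since $\cR_j\subset\tau_1^{-1}(\cG_j)\cap\cR_{j-1}\cap\cF_{j-1}$ and $\cF_j=\tau_2^{-1}(\cG_j)\cap\cR_{j-1}\cap\cF_{j-1}$), combined with part iv) of Lemma \ref{lemma:chain-property}, which gives $\tau_1(\cR_j)=\cG_j$ for $j\ge 1$. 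This says precisely that $\tau_2$ carries $\cR_j\cap\cF_j$ — and hence $\cZ_j$ — into $\cG_j=\tau_1(\cR_j)$, whereas it throws every nonzero vector of a complement of $\cR_j\cap\cF_j$ out of $\cG_j$.

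Next I would invoke Lemma \ref{lemma:subspaces_intersect_subspace}, taking the ambient space to be $\cR_j$ (a basis identifies it with $\bbC^{\dim\cR_j}$, under which the linear subspaces below become homogeneous varieties) and the two subspaces $\cR_j\cap\cF_j$ and $\tau_1^{-1}(W)\cap\cR_j$, where $W:=\tau_1(\cZ_j)+\tau_2(\cZ_j)\subset\cG_j$ is a direct sum by $\rP(\cZ_j)$, so $\dim W=2\dim\cZ_j$. This produces a subspace $\cC\subset\cR_j$ of dimension $k$ meeting each of them trivially, provided each has dimension at most $\dim(\cR_j)-k=\dim(\cR_j\cap\cF_j)$. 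That is the single computation to carry out: using $\rank(\tau_1)\ge 2d$ one has $\dim(\cG_j)=\dim\tau_1(\cR_j)\ge\dim(\cR_j)-(n-2d)$, so $\dim(\tau_1^{-1}(W)\cap\cR_j)=\dim\ker(\tau_1|_{\cR_j})+\dim W\le (n-2d)+2(\dim(\cR_j\cap\cF_j)-(n-d))$, and the required inequality collapses to $\dim(\cR_j\cap\cF_j)\le n$, which holds trivially. Since $\ker(\tau_1|_{\cR_j})\subset\tau_1^{-1}(W)\cap\cR_j$, the chosen $\cC$ also misses $\ker\tau_1$.

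Finally, set $\cW_j:=\cZ_j+\cC$. Because $\cZ_j\subset\cR_j\cap\cF_j$ and $\cC\cap(\cR_j\cap\cF_j)=0$, the sum is direct and $\dim\cW_j=\dim\cZ_j+k=\dim(\cR_j)-(n-d)$, with $\cZ_j\subset\cW_j\subset\cR_j$, so the dimension half of $\rP(\cW_j)$ holds. For the rank half it suffices that $\tau_1,\tau_2$ be injective on $\cW_j$ and that $\tau_1(\cW_j)\cap\tau_2(\cW_j)=0$. The structural identity gives $\tau_2(\cW_j)\cap\cG_j=\tau_2(\cZ_j)$: if $\tau_2(z+c)\in\cG_j$ with $z\in\cZ_j$, $c\in\cC$, then $\tau_2(c)\in\cG_j$, hence $c\in\cC\cap(\cR_j\cap\cF_j)=0$; this also yields injectivity of $\tau_2$ on $\cW_j$. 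Injectivity of $\tau_1$ on $\cW_j$ follows from $\cC\cap(\tau_1^{-1}(W)\cap\cR_j)=0$, which forces $\tau_1(\cC)\cap W=0$ and $\cC\cap\ker\tau_1=0$, so (using $\tau_1(\cZ_j)\subset W$) we get $\tau_1(\cZ_j)\cap\tau_1(\cC)=0$ and thus $\tau_1|_{\cW_j}$ is injective. Lastly, since $\tau_1(\cW_j)\subset\tau_1(\cR_j)=\cG_j$,
\begin{align*}
\tau_1(\cW_j)\cap\tau_2(\cW_j)&=\tau_1(\cW_j)\cap\big(\tau_2(\cW_j)\cap\cG_j\big)=\tau_1(\cW_j)\cap\tau_2(\cZ_j)\\
&\subset(\tau_1(\cZ_j)+\tau_1(\cC))\cap W=\tau_1(\cZ_j)\cap\tau_2(\cZ_j)=0,
\end{align*}
the middle step because $\tau_1(\cC)\cap W=0$ peels off the $\tau_1(\cC)$-component while $\tau_2(\cZ_j)\subset W$, and the last equality again by $\rP(\cZ_j)$. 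The main obstacle is exactly the bookkeeping of the second paragraph: selecting the right pair of linear subspaces for Lemma \ref{lemma:subspaces_intersect_subspace} and checking the dimension inequalities; conceptually everything is driven by the recursions \eqref{eq:def-RFG} and the trivial bound $\dim(\cR_j\cap\cF_j)\le n$.
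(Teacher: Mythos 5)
Your proof is correct and follows essentially the same route as the paper: both pick, via Lemma~\ref{lemma:subspaces_intersect_subspace}, a generic $k$-dimensional complement of $\cR_j\cap\cF_j$ inside $\cR_j$ that also avoids $\tau_1^{-1}(\tau_1(\cZ_j)+\tau_2(\cZ_j))$, set $\cW_j$ to be its direct sum with $\cZ_j$, and use the identity $\cR_j\cap\cF_j=\tau_2^{-1}(\cG_j)\cap\cR_j$ (for $j\ge 1$) together with $\tau_1(\cR_j)=\tau_2(\cF_j)=\cG_j$ to control the intersection of images. Your streamlining — observing that avoidance of $\ker\tau_1$ and $\ker\tau_2$ is automatic from the other two avoidances — is a nice touch; the paper lists them explicitly but they are indeed redundant. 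One small slip in the displayed chain: $(\tau_1(\cZ_j)+\tau_1(\cC))\cap W$ equals $\tau_1(\cZ_j)$, not $\tau_1(\cZ_j)\cap\tau_2(\cZ_j)$, so you must retain the $\tau_2(\cZ_j)$ factor, writing
\begin{align*}
\tau_1(\cW_j)\cap\tau_2(\cZ_j)=\bigl(\tau_1(\cW_j)\cap W\bigr)\cap\tau_2(\cZ_j)=\tau_1(\cZ_j)\cap\tau_2(\cZ_j)=0;
\end{align*}
your verbal explanation already says exactly this, so it is a notational slip rather than a gap.
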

	\begin{proof}
		If $\cR_j\cap\cF_j=\cR_j$, then we are done by letting $\cW_j=\cZ_j$. In what follows we assume $\dim(\cR_j)>\dim(\cR_j\cap\cF_j)$, in particular $j > 0$. The subspace $\tau_1^{-1}(\tau_1(\cZ_j)+\tau_2(\cZ_j))$ has dimension at most $(n-\rank(\tau_1))+2[\dim(\cR_j\cap\cF_j)-(n-d)]$. Hence 
			\begin{align*}
			[\dim(\cR_j)-\dim(\cR_j\cap\cF_j)]+\dim(\tau_1^{-1}(\tau_1(\cZ_j)+\tau_2(\cZ_j)))&\le \dim(\cR_j)+[2d-\rank(\tau_1)]+[\dim(\cR_j\cap\cF_j)-n] \nonumber \\
			&\leq \dim(\cR_j).
		\end{align*} By hypothesis it is also true that $[\dim(\cR_j)-\dim(\cR_j\cap\cF_j)]+\dim(\ker(\tau_1)) \le \dim(\cR_j)$ and similarly for $\tau_2$. Hence, by Lemma \ref{lemma:subspaces_intersect_subspace} there is a subspace $\cW_j'$ of $\cR_j$ of dimension $[\dim(\cR_j)-\dim(\cR_j\cap\cF_j)]$, which does not intersect the subspaces $\cR_j\cap\cF_j, \, \tau_1^{-1}(\tau_1(\cZ_j)+\tau_2(\cZ_j)), \, \ker(\tau_1), \, \ker(\tau_2)$. In particular $\tau_1(\cW_j')\cap[\tau_1(\cZ_j)+\tau_2(\cZ_j)]=0$. This together with the hypothesis gives $\dim (\tau_1(\cZ_j)+\tau_2(\cZ_j)+\tau_1(\cW_j')) = 2[\dim(\cR_{j}\cap\cF_{j})-(n-d)]+[\dim(\cR_j)-\dim(\cR_j\cap\cF_j)]$. Equivalently,
		\begin{align}\label{eq:Zj+Wj'_Zj}
			\dim(\tau_1(\cZ_j+\cW_j')+\tau_2(\cZ_j))=\dim(\cR_{j})+\dim(\cR_{j}\cap\cF_{j})-2(n-d).
		\end{align}
		Since $\cZ_j\subset \cR_j\cap\cF_j$ we see that $\tau_2(\cZ_j)\subset \tau_2(\cF_j)=\cG_j$. With $\tau_1(\cZ_j+\cW_j')\subset \tau_1(\cR_j)=\cG_j$ (Lemma \ref{lemma:chain-property}), we obtain that $\tau_1(\cZ_j+\cW_j')+\tau_2(\cZ_j)$ is a subspace of $\cG_j$, and consequently
		\begin{align*}
			\cW_j'\cap\tau_2^{-1}(\tau_1(\cZ_j+\cW_j')+\tau_2(\cZ_j))&\subset \cW_j'\cap\tau_2^{-1}(\cG_j)\\
			&=\cW_j'\cap\tau_2^{-1}(\cG_j) \cap\cR_j\\
			&=\cW_j'\cap\tau_2^{-1}(\cG_j) \cap\tau_1^{-1}(\cG_j) \cap\cR_{j-1}\cap\cF_{j-1}\\
			&=\cW_j'\cap \tau_1^{-1}(\cG_j)\cap\cF_j\\
			&\subset \cW_j'\cap\cF_j\\
			&=\cW_j'\cap\cF_j\cap\cR_j=0. 
		\end{align*}
In short, we have $\cW_j'\cap\tau_2^{-1}(\tau_1(\cZ_j+\cW_j')+\tau_2(\cZ_j))=0$ and so $\tau_2(\cW_j')\cap[\tau_1(\cZ_j+\cW_j')+\tau_2(\cZ_j)]=0$. Recalling \eqref{eq:Zj+Wj'_Zj}, it follows that $[\tau_1(\cZ_j+\cW_j')+\tau_2(\cZ_j)]+\tau_2(\cW_j')$ is of dimension $[\dim(\cR_{j})+\dim(\cR_{j}\cap\cF_{j})-2(n-d)]+ [\dim(\cR_j)-\dim(\cR_j\cap\cF_j)]$, that is,
		\begin{align*}
			\dim(\tau_1(\cZ_j+\cW_j')+\tau_2(\cZ_j+\cW_j'))=2\dim(\cR_{j})-2(n-d).
		\end{align*}
		By letting $\cW_j=\cZ_j+\cW_j'$ we finished the proof. \qed
	\end{proof}
	
	\begin{lemma}[$\cZ_j$-Extension]\label{lemma:enlarge_R_W->RcapF_Z}
		In addition to the hypotheses of Proposition \ref{prop:E<=n-d}, suppose for some $j$ that $\dim(\cR_{j+1})> n-d$ and that there exists a subspace $\cW_{j+1}$ of $\cR_{j+1}$ satisfying 
		\begin{align*}
			\rP(\cW_{j+1}):\ \dim(\cW_{j+1})=[\dim(\cR_{j+1})-(n-d)] \text{\ \ and \ } \dim(\tau_1(\cW_{j+1})+\tau_2(\cW_{j+1}))=2\dim(\cW_{j+1}).
		\end{align*}
		Then there exists a subspace $\cZ_j$ of $\cR_{j}\cap\cF_{j}$ satisfying $\cW_{j+1}\subset \cZ_j$ and 
		\begin{align*}
			\rP(\cZ_j):\ \dim(\cZ_j)=[\dim(\cR_{j}\cap\cF_{j})-(n-d)] \text{\ \ and \ } \dim(\tau_1(\cZ_j)+\tau_2(\cZ_j))=2\dim(\cZ_j).
		\end{align*} \qed
	\end{lemma}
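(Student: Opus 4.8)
The plan is to mimic the proof of Lemma~\ref{lemma:enlarge_RcapF_Z->R_W}: enlarge $\cW_{j+1}$ inside $\cR_j\cap\cF_j$ by a generically chosen complement and then verify $\rP(\cZ_j)$. First I would dispose of the trivial case: if $\cR_{j+1}=\cR_j\cap\cF_j$ — equivalently $\dim(\cR_{j+1})=\dim(\cR_j\cap\cF_j)$, since $\cR_{j+1}\subset\cR_j\cap\cF_j$ by Lemma~\ref{lemma:chain-property} — then I take $\cZ_j:=\cW_{j+1}$, as both clauses of $\rP(\cZ_j)$ are inherited verbatim from $\rP(\cW_{j+1})$. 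So assume $\dim(\cR_j\cap\cF_j)>\dim(\cR_{j+1})$.

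Next I would apply Lemma~\ref{lemma:subspaces_intersect_subspace} inside the ambient space $\cR_j\cap\cF_j$ to produce a subspace $\cZ_j'\subset\cR_j\cap\cF_j$ of dimension $\dim(\cR_j\cap\cF_j)-\dim(\cR_{j+1})$ intersecting trivially each of $\cR_{j+1}$, $\ker(\tau_1)$, $\ker(\tau_2)$, and $\tau_2^{-1}(\tau_1(\cW_{j+1})+\tau_2(\cW_{j+1}))$; these are the exact analogues of the four subspaces avoided in Lemma~\ref{lemma:enlarge_RcapF_Z->R_W}, with $\cR_j\cap\cF_j$ replaced by $\cR_{j+1}$ and the roles of $\tau_1,\tau_2$ exchanged in the last one. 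The dimension bookkeeping needed to invoke Lemma~\ref{lemma:subspaces_intersect_subspace} runs parallel to that of Lemma~\ref{lemma:enlarge_RcapF_Z->R_W}: for $\cR_{j+1}$ one has equality $\dim(\cZ_j')+\dim(\cR_{j+1})=\dim(\cR_j\cap\cF_j)$; for the kernels one combines $\rank(\tau_i)\ge 2d$ with the hypothesis $\dim(\cR_{j+1})>n-d$; and for the last subspace one bounds its dimension by $(n-\rank(\tau_2))+2\dim(\cW_{j+1})$, substitutes $\dim(\cW_{j+1})=\dim(\cR_{j+1})-(n-d)$, and is left with $\dim(\cR_{j+1})+(2d-\rank(\tau_2))-n\le 0$, which holds since $\dim(\cR_{j+1})\le n$ and $\rank(\tau_2)\ge 2d$.

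Then I would set $\cZ_j:=\cW_{j+1}+\cZ_j'$. The sum is direct because $\cW_{j+1}\subset\cR_{j+1}$ and $\cZ_j'\cap\cR_{j+1}=0$, so $\dim(\cZ_j)=\dim(\cW_{j+1})+\dim(\cZ_j')=\dim(\cR_j\cap\cF_j)-(n-d)$, and clearly $\cW_{j+1}\subset\cZ_j\subset\cR_j\cap\cF_j$. It remains to prove $\dim(\tau_1(\cZ_j)+\tau_2(\cZ_j))=2\dim(\cZ_j)$, for which it suffices to show $\ker(\tau_1)\cap\cZ_j=\ker(\tau_2)\cap\cZ_j=0$ and $\tau_1(\cZ_j)\cap\tau_2(\cZ_j)=0$. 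I would obtain the last identity by the element-chase used in the proof of Lemma~\ref{lemma:Rj+1->R_j_cap_F_j}: if $\tau_1(v_1)=\tau_2(v_2)$ with $v_1,v_2\in\cZ_j\subset\cR_j\cap\cF_j$, the common value lies in $\tau_1(\cR_j\cap\cF_j)\cap\tau_2(\cR_j\cap\cF_j)=\cG_{j+1}$, so $v_1\in\tau_1^{-1}(\cG_{j+1})\cap\cR_j\cap\cF_j=\cR_{j+1}$; writing $v_1=w+z$ with $w\in\cW_{j+1}$, $z\in\cZ_j'$ forces $z=v_1-w\in\cZ_j'\cap\cR_{j+1}=0$, hence $v_1\in\cW_{j+1}$; then $\tau_2(v_2)=\tau_1(v_1)\in\tau_1(\cW_{j+1})+\tau_2(\cW_{j+1})$, and decomposing $v_2$ the same way and using $\cZ_j'\cap\tau_2^{-1}(\tau_1(\cW_{j+1})+\tau_2(\cW_{j+1}))=0$ forces $v_2\in\cW_{j+1}$; finally $\tau_1(v_1)=\tau_2(v_2)\in\tau_1(\cW_{j+1})\cap\tau_2(\cW_{j+1})=0$ by $\rP(\cW_{j+1})$. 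The two kernel statements are special cases of the same chase: for $\tau_2$ one uses $\cZ_j'\cap\tau_2^{-1}(\tau_1(\cW_{j+1})+\tau_2(\cW_{j+1}))=0$ (which contains $\ker(\tau_2)$) together with $\ker(\tau_2)\cap\cW_{j+1}=0$; for $\tau_1$ one uses $0\in\cG_{j+1}$ to land in $\cR_{j+1}$, then $\cZ_j'\cap\cR_{j+1}=0$ and $\ker(\tau_1)\cap\cW_{j+1}=0$.

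The main obstacle — and the reason the argument is not literally identical to that of Lemma~\ref{lemma:enlarge_RcapF_Z->R_W} — is that there the identity $\cR_j=\tau_1^{-1}(\cG_j)\cap(\cR_{j-1}\cap\cF_{j-1})$ yields the clean containment $\tau_1(\cR_j)\subseteq\cG_j$ that drives a purely dimension-counting computation, whereas here there is no comparably clean relation tying $\cR_{j+1}$ to $\tau_2$ (one only has $\cG_{j+1}=\tau_1(\cR_j\cap\cF_j)\cap\tau_2(\cR_j\cap\cF_j)$), so the final intersection-triviality must be routed through the element-chase of Lemma~\ref{lemma:Rj+1->R_j_cap_F_j}, which needs only $v_1,v_2\in\cR_j\cap\cF_j$. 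I expect the only genuinely delicate point to be verifying that the four avoidance conditions on $\cZ_j'$ are simultaneously achievable (the dimension inequalities above) and are jointly exactly what the chase consumes; everything else is routine.
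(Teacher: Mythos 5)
Your proof is correct. The avoidance conditions you place on $\cZ_j'$ (trivial intersection with $\cR_{j+1}$, $\ker\tau_1$, $\ker\tau_2$, and $\tau_2^{-1}(\tau_1(\cW_{j+1})+\tau_2(\cW_{j+1}))$ inside the ambient $\cR_j\cap\cF_j$), together with the dimension bookkeeping you sketch, are exactly the mirror-image of Lemma~\ref{lemma:enlarge_RcapF_Z->R_W}; the paper omits the proof citing an ``identical argument'', so the intended choice of $\cZ_j'$ is the same as yours. Where you diverge is only in how the final identity $\dim(\tau_1(\cZ_j)+\tau_2(\cZ_j))=2\dim(\cZ_j)$ is established: you replace the dimension-counting computation of Lemma~\ref{lemma:enlarge_RcapF_Z->R_W} (passing through the displayed quantity \eqref{eq:Zj+Wj'_Zj} and the chain of inclusions ending in $\cW_j'\cap\cF_j\cap\cR_j=0$) with the element-chase of Lemma~\ref{lemma:Rj+1->R_j_cap_F_j}. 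Your diagnosis for doing so is sound but can be slightly sharpened: the clean inclusion $\tau_1(\cZ_j+\cW_j')+\tau_2(\cZ_j)\subset\cG_j$ used in Lemma~\ref{lemma:enlarge_RcapF_Z->R_W} indeed has no literal analogue here, because $\tau_2(\cR_{j+1})\not\subset\cG_{j+1}$ in general; however, the dimension-counting route can still be salvaged. One shows $\cZ_j'\cap\tau_1^{-1}\bigl(\tau_1(\cW_{j+1})+\tau_2(\cW_{j+1}+\cZ_j')\bigr)=0$ by observing that for $w$ in this intersection, $\tau_1(w)\in\tau_1(\cR_j\cap\cF_j)$ (as $w\in\cR_j\cap\cF_j$), while $\tau_1(\cW_{j+1})+\tau_2(\cW_{j+1}+\cZ_j')\subset\tau_2(\cR_j\cap\cF_j)$ (using $\tau_1(\cW_{j+1})\subset\cG_{j+1}\subset\tau_2(\cR_j\cap\cF_j)$), whence $\tau_1(w)\in\cG_{j+1}$ and $w\in\cR_{j+1}\cap\cZ_j'=0$. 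Your element-chase packages the same observation in a cleaner, more legible form, and in fact renders the separate avoidance of $\ker\tau_1$ and $\ker\tau_2$ redundant (they sit inside $\cR_{j+1}$ and $\tau_2^{-1}(\tau_1(\cW_{j+1})+\tau_2(\cW_{j+1}))$ respectively, once restricted to $\cR_j\cap\cF_j$). Both routes are valid; yours is arguably the more transparent of the two.
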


\subsubsection{Proof of Proposition \ref{prop:E>n-d}}\label{subsubsection:proof-prop:E>n-d}
		Note that $\rank(\tau_1),\rank(\tau_2) \geq 2d>2d_0$ and $\dim(\cU_{\tau_1,\tau_2})\leq n-d<n-d_0$. 
		Invoking Proposition \ref{prop:E<=n-d}, we get a subspace $\cV_0$ of 
		$\Gr_{\bbH}(d_0,n)$ which satisfies 
		$\dim(\tau_1(\cV_0)+\tau_2(\cV_0))=2d_0$. The dimension of the subspace $\tau_2^{-1}(\tau_1(\cV_0)+\tau_2(\cV_0))$ is at most $(n-\rank(\tau_2))+2d_0$, and
		\begin{align*}
			(d-d_0) +[(n-\rank(\tau_2))+2d_0]=n+(d+d_0-\rank(\tau_2))<n+2d-\rank(\tau_2)\leq n.
		\end{align*} By Lemma \ref{lemma:subspaces_intersect_subspace}, there is a subspace $\cW$ of $\bbH^n$ of dimension $d-d_0$ such that $\cW$ does not intersect the subspaces $\tau_2^{-1}(\tau_1(\cV_0)+\tau_2(\cV_0)), \, \cV_0$ and $\ker(\tau_2)$. Hence  $\dim(\cW+\cV_0)=d, \, \tau_2(\cW) \cap (\tau_1(\cV_0)+\tau_2(\cV_0))=0$ and
		\begin{align*}
			\dim(\tau_1(\cV_0)+\tau_2(\cW+\cV_0))=\dim(\tau_2(\cW))
			+ \dim 
			(\tau_1(\cV_0)+\tau_2(\cV_0))=d-d_0+2d_0=d+d_0.
		\end{align*}
		Letting 
		$\cV=\cW+\cV_0$ we are done. \qed

\subsection{Proof of Proposition \ref{prop:HS}} 
	Any $\cV \in\Gr_{\bbC}(d,n)$ that intersects $\cU_{\tau_1,\tau_2}$ violates $\hsp(\cV,\cT)$. So it suffices to show $\cV\cap\cU_{\tau_1,\tau_2}$ is not empty for a generic $\cV\in\Gr_{\bbC}(d,n)$. This follows from Lemma \ref{lemma:homogeneous}, proved in \S \ref{proof:lemma:homogeneous}, and the 
fact that $\cU_{\tau_1,\tau_2} = \cY_{\tau_1,\tau_2} \setminus \cZ_{\tau_1,\tau_2}$, with both $\cY_{\tau_1,\tau_2}$ and $\cZ_{\tau_1,\tau_2}$ defined by homogeneous polynomials.  \qed

\subsection{Proof of Theorem \ref{theorem:HS-SA}} \label{subsection:proof_theorem2}
	
Set $\sG = \Gr_\bbH(d_1,n) \times \cdots \times \Gr_\bbH(d_\ell,n)$ and for every $\cI \subset [\ell]$ denote by $\sG_\cI$ the product of the factors of $\sG$ indexed by $\cI$. It is clear that an open set of $\sG_\cI$ gives rise to an open set of $\sG$, with the closed locus in $\sG$ to be avoided defined by equations involving only the Pl\"ucker coordinates of the factors indexed by $\cI$. To show that $\hsp(\ocAI,\cT)$ holds true for every subspace arrangement $(\cV_1,\dots,\cV_\ell)$ on a non-empty open set $\sU$ of $\sG$, it suffices to show that $\hsp(\cV_{\cI_i}\cup \cV_{\cI_j},\{\tau_\alpha,\tau_\beta\})$ holds true on a non-empty open set $\sU_{i,j,\alpha,\beta}$ of $\sG_{\cI_i \cup \cI_j}$ for every $\cI_i, \cI_j \in \rI$ and for every $\tau_\alpha,\tau_\beta \in \cT$. For then $\sU$ will be the intersection of all $\sU_{i,j,\alpha,\beta}$'s, viewed as open sets of $\sG$. With $i,j,\alpha,\beta$ fixed, we show the existence of such a $\sU_{i,j,\alpha,\beta}$.

The dimension of the subspace $\tau_\alpha(\cV_{\cI_i}) + \tau_\beta(\cV_{\cI_j})$ attains its maximum possible value, say $c$, on a non-empty open set $\sU_{\cI_i\cup \cI_j}$ of $\sG_{\cI_i \cup \cI_j}$. To see this, let $V_k$ be an $m \times n_k$ matrix with a basis of $\cV_k$ in its columns. Let $V_{\cI_i}$ be the column-wise concatenation of those $V_k$'s with $k \in \cI_i$. Define similarly $V_{\cI_i}$ and $V_{\cI_i \cup \cI_j}$. Let us view the entries of the $V_k$'s as polynomial variables and consider the polynomial ring $\bbH[V_{\cI_i \cup \cI_j}]$ whose elements are polynomials in the variables $V_{\cI_i \cup \cI_j}$ and coefficients in $\bbH$. Let $\bbH(V_{\cI_i \cup \cI_j})$ be the field of fractions of $\bbH[V_{\cI_i \cup \cI_j}]$, that is every element of $\bbH(V_{\cI_i \cup \cI_j})$ is of the form $f/g$ with $f,g \in \bbH[V_{\cI_i \cup \cI_j}]$ and $g \neq 0$. Then the matrix $[T_\alpha V_{\cI_i} \, \, \, T_\beta V_{\cI_i \cap \cI_j}]$ is an element of $\bbH(V_{\cI_i \cup \cI_j})^{m \times (n_{\cI_i}+n_{\cI_j})}$ and $c$ coincides with its rank over $\bbH(V_{\cI_i \cup \cI_j})$. Moreover, $\sU_{\cI_i\cup \cI_j}$ is defined by the non-simultaneous vanishing of all $c \times c$ determinants of that matrix, which are polynomials in the Pl\"ucker coordinates of the $V_k$'s.  

We claim that for every subspace arrangement $(\cV_k)_{k \in \cI_i \cup \cI_j} \in \sU_{\cI_i\cup \cI_j}$ the subspace $\tau_{\beta}(\cV_{\cI_j \setminus \cI_i})$ does not intersect $\tau_\alpha(\cV_{\cI_i}) + \tau_\beta(\cV_{\cI_i \cap \cI_j})$. To see this, note $\tau_\alpha(\cV_{\cI_i}) + \tau_\beta(\cV_{\cI_j}) = \tau_\alpha(\cV_{\cI_i}) + \tau_\beta(\cV_{\cI_i \cap \cI_j})+ \tau_{\beta}(\cV_{\cI_j \setminus \cI_i})$ and 
\begin{align}
c=\dim(\tau_\alpha(\cV_{\cI_i}) + \tau_\beta(\cV_{\cI_i \cap \cI_j})+ \tau_{\beta}(\cV_{\cI_j \setminus \cI_i}))=&\dim(\tau_\alpha(\cV_{\cI_i}) + \tau_\beta(\cV_{\cI_i \cap \cI_j}))+\dim(\tau_{\beta}(\cV_{\cI_j \setminus \cI_i})) \nonumber \\
&-\dim((\tau_\alpha(\cV_{\cI_i}) + \tau_\beta(\cV_{\cI_i \cap \cI_j}))\cap  \tau_{\beta}(\cV_{\cI_j \setminus \cI_i}))  \nonumber
\end{align} By hypothesis $2d_{\cI_i}\le \rank(\tau_\beta)$ and  $2d_{\cI_j} \le \rank(\tau_\beta)$, thus
$$\dim(\tau_\alpha(\cV_{\cI_i}) + \tau_\beta(\cV_{\cI_i\cap \cI_j}))\le d_{\cI_i}+d_{\cI_i \cap \cI_j} \le \rank(\tau_\beta)- d_{\cI_j \setminus \cI_i}$$ Now, if $\tau_{\beta}(\cV_{\cI_j \setminus \cI_i})$ intersects $\tau_\alpha(\cV_{\cI_i}) + \tau_\beta(\cV_{\cI_i \cap \cI_j})$, there is another arrangement obtained by setting $\cV_k' = \cV_k$ for every $k \in \cI_i$ and replacing the $\cV_k$'s with $k \in \cI_j \setminus \cI_i$ by suitable $\cV_k', \, k\in \cI_j \setminus \cI_i$, such that i) $\dim \tau_{\beta}(\cV'_{\cI_j \setminus \cI_i}) = n_{\cI_j \setminus \cI_i}$ and ii) $\tau_{\beta}(\cV'_{\cI_j \setminus \cI_i})$ does not intersect $\tau_\alpha(\cV_{\cI_i}) + \tau_\beta(\cV_{\cI_i \cap \cI_j})$. Such a replacement is always possible. But then $\dim(\tau_\alpha(\cV_{\cI_i}') + \tau_\beta(\cV_{\cI_j}'))> \dim(\tau_\alpha(\cV_{\cI_i}) + \tau_\beta(\cV_{\cI_j}))$, a contradiction on the maximality of $c$. A similar argument shows that the same property is true if we interchange the roles of $i$ and $j$. In the sequel, we will obtain $\sU_{i,j,\alpha,\beta}$ by intersecting $\sU_{\cI_i\cup \cI_j}$ with several other suitable non-empty open sets. 

By dimension considerations, there is a non-empty open set $\sU_{\cI_i\cup \cI_j}'$ of $\sG_{\cI_i \cup \cI_j}$ such that the $\cV_k$'s are independent subspaces for every subspace arrangement in $\sU_{\cI_i\cup \cI_j}'$, that is $\dim (\cV_{\cI_i\cup \cI_j}) = d_{\cI_i\cup \cI_j} = \sum_{k \in \cI_i \cup \cI_j} d_k$. Hence we have a surjective map $\varphi_i:\sU_{\cI_i\cup \cI_j}' \rightarrow \Gr_{\bbH}(d_{\cI_i},n)$, which sends $(\cV_k)_{k \in \cI_i \cup \cI_j}$ to $\cV_{\cI_i}$. By Theorem \ref{theorem:HS} there is a non-empty open set $\sU_{\cI_i}$ of $\Gr_{\bbH}(d_{\cI_i},n)$ such that $\hsp(\cV,\{\tau_\alpha,\tau_\beta\})$ holds true for every $\cV \in \sU_{\cI_i}$. Similarly, there is a non-empty open set $\sU_{\cI_j}$ of $\Gr_{\bbH}(d_{\cI_j},n)$ such that $\hsp(\cV,\{\tau_\alpha,\tau_\beta\})$ holds true for every $\cV \in \sU_{\cI_j}$. Now, intersect $\sU_{\cI_i\cup \cI_j}$ with $f_i^{-1}(\sU_{\cI_i}) \cap f_j^{-1}(\sU_{\cI_j}) \cap \sU_{\cI_i\cup \cI_j}''$ and call the result again $\sU_{\cI_i\cup \cI_j}$, here $\sU_{\cI_i\cup \cI_j}''$ is the open set where $\cV_{\cI_i}, \cV_{\cI_j}$ do not intersect $\ker(\tau_\alpha), \ker(\tau_\beta)$. 

We now show that $\sU_{\cI_i\cup \cI_j}$ is the required $\sU_{i,j,\alpha,\beta}$. Note that, by the definition of $\sU_{\cI_i\cup \cI_j}$, we only need to consider the case $\tau_\alpha(v_i) = \tau_\beta(v_j)$ with $v_i \in \cV_{\cI_i}$ and $v_j \in \cV_{\cI_j} $. Write $v_j = v_{j\setminus i} + v_{i \cap j}$ where $v_{j\setminus i}  \in \cV_{\cI_j \setminus \cI_i}$ and $v_{i \cap j} \in \cV_{\cI_j \cap \cI_i}$. We have $\tau_\beta(v_{j \setminus i}) = \tau_\alpha(v_i) - \tau_\beta(v_{j \cap i})$. That is, $\tau_\beta(v_{j \setminus i})$ is in the intersection of $\tau_{\beta}(\cV_{\cI_j \setminus \cI_i})$ with $\tau_\alpha(\cV_{\cI_i}) + \tau_\beta(\cV_{\cI_i \cap \cI_j})$. By what we have said above, $\tau_\beta(v_{j \setminus i})=0$. Thus $v_{j \setminus i} \in \ker(\tau_\beta)$ and by the definition of $\sU_{\cI_i\cup \cI_j}$ we further have $v_{j \setminus i}=0$. Hence $v_j \in \cV_{\cI_i}$ and the equation $\tau_\alpha(v_i) = \tau_\beta(v_j)$ implies $v_i = v_j$ by the definition of $\sU_{\cI_i\cup \cI_j}$. \qed

\subsection{Proof of Theorem \ref{theorem:HS-deterministic-noise}}
		We first rewrite \eqref{eq:HS-MLE} into the following convenient form.
		\begin{align*}
			\hat{\tau}&=\argmin_{\tau\in\cT} \min_{v\in \cV}\norm{\overline{y}-\tau (v)}{2}\\
			&=\argmin_{\tau\in\cT}\min_{w\in \tau(\cV)} \norm{\overline{y}-w}{2}^2\\
			&=\argmin_{\tau\in\cT} \min_{w\in \tau(\cV)}\{\norm{w}{2}^2-\inner{\oy}{w}-\inner{w}{\oy} \}\\
			&=\argmin_{\tau\in\cT} \min_{\lambda>0} \min_{w\in \tau(\cV):\norm{w}=\lambda} \{\lambda^2- \inner{\oy}{w}-\inner{w}{\oy} \}\\
			&=\argmin_{\tau\in\cT} \min_{\lambda>0} \{ \lambda^2 - 2\lambda\norm{\overline{y}} \max_{w\in \tau(\cV):\norm{w}=\lambda}  \frac{\inner{\oy}{w}+\inner{w}{\oy}}{2\norm{\overline{y}}\norm{w}} \}\\
			&=\argmin_{\tau\in\cT} \min_{\lambda>0} \{ \lambda^2 - 2\lambda\norm{\overline{y}} \cos(\overline{y},\tau(\cV)) \}\\
			&= \argmax_{\tau\in\cT}  \cos(\overline{y},\tau(\cV)).
		\end{align*}
	We then prove $\htau\in\cT_1$.
	It suffices to show for any $\tau_2\in \cT\setminus \cT_1$ that there is some $\tau_1\in \cT_1$ so that
	\begin{align*}
		\cos(\overline{y},\tau_1(\cV)) > \cos(\overline{y},\tau_2(\cV)),
	\end{align*}
	which surely holds, if the following stronger condition
	\begin{align}\label{eq:tau_stronger_condition}
		\frac{\inner{\oy}{y}+\inner{y}{\oy}}{2\norm{\oy}{2}\norm{y}{2}}> \cos(\oy,\tau_2(\cV))
	\end{align}
	is satisfied. Letting $w_2\in\tau_2(\cV)$ with $\norm{w_2}=1$ be such that $(\inner{\oy}{w_2}+\inner{w_2}{\oy})/\norm{\overline{y}}=\cos(\overline{y},\tau_2(\cV))$ and recalling that $\overline{y}=y+\epsilon$, condition \eqref{eq:tau_stronger_condition} is equivalent to
	\begin{align*}
		\frac{\inner{\oy}{y}+\inner{y}{\oy}}{\norm{\oy}{2}\norm{y}{2}}> \frac{\inner{\oy}{w_2}+\inner{w_2}{\oy}}{\norm{\overline{y}}}
		\Leftrightarrow & \frac{\inner{\oy}{y}+\inner{y}{\oy}}{\norm{y}{2}^2}> \frac{\inner{\oy}{w_2}+\inner{w_2}{\oy}}{\norm{y}}\\
		\Leftrightarrow & 2 > \frac{\inner{y}{w_2}+\inner{w_2}{y}}{\norm{y}{2}} + \frac{\inner{\epsilon}{w_2}+\inner{w_2}{\epsilon}}{\norm{y}} - \frac{\inner{\epsilon}{y}+\inner{y}{\epsilon}}{\norm{y}{2}^2}.\\
		\Leftarrow& 2>2\cos(y,\tau_2(\cV))+\frac{2\norm{\epsilon}}{\norm{y}} + \frac{2\norm{\epsilon}}{\norm{y}}\\
		\Leftrightarrow& \norm{y}(1-\cos(y,\tau_2(\cV)))>2\norm{\epsilon}
	\end{align*}
	which is already fulfilled by \eqref{eq:deterministic_noise_condition}. Hence $\htau\in\cT_1$. So we have $y=\tau^*(v^*)=\htau (v)$ for some $v\in\cV$. This implies $v=v^*$, and thus $y=\htau(v^*)$.
	On the other hand, according to \eqref{eq:HS-MLE}, we have
	\begin{align*}
		\hv&=\argmin_{v\in\cV}\norm{y+\epsilon-\htau(v)}.
	\end{align*}
	Thus, for $\hx\in\bbH^d$ and $x^*\in\bbH^d$ satisfying $\hv=V\hx$ and $v^*=Vx^*$, we get that 
	\begin{align*}
		\hx&=\argmin_{x\in\bbH^d} \norm{y+\epsilon-\hT Vx}=(\hT V)^\dagger(y+\epsilon),
	\end{align*}
	where we used the fact that $\hT V$ is necessarily of full column rank.
	Recalling $y=\htau(v^*)=\hT V x^*$, we obtain
	\begin{align*}
		\hx =(\hT V)^\dagger(\hT Vx^*+\epsilon)
		=x^*+(\hT V)^\dagger\epsilon,
	\end{align*}
	and consequently $\hv=v^*+V(\hT V)^\dagger\epsilon$. \qed

\subsection{Proof of Theorem \ref{corollary:HS-R^n}}

Part i) is a special case of ii) and we prove the latter. Applying Theorem \ref{theorem:HS} with $n$ set to $m$, $m$ set to $r$, and $d$ set to $n$, and in view of Lemma \ref{lemma:U_dimensionbound}, we get a non-empty open set $\sU$ of $\Gr_\bbR(n,m)$ such that $\hsp(\cV,\cS_{r,m})$ holds true for every $\cV \in \sU$. Now let $\sV$ be the non-empty open set of $\bbR^{m\times n}$ consisting of full-rank matrices. There is a surjective polynomial map $f: \sV \rightarrow \Gr_\bbR(n,m)$, defined in the same way as the Pl\"ucker embedding, which sends $A \in \sV$ to its column space $\operatorname{R}(A)$. Now $f^{-1}(\sU)$ is a non-empty open set of $\bbR^{m\times n}$ such that for every $A \in f^{-1}(\sU)$ we have $\hsp(\operatorname{R}(A),\cS_{r,m})$. Since every $A \in f^{-1}(\sU)$ is of full column rank, we also have $\hsp(\bbR^n, \{A\})$. Parts iii) and iv) follow from  Proposition \ref{prop:HS-sign} and Lemma  \ref{lemma:U_dimensionbound} in a similar fashion.

\subsection{Proof of Theorem \ref{corollary:HS-k-sparse}}

We only prove ii), which implies i). Parts iii) and iv) follow similarly. With $r\geq 2k$, Lemma \ref{lemma:U_dimensionbound} gives $\dim(\cU_{S,S'})\leq m-k$ for any rank-$r$ selections $S,S'\in\cS_{r,m}$. Let $s=\binom{n}{k}$ and let $\rI=(\cI_1,\dots,\cI_s)$ be the set of all subsets of $[n]$ of cardinality $k$, say, ordered in the lexicographic order. Then Theorem \ref{theorem:HS-SA}, applied with $n$ set to $m$, $m$ set to $r$, and $d$ set to $1$, gives a non-empty open set $\sU$ of $\prod_{j\in[n]}\Gr_\bbR(1,m)$, such that for any $\cA=(\cV_1,\dots,\cV_n)\in\sU$, the property $\hsp(\ocAI,\cS_{r,m})$ holds true. Let $\sV$ be the open set of $\bbR^{m\times n}$ on which for every $A \in \sV$ and every $j \in [n]$ the $j$-th column of $A$ is non-zero. We have a surjective map $f: \sV \rightarrow \prod_{j\in[n]}\Gr_{\bbR}(1,m)$ which sends $A=[a_1 \cdots a_n]$ to the subspace arrangement $(\Span(a_1),\dots,\Span(a_n))$. Let $\sV'$ be the set of $A \in \bbR^{m\times n}$ for which any $\min\{n,2k\}$ distinct columns of $A$ are linearly independent. Then $\sV'' = f^{-1}(\sU) \cap \sV'$ is a non-empty open set of $\bbR^{m\times n}$.
	
We show that $\hsp(\ocKI,\cS_{r,m}A)$ holds true for every $A \in \sV''$. Let us view $A$ as the linear map $\tau_A: \bbR^n \rightarrow \bbR^m$ defined by $\tau_A(x)=Ax$. By the definition of $\sV''$ we have that $\hsp(\tau_A(\ocKI),\cS_{r,m})$ holds true. That is, for any $k$-sparse vectors $x, x'\in\ocKI$ and $S,S'\in\cS_{r,m}$ satisfying $SAx=S'Ax'$, we have $Ax=Ax'$. But $A(x-x')=0$ is a linear dependence relation involving at most $\min \{n,2k\}$ columns of $A$ and thus again by the definition of $\sV''$ we must have $x=x'$. \qed

\section{Appendix}\label{section:Appendix}

\subsection{Proof of Lemma \ref{lem:flag}} \label{proof:lem:flag}

We assume familiarity with basic topological considerations in algebraic geometry on the level of schemes, e.g. see \cite{Vakil-AG}. We first treat the case $\bbH = \bbC$, where classical arguments suffice. By Chevalley's theorem \cite{Harris-AG} $\phi(\sU)$ is constructible, that is $\phi(\sU) = \cup_{\nu} \sY_\nu \cap \sU_\nu$ where the $\sY_\nu$'s are closed in $ \Gr_{\bbC}(d,n)$, the $\sU_\nu$'s are open in $ \Gr_{\bbC}(d,n)$, and $\nu$ takes finitely many values. If $\phi(\sU)$ does not contain any non-empty open set, then necessarily it is contained in the proper closed subset $\sY = \cup_{\nu} \sY_\nu$. The complement of $\sY$ is a non-empty open subset of $\Gr_{\bbC}(d,n)$ which does not intersect $\phi(\sU)$, and thus its inverse image under $\phi$ is also a non-empty open subset of $\F_{\bbC}(d_0,d,n)$ not intersecting $\sU$. This implies that $\F_{\bbC}(d_0,d,n)$ can be written as a union of two proper closed sets. This is a contradiction because $\F_{\bbC}(d_0,d,n)$ is irreducible. 

Next, we treat the case $\bbH = \bbR$. Then the arguments in the previous paragraph apply without change providing we treat $\phi$ as a morphism of finite type of Noetherian schemes over $\bbR$; see \cite{Vakil-AG,hartshorne1977algebraic,Eisenbud-2004}. Thus we write $\overline{\phi}: \overline{\F}_{\bbR}(d_0,d,n) \rightarrow \overline{\Gr}_{\bbR}(d,n)$, where the overline indicates the scheme structure. By the Jacobson property, the restriction of $\overline{\phi}$ on the $\bbR$-valued points is just $\phi$. The polynomials that define $\sU$ also define a corresponding scheme $\overline{\sU} \subset \overline{\F}_{\bbR}(d_0,d,n)$, and the above arguments applied to $\overline{\phi}$ show that $\overline{\phi}(\overline{\sU})$ contains a non-empty open subscheme $\overline{\sV}$ of $\overline{\Gr}_{\bbR}(d,n)$. Now $\overline{\Gr}_{\bbR}(d,n)$ is locally isomorphic to the affine space $\bbA^{d(n-d)}=\operatorname{Spec} (\bbR[Z])$, where $Z$ is an $d \times (n-d)$ matrix of variables $z_{ij}$ and $\bbR[Z]$ is the polynomial ring in the $z_{ij}$'s with coefficients over $\bbR$. So let $\overline{\sV'}$ be an open subscheme of $\overline{\Gr}_{\bbR}(d,n)$ isomorphic to $\bbA^{d(n-d)}$. Then $\overline{\sV''} = \overline{\sV'} \cap \overline{\sV}$ is also open in $\overline{\Gr}_{\bbR}(d,n)$ 
and in fact non-empty because $\overline{\Gr}_{\bbR}(d,n)$ is irreducible. Under the isomorphism $\overline{\sV'} \cong \bbA^{d(n-d)}$ we view $\overline{\sV''}$ as a non-empty open subscheme of $\bbA^{d(n-d)}$. Now $\overline{\cV''}$ can be written as $\bigcup_{p} \operatorname{Spec} (k[Z])_p$, with $p \in k[Z]$ and $(k[Z])_p$ the localization of $k[Z]$ at the multiplicatively closed set $\{1,p,p^2,\dots\}$. Since $\overline{\sV''}$ is non-empty, not all $p$'s are zero. Hence there is some non-zero $p$ for which $\overline{\sV'''}=\operatorname{Spec} (k[Z])_p$ is a non-empty open subscheme of $\bbA^{d(n-d)}$. Let $\sU'$ be the open set of points in $\bbR^{d(n-d)}$ which are not roots of $p$. Since $\bbR$ is infinite, $\sU'$ is non-empty. Finally, $\sU'$ lies in the image of $\phi$. \qed

\subsection{Proof of Lemma \ref{lemma:homogeneous}} \label{proof:lemma:homogeneous}

We assume familiarity with basic dimension theory in commutative algebra and algebraic geometry, for example see \cite{Vakil-AG} and \cite{Eisenbud-2004} respectively. Let $\frakR:=\bbC[w_1,\dots,w_n]$ be a polynomial ring associated with $\bbC^n$ and let $J$ and $I$ be the vanishing ideals of $\cZ$ and $\cY$, respectively. Since $\cZ \neq \{0\}$ we have that $J$ is properly contained in the ideal $(w_1,\dots,w_n)$ generated by the $w_i$'s. Let $\cU$ = $\cY\backslash \cZ$. Then the vanishing ideal of the closure $\cU^{\cl}$ of $\cU$ is $\fraka:=I:J^{\infty}$, where $\fraka$ is the saturation of $I$ with respect to $J$. Hence we have $\dim(\frakR/\fraka)=\dim(\cU^{\cl})=\dim(\cU)>n-d$. Since $I,J$ are homogeneous so is $\fraka$. Then for $n-d$ generic linear forms $\ell_1,\dots,\ell_{n-d}$ of $\frakR$ we have
		\begin{align}\label{eq:dim_R/a}
			\dim(\frakR/\fraka+(\ell_1,\dots,\ell_{n-d}))=\dim(\frakR/\fraka)-(n-d)>0.
		\end{align} Geometrically, this means that the generic linear subspace $\cV$ defined as the common vanishing locus of the linear forms $\ell_1,\dots,\ell_{n-d}$ intersects $\cU^{\cl}$ at positive dimension, that is $\cV \cap \cU^{\cl} \supsetneq \{0\}$. 
		If $\cU=\cU^{\cl}$ we are done, so assume that $\cX:=\cU^{\cl} \backslash \cU$ is not empty. Since $\cU$ is open in $\cY$, we have that $\cX$ is closed in $\cU^{\cl}$. Suppose that $\dim(\cX)=\dim(\cU^{\cl})$. Let $\cX'$ be a maximal irreducible closed subset in $\cX$. Then necessarily $\cX'$ is an irreducible component of $\cU^{\cl}$. But $\cX' \cap \cU = \varnothing$, which contradicts the fact that $\cU^{\cl}$ is the smallest closed set that contain $\cU$. We conclude that $\dim(\cX)<\dim(\cU^{\cl})$. With $\frakb$ the vanishing ideal of $\cX$, we have $\dim(\frakR/\fraka)>\dim(\frakR/\frakb)$. Let us show that $\frakb$ is homogeneous. For any $z\in \cX\subset \cU^{\cl} \subset \cY$, we have $\lambda z\in \cU^{\cl}$ for any $\lambda\in\bbC$. Assume for the sake of contradiction that $\lambda' z\in \cU$ for some $\lambda' \in\bbC$. Note that $\lambda'$ can not be zero because $0 \in \cZ$. But $z\notin \cU$ implies $z\in \cZ$ and so $\lambda' z\in \cZ$, a contradiction. Taking again quotient by $n-d$ generic linear forms we have 
		\begin{align}\label{eq:dim_R/b}
			\dim(\frakR/\frakb+(\ell_1,\dots,\ell_{n-d}))=\max\{\dim(\frakR/\frakb)-(n-d), 0\}.
		\end{align}
		Combining \eqref{eq:dim_R/a}, \eqref{eq:dim_R/b} with $\dim(\frakR/\fraka)>\dim(\frakR/\frakb)$ we get
		\begin{align*}
			\dim(\frakR/\fraka+(\ell_1,\dots,\ell_{n-d}))>\dim(\frakR/\frakb+(\ell_1,\dots,\ell_{n-d})).
		\end{align*}
This implies $\dim(\cU^{\cl} \cap \cV)>\dim(\cX\cap\cV)$ for a generic $\cV\in\Gr_{\bbC}(d,n)$. Thus $\cV$ necessarily intersects $\cU$. \qed 	
	
\section*{Acknowledgments}
This work was funded by ShanghaiTech start-up grant 2017F0203-000-16. The authors thank Boshi Wang for useful discussions.

\end{document}